\documentclass{article}

\usepackage{PRIMEarxiv}

\usepackage[utf8]{inputenc}
\usepackage[T1]{fontenc}

\usepackage{amsfonts}
\usepackage{amsthm}
\usepackage{amsmath}
\usepackage{nicefrac}

\usepackage{graphicx}
\usepackage[table]{xcolor}
\usepackage{booktabs}
\usepackage{multirow}

\usepackage{algorithm}
\usepackage{algorithmic}

\usepackage{microtype}

\usepackage{url}
\usepackage{hyperref}

\usepackage{fancyhdr}

\newtheorem{theorem}{Theorem}
\newtheorem{lemma}[theorem]{Lemma}

\newtheorem{assumption}{Assumption}

\newtheorem{remark}{Remark}

\newcommand{\accup}{\textcolor{green!60!black}{$\uparrow$}}
\newcommand{\commdown}{\textcolor{red}{$\downarrow$}}

\title{Latent Information Sharing for Accelerating Federated Learning}

\author{
  Seungjun Lee \\
  Inha University \\
  \texttt{seungjun.lee@inha.edu} \\
  \And
  Ensieh Khazaei \\
  University of Toronto \\
  \texttt{ensieh.khazaei@mail.utoronto.ca} \\
  \And
  Dimitrios Hatzinakos \\
  University of Toronto \\
  \texttt{dimitris@comm.utoronto.ca} \\
  \And
  Baturalp Buyukates \\
  University of Birmingham \\
  \texttt{b.buyukates@bham.ac.uk} \\
  \And
  Sunwoo Lee\thanks{Corresponding author} \\
  Inha University \\
  \texttt{sunwool@inha.ac.kr}
}

\begin{document}
\maketitle

\begin{abstract}
Federated learning (FL) is a communication-efficient distributed learning paradigm.
However, client drift remains one of the most critical challenges, hindering the efficient training of a global model.
In this study, we propose a novel latent information sharing scheme that directly mitigates data heterogeneity across clients.
Our theoretical and empirical results show that sharing a small amount of hidden-layer activations significantly improves training efficiency while preserving convergence guarantees and data privacy.
Furthermore, we compare our method with existing FL approaches designed to address client drift, including FedProx, SCAFFOLD, FedPVR, FedProto, and SplitFed, and demonstrate superior model accuracy under a fixed round budget without incurring excessive communication overhead.
Overall, this work presents a promising new knowledge aggregation scheme and provides a comprehensive analysis of the impact of activation sharing on federated optimization.
\end{abstract}

\section {Introduction} \label{sec:1}

While federated learning (FL) enables privacy-preserving and communication-efficient aggregation across multiple data holders, referred to as clients, its training efficiency largely depends on the degree of data heterogeneity among local datasets.
Severe data heterogeneity leads to client drift, one of the most well-known challenges in FL, which significantly hinders convergence.
To enable practical large-scale FL applications, it is crucial to address client drift without sacrificing the benefits of communication-efficient distributed learning.

To mitigate data heterogeneity, prior works such as FedProx~\cite{li2020federated} and FedDyn~\cite{durmus2021federated} introduce proximal regularization to stabilize local updates.
While these regularization approaches alleviate client drift to some extent, they do not fundamentally resolve discrepancies across clients, as they operate independently without explicit coordination.
Other studies address data heterogeneity by reducing the stochastic variance of gradients, such as SCAFFOLD~\cite{karimireddy2020scaffold}, FedVARP~\cite{jhunjhunwala2022fedvarp}, and FedMDVR~\cite{zhang2025federated}.
Although these methods effectively alleviate client drift, they often incur substantial memory and communication overhead, limiting their practicality in large-scale deployments.
FedLAMA~\cite{lee2023layer} proposes a structural approach based on layer-wise adaptive aggregation to implicitly mitigate client discrepancy.
However, it still relies on parameter-space coordination and does not explicitly align intermediate representations across clients.
These limitations motivate exploring alternative forms of collaboration beyond parameter-space coordination.

In this study, we propose \textit{Activation Sharing Federated Learning} (ASFL), a latent information sharing approach that tackles client drift from a novel representation-based perspective. Building upon the conventional model aggregation scheme, our approach additionally exchanges intermediate activations from a critical hidden layer.
By directly mitigating data heterogeneity through this exchange, our method accelerates convergence, thereby achieving higher model accuracy within a fixed round budget.
Specifically, our study addresses two key questions: (i) which layer generates the most informative latent representations for exchange and (ii) how can we effectively leverage remote latent information without incurring prohibitive memory / communication overhead?
Furthermore, we provide a theoretical analysis of the impact of activation sharing on the convergence properties of FedAvg for non-convex and smooth objective functions.
To bridge the gap between theory and practice, we also discuss how to exploit activation sharing with minimal additional communication cost by reusing remote activations multiple times.

Our extensive empirical study substantiates the practicality of activation sharing, demonstrating its superiority not only in convergence speed but also in system efficiency.
On representative benchmarks, including CIFAR-10/100~\cite{krizhevsky2009learning}, FEMNIST~\cite{caldas2018leaf}, and SVHN~\cite{netzer2011reading}, under non-IID settings, ASFL consistently achieves higher accuracy than state-of-the-art FL methods, including FedProx~\cite{li2020federated}, SCAFFOLD~\cite{karimireddy2020scaffold}, FedPVR~\cite{li2023effectiveness}, FedProto~\cite{tan2022fedproto}, FedImpro~\cite{tang2024fedimpro}, and SplitFed~\cite{thapa2022splitfed}.
These results indicate that ASFL effectively mitigates client drift by allowing each client to learn from the latent representations of other clients, thereby achieving higher accuracy within a fixed round budget.
In addition, our wall-clock time analysis shows that ASFL reduces the per-round training time by up to $33\%$ compared to the other FL methods under the same number of local steps per round.
Finally, we empirically show that, under model inversion attacks, ASFL exhibits privacy leakage comparable to that of existing representation-sharing methods such as SplitFed.

\section {Background} \label{sec:2}

\textbf{Federated optimization} --
We consider federated minimization problems defined as follows:
\begin{align}
    \underset{\mathbf{x} \in \mathbb{R}^d}{\min}\left[F(\mathbf{x}) := \frac{1}{m} \sum_{i=1}^{m} F_i(\mathbf{x}) \right] \label{eq:objective},
\end{align}
where $m$ is the number of local models and $F_i(\mathbf{x}) = \mathop{\mathbb{E}}_{\xi_i \sim D_i}[ F_i(\mathbf{x}, \xi_i)]$
is the local objective function associated with local data distribution $D_i$.
FedAvg \cite{mcmahan2017communication} is a pioneering distributed learning method designed for federated optimization, utilizing local SGD with periodic model aggregations.
It has been widely adopted to train global models without explicitly sharing decentralized data.
However, it suffers from slow convergence when data heterogeneity is high.

\textbf{Client drift} -- 
Client drift is a well-known phenomenon in FL. 
In federated optimization, each client performs local updates using only its own data, which are often drawn from heterogeneous (non-IID) distributions. 
As a result, local updates become biased with respect to the global objective, leading to a mismatch between the expected local gradient and the true global gradient:
\begin{align}
    \mathbb{E}[g_t^i] = \nabla F_i(\mathbf{x}_t^i) \neq \nabla F(\mathbf{x}_t^i),
\end{align}
where $g_t^i$, $\nabla F_i(\mathbf{x}_t^i)$, and $\nabla F(\mathbf{x}_t^i)$ denote the stochastic gradient, the local full-batch gradient, and the global full-batch gradient, respectively.
This deviation becomes more pronounced as the degree of data heterogeneity increases or the number of local update steps grows, ultimately slowing down the convergence of the global model.
To mitigate this issue, a variety of approaches have been proposed in the literature such as FedProx, SCAFFOLD, and SILVER~\cite{oko2024silver}.
\section {Related work} \label{sec:3}

\textbf{Variance reduction methods} --
Recent federated optimization methods mitigate client drift using variance reduction techniques, including FedProx, SCAFFOLD, FedPVR, FedVARP, and SILVER. 
Although effective, these methods typically introduce additional memory and communication overhead to maintain auxiliary states such as control or tracking variables, which becomes increasingly burdensome for large-scale models. 
FedProto~\cite{tan2022fedproto} shares class-wise prototypes computed by averaging embeddings within each class and regularizes local training toward the aggregated global prototypes to reduce cross-client discrepancies in class representations.
MOON~\cite{li2021model} reduces client drift using a model-contrastive objective, at the cost of additional memory and computation overhead for maintaining auxiliary models and computing contrastive losses. 
In this work, we compare ASFL with these representative variance-reduced FL methods in terms of both model accuracy under a fixed round budget and communication overhead.

\textbf{Split learning methods} --
Split Federated Learning (SFL) integrates Split Learning (SL)~\cite{vepakomma2018split} with Federated Learning (FL) to alleviate the computational burden on the client side.
The most prominent difference between SFL and conventional FL is that SFL transmits intermediate activations from a specific `cut layer' to a central server.
The server then executes the forward and backward passes for the remaining layers.
This split training mechanism not only reduces client-side computational overhead but also mitigates client drift, as the server-side sub-model is updated using aggregated knowledge from all participating clients.
While SplitFed aggregates latent representations on the server and trains a subset of network layers there, our proposed method requires neither data centralization across clients nor server-side computation.
We will also compare our proposed activation sharing method with SplitFed in terms of model accuracy and system efficiency.

\section {Method} \label{sec:4}

\begin{figure}[t]
\begin{center}
\includegraphics[width=\columnwidth]{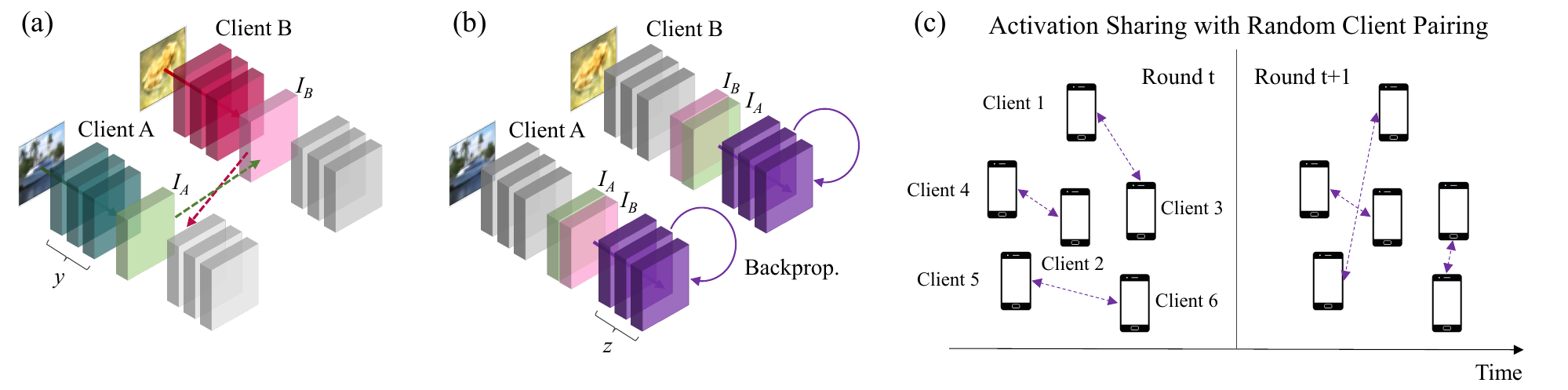}
\end{center}
\vspace{-0.2cm}
\caption{
    A schematic illustration of latent information sharing. Figure (a) shows the \textit{Latent Information Sharing} step. The activation outputs from $\mathbf{y}$, $I_A$ and $I_B$, are exchanged between the two clients. Figure (b) shows the \textit{Tail Round} in which the two clients run backpropagation only within $\mathbf{z}$ using both $I_A$ and $I_B$. Figure (c) shows the random client pairing-based activation sharing.
}
\label{fig:schematic}
\vspace{-0.4cm}
\end{figure}

\subsection {Latent information sharing}
In this section, we describe how latent information is shared across clients via two steps, \textit{Activation Sharing} and \textit{Tail Round}.
Figure~\ref{fig:schematic} provides a schematic illustration of this process.

\textbf{Activation sharing} -- We define \textit{Latent Information} as the activation output of a specific hidden layer in a neural network.
This information inherently captures local data patterns.
Therefore, we expect that sharing it across clients will alleviate data heterogeneity.
First, a given neural network $x$ is partitioned into two consecutive groups of layers, denoted as $\mathbf{x} := \{\mathbf{y}, \mathbf{z}\}$, where $\mathbf{y}$ consists of the input-side layers and $\mathbf{z}$ consists of the remaining output-side layers.

Then, we define the latent information $I$ as $\tau' / 2$ activation output batches from $y$:
\begin{align}
    I_i := \left\{ \mathbf{y}(\mathcal{B}_{i,t}) \right\}_{t=1}^{\tau' / 2},
\end{align}
where $\tau'$ is the number of local steps with remote activations, $\{\mathcal{B}_{i,t}\}_{t=1}^{\tau' / 2}$ denotes $\tau' / 2$ mini-batches randomly sampled from the local dataset of client $i$.
Once $I_i$ is ready, randomly paired clients exchange their $I_i$ and corresponding labels through inter-process communications.
We call this communication \textit{Activation Sharing}.

\noindent \textbf{Tail round} -- 
Once client $i$ receives $I_j$ ($i \neq j$), it performs local training using the combined set $I_i \cup I_j$, updating only the subnetwork $\mathbf{z}$. 
Specifically, client $i$ trains $\mathbf{z}$ for $\tau'$ iterations using mini-batches randomly sampled from $I_i \cup I_j$. 
Since $I$ consists of the activation outputs from the last layer of $\mathbf{y}$, both $I_i$ and $I_j$ can be directly fed into $\mathbf{z}$. 
Backpropagation is restricted to the layers within $\mathbf{z}$. 
We refer to these steps as a \textit{tail round}.

Algorithm~\ref{alg:LIS} presents the pseudocode for the \textit{Latent Information Sharing} method. 
Each client $i$ first computes $I_i$ using its local data and shares it with a randomly selected client $j$ (Line~3, \textit{activation sharing}). 
Subsequently, each client $i$ trains $\mathbf{z}$ for $\tau'$ iterations using activations sampled from $I_i \cup I_j$ (Lines~5--6, \textit{tail round}).

Algorithm~\ref{alg:FedAvg} outlines FedAvg integrated with the proposed \textit{Latent Information Sharing} method. 
At the beginning of each communication round, the model is first updated via Algorithm~\ref{alg:LIS} (Line~4). 
Consequently, each round consists of $\tau'$ tail update steps followed by $\tau$ full update steps.

\begin{algorithm}[h]
\caption{Activation Sharing}
\label{alg:LIS}
\begin{algorithmic}[1]
\small
\REQUIRE $\tau'$: the total number of local update steps within a tail round.
\STATE {Partition $x$ into $y$ and $z$.}
\STATE {Each client $i \in [m]$ calculates $I_i = {\mathbf{y}(\mathcal{B}_{i,t})}_{t=1}^{\tau'/2}$.}
\STATE {$j \leftarrow $ a random client, $j \neq i$.}
\STATE {Each client $i$ sends $I_i$ to and receives $I_j$ from client $j$.}
\FOR{$t \to \tau'$}
    \STATE {$\mathcal{\hat{B}} \leftarrow \textrm{Random}_b(I_i \cup I_j)$.}
    \STATE {$\mathbf{z}_{i,t+1} = \mathbf{z}_{i,t} - \eta \nabla f(\mathbf{z}, \mathcal{\hat{B}})$}
\ENDFOR
\STATE {Return $\mathbf{x} = \{\mathbf{y}, \mathbf{z}_{i, \tau'}\}$.}
\end{algorithmic}
\end{algorithm}

\begin{algorithm}[t]
\caption{FedAvg with Activation Sharing}
\label{alg:FedAvg}
\begin{algorithmic}[1]
\small
\REQUIRE $\tau$: the total number of local updates within a communication round, $\tau'$: the total number of local updates within a tail round, $R$: the total number of communication rounds, $\mathcal{N}$: the total list of clients, $a$: the number of active clients per round.
\FOR{$r \to R$}
    \STATE {$\mathcal{A} \leftarrow \textrm{Random}_a(\mathcal{N})$}
    \STATE {Server sends global model $\mathbf{x}$ to clients in $\mathcal{A}$.}
    \STATE {$\mathbf{x}_{i,0} \leftarrow \textrm{Latent Information Sharing}(\mathbf{x})$ (Algorithm~\ref{alg:LIS})}
    \FOR{$k \to \tau$}
        \STATE {Each client $i \in \mathcal{A}$ gets a mini-batch $\mathcal{B}_{i,k}$ from its local dataset.}
        \STATE {$\mathbf{x}_{i,k+1} = \mathbf{x}_{i,k} - \eta \nabla f(\mathbf{x}_{i,k}, \mathcal{B}_{i,k})$}
    \ENDFOR
    \STATE {Server aggregates the local models $\mathbf{x}_{i,\tau}, i \in \mathcal{A}$ to $\mathbf{x}$.}
\ENDFOR
\STATE {Return $\mathbf{x}$}
\end{algorithmic}
\end{algorithm}

\subsection{Controlling the degree of information sharing}
\textit{Activation Sharing} is governed by two critical factors that determine its impact on local training: 
(i) which layer's latent information is shared, and 
(ii) the volume of activation outputs being shared.

\subsubsection{Selecting the sharing layer via cost-variance trade-off}\label{sec:tradeoff}
The tail round exposes the sub-model $\mathbf{z}$ to latent information from other clients. 
These remote activations can be utilized to update $\mathbf{z}$, thereby reducing the variance of stochastic gradients. 
Sharing activations at earlier layers allows a larger portion of the model to benefit from this variance reduction.
However, this incurs higher computational overhead, as backpropagation must traverse more layers during the tail rounds. 
To address this trade-off, we propose a practical layer selection scheme that balances variance reduction and system efficiency.

\textbf{Step 1. System efficiency curve} -- 
Let $k$ denote the partition index that splits the model into two sub-models: $\mathbf{y}$, comprising layers $1$ to $k-1$, and $\mathbf{z}$, comprising layers $k$ to $L$ (where $L$ is the total number of layers). 
The computational cost of the backward pass through layer $i$ is defined as:
\begin{align}
    \mathcal{C}_{\text{comp}}(i) &:= N_{\text{out}}^i \cdot N_{\text{conn}}^i, \nonumber \\
    \hat{\mathcal{C}}_{\text{comp}}(k) &= \sum_{i=k}^{L} \mathcal{C}_{\text{comp}}(i), \nonumber
\end{align}
where $N_{\text{out}}^i$ is the number of output activations at layer $i$, and $N_{\text{conn}}^i$ is the number of input connections per output unit. 
Since backpropagation proceeds from layer $L$ down to $k$, the total cost is accumulated over the layers in $\mathbf{z}$.

The additional communication cost of activation sharing is quantified as:
\begin{align}
    \mathcal{C}_{\text{comm}}(k) := N_{\text{out}}^{k-1},
\end{align}
where $N_{\text{out}}^{k-1}$ denotes the number of activations at the boundary layer $k-1$. 
To integrate these measures, we normalize them separately and compute their product to represent the total system overhead for a given $k$:
\begin{align}
    \mathcal{C}(k) = \tilde{\mathcal{C}}_{\text{comp}}(k) \cdot \tilde{\mathcal{C}}_{\text{comm}}(k), \label{eq:ck}
\end{align}
where $\tilde{\mathcal{C}}_{\text{comp}}(k)$ and $\tilde{\mathcal{C}}_{\text{comm}}(k)$ are the normalized computation and communication costs, respectively. 
Notably, $\mathcal{C}(k)$ monotonically decreases as $k$ increases from $1$ to $L$.

\textbf{Step 2. Variance reduction curve} -- 
We quantify the potential benefit of variance reduction by varying $k$ as follows:
\begin{align}
\mathcal{V}(k) := \sum_{i=1}^{k-1} \mathrm{Var}(g^i), \label{eq:vk}
\end{align}
where $g^i$ denotes the mini-batch gradient of the loss function with respect to the parameters of layer $i$. 
In practice, we approximate the true gradient variance using the empirical variance of gradients within each mini-batch.

\textbf{Step 3. Finding the intersection} -- 
Finally, we identify the optimal layer $k$ where the normalized $\mathcal{C}(k)$ and $\mathcal{V}(k)$ curves intersect. 
This ``sweet spot" represents a configuration that maximizes the number of layers benefiting from variance reduction while maintaining overall system efficiency. 
Figure~\ref{fig:cv} illustrates these $\mathcal{C}(\cdot)$ and $\mathcal{V}(\cdot)$ curves across four popular machine learning benchmarks.

\begin{figure}[t]
\begin{center}
\includegraphics[width=1\columnwidth]{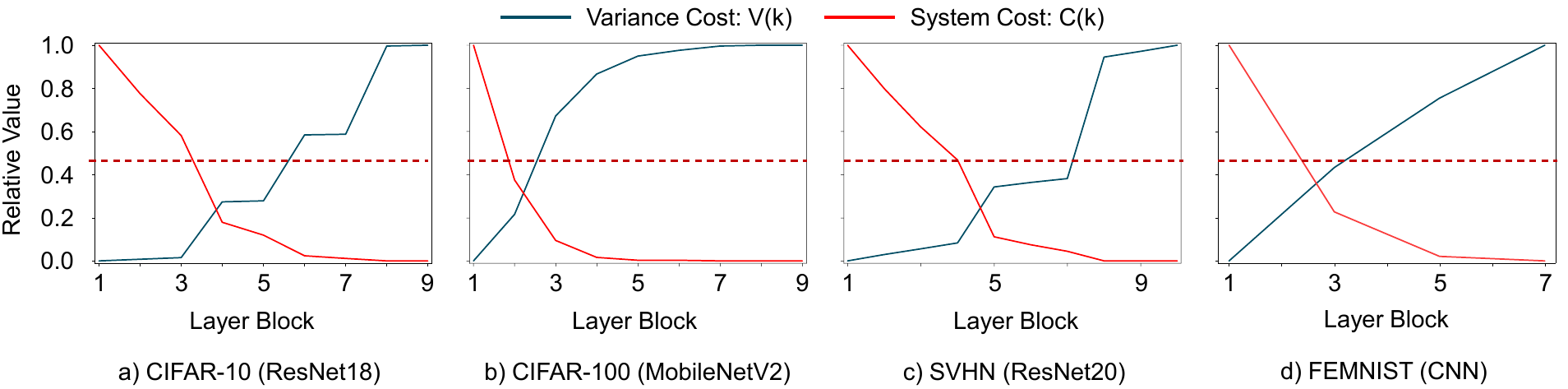}
\vspace{-0.5cm}
\end{center}
\caption{
    Layer selection results. The green curve represents the cumulative variance of the input-side sub-model y, computed using Eq.~\ref{eq:vk}, while the red curve represents the system efficiency computed using Eq.~\ref{eq:ck}. The red horizontal dashed line indicates the midpoint of the efficiency range. Across various benchmarks, the intersection of the two curves indicates a balanced trade-off between system efficiency and variance reduction.
}
\label{fig:cv}
\vspace{-0.2cm}
\end{figure}

\subsubsection{Communication-efficient activation sharing} \label{sec:comm-efficient-sharing}
Depending on the input dimensionality and layer connectivity, intermediate activations can incur substantial communication overhead.
To mitigate this cost, we propose a practical scheme that exchanges a minimal subset of activations and reuses them across multiple iterations within each communication round.

In Algorithm~\ref{alg:LIS}, the size of $I_j$ is $B\cdot \tau'/2$, where $B$ is the mini-batch size. Then, the tail round updates $\mathbf{z}$ using random mini-batches drawn from $I_i \cup I_j$ (lines 6--7).
Instead, we exchange only a small subset of activations (with $|I_i| > |I_j|$) and construct an augmented activation pool $\mathcal{P}$ by randomly sampling $B\cdot \tau' /2$ samples from $I_j$ with replacement such that:
\begin{equation}
    \mathcal{P} = I_i \cup \left( \bigcup^{B.\tau' / 2} \xi \sim \mathrm{Uniform}(I_j) \right),
\end{equation}
where $\xi$ denotes the activations randomly drawn from $I_j$.
By sampling mini-batches from $\mathcal{P}$, we effectively oversample $I_j$, allowing its activations to be reused multiple times within each communication round, thereby improving training efficiency.
In addition, we sample $\xi$ for $B \cdot \tau' / 2$ times so that the contributions of local and remote latent information become balanced within the augmented pool.
This balanced representation enables the model to effectively incorporate newly received remote knowledge without being dominated by the larger local dataset.
We conduct an ablation study to evaluate the impact of this activation sharing strategy in terms of both system efficiency and model accuracy.

\subsection {Convergence analysis} \label{sec:theory}
We show that ASFL guarantees convergence for non-convex and smooth optimization problems:
\begin{equation}
    \min_{\mathbf{x} \in \mathbb{R}^d} F(\mathbf{x}) := \frac{1}{m} \sum_{i=1}^m F_i(\mathbf{x}), \nonumber
\end{equation}
where $m$ denotes the total number of clients, and $F_i(\mathbf{x}) := \mathbb{E}_{\xi_i \sim \mathcal{D}_i}[f(\mathbf{x}; \xi_i)]$ represents the local objective function associated with the data distribution $\mathcal{D}_i$ of client $i$. Here, $f(\mathbf{x}; \xi_i)$ is the loss function evaluated on a sample $\xi_i$ drawn from $\mathcal{D}_i$.
In each communication round, the fraction of remote data in the total data consumed is denoted by $\alpha$.

Our analysis relies on the following assumptions, which hold for all $i \in [m]$.

\begin{assumption}[Lipschitz continuity]\label{asm:smooth}
$\|\nabla F_i(\mathbf{a}) - \nabla F_i(\mathbf{b})\| \le \mathcal{L}\|\mathbf{a} - \mathbf{b}\|$ for some $\mathcal{L} > 0$ and all $\mathbf{a}, \mathbf{b} \in \mathbb{R}^d$.
\end{assumption}
\begin{assumption}[Unbiased local gradients]\label{asm:unbiased}
$\mathbb{E}_{\xi_i \sim D_i}[\nabla f(\mathbf{x}, \xi_i)] = \nabla F_i(\mathbf{x})$.
\end{assumption}
\begin{assumption}[Bounded local and global variances]\label{asm:var}
$\mathbb{E}_{\xi_i \sim D_i}\|\nabla f(\mathbf{x}, \xi_i) - \nabla F_i(\mathbf{x})\|^2 \le \sigma_x^2$ and $\mathbb{E}\|\nabla F_i(\mathbf{x}) - \nabla F(\mathbf{x})\|^2 \le \sigma_G^2$ for some $\sigma_x, \sigma_G > 0$.
\end{assumption}
\begin{assumption}[Bounded local variance w.r.t.\ a partial model]\label{asm:var_z}
$\mathbb{E}_{\xi_i \sim D_i}\|\nabla f(\mathbf{z}, \xi_i) - \nabla F_i(\mathbf{z})\|^2 \le \sigma_z^2$ for some $\sigma_z > 0$.
\end{assumption}

Then, we present our convergence analysis results.
See the Appendix for proofs.
\begin{lemma}[framework]\label{lemma:lemma2_asfl}
Under assumption 1 $\sim$ 4, if the learning rate $\eta \leq \min \{\frac{1}{\tau\mathcal{L}}, \frac{1}{4\tau'\mathcal{L}}\}$, the loss reduction of a single communication round is bounded as follows.
\begin{align}
    \sum_{t=0}^{T-1} \mathbb{E}\left[ \left\| \nabla F(\mathbf{x}_t) \right\|^2 \right] & \leq \frac{2}{\eta \tau}\left( F(\mathbf{x}_0) - \mathbb{E}[F(\mathbf{x}_{T})] \right) \nonumber \\
    &\qquad + \frac{3\mathcal{L}^2}{a\tau} \sum_{t=0}^{T-1} \sum_{i\in \mathcal{A}} \sum_{j=0}^{\tau - 1} \left\| \mathbf{x}_{t,j}^i - \mathbf{x}_t \right\|^2 + \frac{\mathcal{L}^2}{a\tau} \sum_{t=0}^{T-1} \sum_{i\in \mathcal{A}} \sum_{j=0}^{\tau' - 1} \left\| \mathbf{z}_{t,j}^i - \mathbf{z}_t \right\|^2 \nonumber \\
    &\qquad + \frac{2\eta \mathcal{L}T}{a}\sigma_x^2 + \left( 4 + \frac{1}{a}\right) \frac{2\eta \tau' \mathcal{L}T}{\tau} \sigma_z^2 + \frac{4\eta \tau' \mathcal{L}T}{a\tau} (1 + \alpha -2\alpha^2) \sigma_G^2. \nonumber
\end{align}
\end{lemma}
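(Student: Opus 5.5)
We follow the standard one-round descent analysis for local SGD (as in FedAvg analyses for non-convex smooth objectives), adapted to the two-phase structure of a round: $\tau'$ tail steps on $\mathbf{z}$ only, followed by $\tau$ full steps on $\mathbf{x}$. Write the global update of round $t$ as
\[
\mathbf{x}_{t+1}-\mathbf{x}_t = -\frac{\eta}{a}\sum_{i\in\mathcal{A}}\Big(\sum_{j=0}^{\tau'-1} \tilde g_{t,j}^i + \sum_{j=0}^{\tau-1} g_{t,j}^i\Big),
\]
where $\tilde g_{t,j}^i$ is the tail-round stochastic gradient. It is zero-padded in the $\mathbf{y}$ coordinates and computed on a mini-batch drawn from the pool containing a fraction $\alpha$ of remote activations. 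The term $g_{t,j}^i$ is the ordinary local stochastic gradient. By Assumption~\ref{asm:smooth}, $F(\mathbf{x}_{t+1}) \le F(\mathbf{x}_t) + \langle \nabla F(\mathbf{x}_t), \mathbf{x}_{t+1}-\mathbf{x}_t\rangle + \frac{\mathcal{L}}{2}\|\mathbf{x}_{t+1}-\mathbf{x}_t\|^2$. Taking expectations, we bound the inner-product term and the quadratic term separately, then sum over $t=0,\dots,T-1$, telescope, and divide by $\eta\tau/2$.

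\textbf{Inner product.} For the $\tau$ full steps, Assumption~\ref{asm:unbiased} replaces $g_{t,j}^i$ by $\nabla F_i(\mathbf{x}_{t,j}^i)$. We then use $-\langle a,b\rangle = \frac12(\|a-b\|^2-\|a\|^2-\|b\|^2)$ together with Lipschitzness to get $-\frac{\eta\tau}{2}\|\nabla F(\mathbf{x}_t)\|^2$ plus drift terms $\frac{\eta\mathcal{L}^2}{2a}\sum_i\sum_j\|\mathbf{x}_{t,j}^i-\mathbf{x}_t\|^2$. The extra $-\frac12\|\cdot\|^2$ term on the averaged gradient is kept to cancel part of the quadratic term. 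This is where $\eta\le 1/(\tau\mathcal{L})$ enters.

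For the tail steps, the expected tail gradient is the $\mathbf{z}$-block of $(1-\alpha)\nabla F_i + \alpha\nabla F_{j(i)}$. We write it as the $\mathbf{z}$-block of $\nabla F(\mathbf{x}_t)$ plus a $\mathbf{z}$-drift error (Lipschitz, giving $\mathcal{L}^2\|\mathbf{z}_{t,j}^i-\mathbf{z}_t\|^2$) plus a heterogeneity error. The heterogeneity error is controlled with Assumption~\ref{asm:var} and Young's inequality, so it contributes a $\sigma_G^2$ term. The $\mathbf{z}$-block inner product with $-\nabla F(\mathbf{x}_t)$ is nonpositive and can be discarded.

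\textbf{Quadratic term.} We split $\|\mathbf{x}_{t+1}-\mathbf{x}_t\|^2$ with $\|u+v\|^2\le 2\|u\|^2+2\|v\|^2$ into the tail part and the full part. Each part is split again into a zero-mean noise part and a mean part. Independence across clients and steps gives the noise variances: $\frac{\eta^2\tau}{a}\sigma_x^2$ from Assumption~\ref{asm:var}, and $\eta^2\tau'\sigma_z^2$ (and its $1/a$ version) from Assumption~\ref{asm:var_z}. The mean parts are absorbed either by the negative gradient terms kept above or by additional drift terms; $\eta\le 1/(4\tau'\mathcal{L})$ makes these absorptions go through and produces the constants $3$ and $(4+1/a)$.

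\textbf{Main obstacle.} The hardest step is the heterogeneity contribution of the mixed tail batch, which should yield the specific factor $(1+\alpha-2\alpha^2)$. We would compute $\mathbb{E}\|(1-\alpha)(\nabla F_i-\nabla F)+\alpha(\nabla F_{j}-\nabla F)\|^2$, plus a cross term coupling this bias with the tail-step variance, by expanding directly. Since the pairing is random, $\nabla F_i-\nabla F$ and $\nabla F_j-\nabla F$ are treated as distinct draws, and the cross term $\alpha(1-\alpha)$ is bounded via Cauchy–Schwarz. We expect the combination $(1-\alpha)^2+\alpha^2+c\,\alpha(1-\alpha)$, combined with the Young's-inequality constants, to collapse to $(1-\alpha)(1+2\alpha)=1+\alpha-2\alpha^2$. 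We would then tune the Young parameters so that the coefficient matches $\frac{4\eta\tau'\mathcal{L}T}{a\tau}$ after dividing by $\eta\tau/2$ and summing over rounds.
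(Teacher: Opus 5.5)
Your outline matches the paper's: smoothness descent, the polarization identity on the full steps keeping the negative averaged-gradient term, Young plus Lipschitz on the tail drift, the tail/full and noise/mean split of the quadratic term, and telescoping. The step you flag as the main obstacle would not work as planned, for two reasons.

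First, the paper's proof takes $\alpha$ to be the \emph{local} fraction, $\mathbb{E}[\tilde p^i_{t,j}]=\alpha\nabla_z F_i(\mathbf{x}^i_{t,j})+(1-\alpha)\nabla_z F_k(\mathbf{x}^i_{t,j})$. This is the opposite of the main-text wording you adopted, and $1+\alpha-2\alpha^2$ is not symmetric under $\alpha\leftrightarrow 1-\alpha$. Concretely, your candidate $(1-\alpha)^2+\alpha^2+c\,\alpha(1-\alpha)$ equals $1$ at $\alpha=1$ for every $c$, while $1+\alpha-2\alpha^2=0$ there, so no tuning makes it collapse.

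Second, the factor does not come from the deviation from the global gradient, nor from a Young step. It comes from the noise part of the \emph{quadratic} term, measured around the client's own gradient: $\mathbb{E}\|\tilde p^i_{t,j}-\nabla_z F_i(\mathbf{x}^i_{t,j})\|^2=\mathrm{Var}(\tilde p^i_{t,j})+(1-\alpha)^2\|\nabla_z F_k-\nabla_z F_i\|^2$. The paper bounds the variance by a separate mixed-estimator lemma (law of total variance over the two components, giving $\sigma_z^2+4\alpha(1-\alpha)\sigma_G^2$) and the squared bias by $2(1-\alpha)^2\sigma_G^2$. It then sums $a\tau'$ independent terms, which is where both the $1/a$ and the extra $\eta$ in $\frac{4\eta\tau'\mathcal{L}}{a\tau}$ come from. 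Since $4\alpha(1-\alpha)+2(1-\alpha)^2=2(1-\alpha^2)\le 2(1+\alpha-2\alpha^2)$ on $[0,1]$, the stated factor is an upper bound, not an identity to be hit. In your remote-fraction convention the same computation gives $2\alpha(2-\alpha)$.

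Two further steps in your inner-product treatment conflict with the target constants.

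Controlling the heterogeneity part of the tail inner product by Young and Assumption~3 gives a term of order $\eta\tau'\sigma_G^2$ per round. That is $O(\tau'/\tau)\,\sigma_G^2$ after dividing by $\eta\tau/2$, with no $\eta$ and no $1/a$. No Young weight makes it $O(\eta^2)$ without creating an $O(1)$ multiple of $\|\nabla_z F(\mathbf{x}_t)\|^2$ that the descent term cannot absorb, and the statement has no room for such a term. Instead, use that the heterogeneity part at $\mathbf{x}_t$, namely $\alpha(\nabla_z F_i(\mathbf{x}_t)-\nabla_z F(\mathbf{x}_t))+(1-\alpha)(\nabla_z F_k(\mathbf{x}_t)-\nabla_z F(\mathbf{x}_t))$, has zero mean under uniform sampling and pairing. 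Its inner product with $\nabla F(\mathbf{x}_t)$ therefore vanishes in expectation. Only the Lipschitz drift remains, giving $\frac{\eta\tau'}{2}\|\nabla_z F(\mathbf{x}_t)\|^2+\frac{\eta\mathcal{L}^2}{2a}\sum_{i}\sum_{j}\|\mathbf{z}^i_{t,j}-\mathbf{z}_t\|^2$. You cannot lean on the paper here: its own $T_2$ bound generates $4\eta\tau'(1-\alpha)\sigma_G^2$, and that term silently disappears in the assembly.

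Also, do not discard $-\eta\tau'\|\nabla_z F(\mathbf{x}_t)\|^2$. Half of it cancels the Young term above. The other half absorbs the $2\eta^2(\tau')^2\mathcal{L}\|\nabla_z F(\mathbf{x}_t)\|^2$ produced when the tail mean part $\mathbb{E}\|\sum_{i}\sum_{j}\nabla_z F_i(\mathbf{x}^i_{t,j})\|^2$ is bounded via the paper's local full-batch gradient lemma. That absorption is exactly where $\eta\le\frac{1}{4\tau'\mathcal{L}}$ is used. The same lemma's drift part gives the $3$, and its residual variance term (not an absorption) gives the $4$ in $4+\frac{1}{a}$. Without the negative term these pieces must come out of $-\frac{\eta\tau}{2}\|\nabla F(\mathbf{x}_t)\|^2$, and the constant $\frac{2}{\eta\tau}$ is lost.

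Finally, with your $2\|u\|^2+2\|v\|^2$ split, cancelling the full-step mean part needs $\eta\le\frac{1}{2\tau\mathcal{L}}$, not $\frac{1}{\tau\mathcal{L}}$; the paper makes the same slip.
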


\begin{lemma}[partial model discrepancy]\label{lemma:partial_discrepancy_asfl}
For any step-size satisfying $\eta \leq \frac{1}{2\sqrt{6} L\tau'}$, we have:
\begin{align}
    \frac{1}{a} \sum_{i \in \mathcal{A}} \mathbb{E}[\| \mathbf{z}_{t,j}^{i} - \mathbf{z}_t \|^2] & \leq 2\eta^2 \tau' \sigma_z^2 + 8\tau'\eta^2 \left(\alpha (1-\alpha) + 2\tau' \right) \sigma_G^2 + 16 \eta^2 (\tau')^2 \mathbb{E}[ \| \nabla_z F(\mathbf{x}_t) \|^2]. \nonumber \nonumber
\end{align}
\end{lemma}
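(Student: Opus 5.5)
We prove Lemma~\ref{lemma:partial_discrepancy_asfl} with the standard local-SGD drift argument, applied to the tail round. There, client $i$ starts from $\mathbf{z}_{t,0}^i = \mathbf{z}_t$ and runs
\begin{equation*}
\mathbf{z}_{t,j+1}^i = \mathbf{z}_{t,j}^i - \eta\, g_{t,j}^i ,
\end{equation*}
where $g_{t,j}^i$ is a mini-batch gradient with respect to $\mathbf{z}$ computed on samples from the pool $I_i\cup I_{i'}$. A fraction $\alpha$ of these samples is remote (from the paired client $i'$) and $1-\alpha$ is local. Unrolling the recursion gives
\begin{equation*}
\mathbf{z}_{t,j}^i - \mathbf{z}_t = -\eta\sum_{s=0}^{j-1} g_{t,s}^i .
\end{equation*}
We split each $g_{t,s}^i$ into three parts:
\begin{enumerate}
\item the zero-mean noise $g_{t,s}^i - \bar g_{t,s}^i$, where $\bar g_{t,s}^i := (1-\alpha)\nabla_z F_i(\mathbf{z}_{t,s}^i) + \alpha\nabla_z F_{i'}(\mathbf{z}_{t,s}^i)$ is the mixed mean;
\item the heterogeneity term $\bar g_{t,s}^i - \nabla_z F(\mathbf{z}_{t,s}^i)$;
\item the drift-plus-gradient term $\nabla_z F(\mathbf{z}_{t,s}^i) = \big(\nabla_z F(\mathbf{z}_{t,s}^i) - \nabla_z F(\mathbf{x}_t)\big) + \nabla_z F(\mathbf{x}_t)$.
\end{enumerate}

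\emph{Noise term.} The noise increments are martingale differences (Assumption~\ref{asm:unbiased}), so cross terms vanish in expectation. By Assumption~\ref{asm:var_z}, this term contributes at most $2\eta^2 j\sigma_z^2 \le 2\eta^2\tau'\sigma_z^2$, after a factor $2$ from splitting off the noise with $\|a+b\|^2\le 2\|a\|^2+2\|b\|^2$.

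\emph{Heterogeneity term.} This is where $\alpha$ enters, and it needs two pieces. First, the within-batch randomness over which client a sample comes from is itself zero-mean. Its variance is a Bernoulli-mixture variance, $\alpha(1-\alpha)\|\nabla_z F_i-\nabla_z F_{i'}\|^2 \le 4\alpha(1-\alpha)\sigma_G^2$, using Assumption~\ref{asm:var} (restricted to $\mathbf{z}$ coordinates) and $\|\nabla F_i-\nabla F_{i'}\|^2\le 2\sigma_G^2+2\sigma_G^2$. It accumulates linearly in $j$, which gives the $\tau'\alpha(1-\alpha)\sigma_G^2$ piece. Second, the mean bias $\|\bar g - \nabla_z F\|^2 \le \sigma_G^2$ by convexity of the norm. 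It is not zero-mean, so Jensen gives $\|\sum_{s<j}\cdot\|^2 \le j\sum_{s<j}\|\cdot\|^2 \le \tau'^2\sigma_G^2$, which is the $2\tau'\cdot\tau'$ piece. Averaging over $i\in\mathcal{A}$ and tracking the factor $8$ from the repeated splittings yields the middle term $8\tau'\eta^2(\alpha(1-\alpha)+2\tau')\sigma_G^2$.

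\emph{Drift term.} Apply Jensen again. The gradient part gives $\eta^2 j^2\|\nabla_z F(\mathbf{x}_t)\|^2$, and Assumption~\ref{asm:smooth} turns the difference part into $\eta^2 j\,\mathcal{L}^2\sum_{s<j}\|\mathbf{z}_{t,s}^i-\mathbf{z}_t\|^2$. The $\mathbf{y}$ block is frozen during the tail round, so $\|\mathbf{x}_{t,s}^i-\mathbf{x}_t\| = \|\mathbf{z}_{t,s}^i-\mathbf{z}_t\|$.

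Collecting the pieces, we get a recursive inequality for $D_j := \frac1a\sum_i\mathbb{E}\|\mathbf{z}_{t,j}^i-\mathbf{z}_t\|^2$ of the form
\begin{equation*}
D_j \le A + c\,\eta^2\tau'\mathcal{L}^2\sum_{s<j} D_s .
\end{equation*}
Taking $\max_j$, or summing over $j$, and using $\eta\le 1/(2\sqrt6\,\mathcal{L}\tau')$ gives $c\,\eta^2\tau'^2\mathcal{L}^2 \le 1/2$ for the constant $c=12$ that arises from the splittings. The self-referential term is then absorbed and the constants double, producing $16\eta^2\tau'^2\|\nabla_z F(\mathbf{x}_t)\|^2$ together with the stated noise and heterogeneity terms.

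The main obstacle is the heterogeneity step. The $\alpha(1-\alpha)$ factor must come out as a variance, which grows linearly in $\tau'$, rather than being absorbed into the bias, which grows quadratically. That requires a precise model of batch sampling from $I_i\cup I_{i'}$: each sample is independently remote with probability $\alpha$. Under this model the client-selection randomness is conditionally zero-mean given the past, and only then does it sum like noise. I would fix this sampling model first and then check that the constants $2$, $8$, $16$ follow.
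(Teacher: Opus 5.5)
Your unroll-and-absorb route differs from the paper's, which never unrolls the trajectory. The paper uses the one-step identity $\mathbb{E}\|\mathbf{z}_{t,j+1}^i-\mathbf{z}_t\|^2=\eta^2\mathbb{E}\|\tilde p_{t,j}^i-\mathbb{E}[\tilde p_{t,j}^i]\|^2+\mathbb{E}\|\mathbf{z}_{t,j}^i-\mathbf{z}_t-\eta\,\mathbb{E}[\tilde p_{t,j}^i]\|^2$. Here the cross term vanishes exactly, because the second vector is determined by the past. It bounds the first term by the mixed-estimator lemma, giving $\sigma_z^2+4\alpha(1-\alpha)\sigma_G^2$. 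It splits the second term with weights $1+\frac{1}{2\tau'-1}$ and $2\tau'$, and controls $\|\mathbb{E}[\tilde p_{t,j}^i]\|^2$ by Jensen over the mixture plus smoothness and heterogeneity. Finally it unrolls $D_{j+1}\le(1+\frac{1}{\tau'-1})D_j+C$, bounding the geometric sum by $2\tau'$. So in the paper the noise enters with weight exactly $\eta^2$ per step and is never inflated by a Young split. Your scheme can reach the same bound, but not with the bookkeeping you describe.

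\textbf{The failing step is the noise.} First, the which-client Bernoulli fluctuation belongs to your term (1), not term (2). The quantity $\bar g_{t,s}^i-\nabla_zF$ is determined by the past, so $N_s:=g_{t,s}^i-\bar g_{t,s}^i$ has conditional second moment at most $\sigma_z^2+4\alpha(1-\alpha)\sigma_G^2$, and these two pieces always share one coefficient. After your symmetric split $\|a+b\|^2\le2\|a\|^2+2\|b\|^2$, that coefficient is already $2\eta^2\tau'$. Absorbing the self-referential term then multiplies the whole right-hand side by $1/(1-c\,\eta^2(\tau')^2\mathcal{L}^2)>1$; by your own count, the factor is $2$. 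You therefore end with $4\eta^2\tau'\sigma_z^2+16\tau'\eta^2\alpha(1-\alpha)\sigma_G^2$, twice the first two stated terms. So the claim that absorption returns the stated noise and heterogeneity terms is false as you have set it up.

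\textbf{The fix.} Put less weight on the noise, and split $\bar g_{t,s}^i$ only two ways: drift versus the rest. Use $\|a+b\|^2\le\frac32\|a\|^2+3\|b\|^2$, Jensen over $s$, the martingale property, and $\|\bar g_{t,s}^i\|^2\le2\mathcal{L}^2\|\mathbf{z}_{t,s}^i-\mathbf{z}_t\|^2+4\sigma_G^2+4\|\nabla_zF(\mathbf{x}_t)\|^2$. With $D^i:=\max_{j\le\tau'}\mathbb{E}\|\mathbf{z}_{t,j}^i-\mathbf{z}_t\|^2$, this gives
\begin{align*}
\mathbb{E}\|\mathbf{z}_{t,j}^i-\mathbf{z}_t\|^2
&\le \tfrac32\,\eta^2\,\mathbb{E}\Bigl\|\sum_{s<j}N_s\Bigr\|^2+3\eta^2\,\mathbb{E}\Bigl\|\sum_{s<j}\bar g_{t,s}^i\Bigr\|^2\\
&\le \tfrac32\,\eta^2\tau'\bigl(\sigma_z^2+4\alpha(1-\alpha)\sigma_G^2\bigr)+12\eta^2(\tau')^2\bigl(\sigma_G^2+\mathbb{E}\|\nabla_zF(\mathbf{x}_t)\|^2\bigr)+6\eta^2(\tau')^2\mathcal{L}^2D^i.
\end{align*}
The condition $\eta\le1/(2\sqrt6\,\mathcal{L}\tau')$ gives $6\eta^2(\tau')^2\mathcal{L}^2\le\frac14$, so absorption multiplies by $\frac43$. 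This turns $\frac32$ and $12$ into exactly $2$ and $16$, which is the stated bound after averaging over $i\in\mathcal{A}$.

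Repaired this way, your argument uses precisely the stated step-size condition; the paper's recursion only needs $\eta\le1/(2\sqrt2\,\mathcal{L}\tau')$. It also avoids the growth-factor and geometric-sum estimate. The price is a tuned Young weight, whereas the paper's one-step identity gets the noise constant for free.
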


\begin{theorem}
Under assumption $1 \sim 4$, if $\eta \leq \min \{\frac{1}{12\mathcal{L}\tau}, \frac{1}{8\mathcal{L}\sqrt{\tau / (\tau')^3}} \}$, the federated optimization with activation sharing guarantees the following condition.
\begin{align}
    \sum_{t=0}^{T-1} \| \nabla F(\mathbf{x}_t) \|^2 &\leq \frac{4}{\eta \tau}\left( F(\mathbf{x}_0) - \mathbb{E}[F(\mathbf{x}_{T})] \right) \nonumber \\
    &\qquad + 4 \eta \mathcal{L}T \left(3\eta \mathcal{L} + \frac{1}{a} \right) \sigma_x^2 + 4\tau' \eta \mathcal{L}T \left( \eta \mathcal{L} + \frac{4}{\tau} + \frac{1}{a\tau} \right) \sigma_z^2 \nonumber \\
    &\qquad + \frac{8\eta \tau' \mathcal{L} T}{\tau} \left( 2\eta \mathcal{L} \tau' (\alpha(1-\alpha) + 2\tau') + \frac{1}{a} (1 + \alpha - 2\alpha^2) \right) \sigma_G^2. \nonumber
\end{align}
\end{theorem}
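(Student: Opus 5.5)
The plan is to start from the framework inequality of Lemma~\ref{lemma:lemma2_asfl} and bound its two discrepancy sums. The partial-model sum $\frac{\mathcal{L}^2}{a\tau}\sum_t\sum_{i}\sum_{j<\tau'}\|\mathbf{z}_{t,j}^i-\mathbf{z}_t\|^2$ is handled by Lemma~\ref{lemma:partial_discrepancy_asfl}. The full-model sum $\frac{3\mathcal{L}^2}{a\tau}\sum_t\sum_i\sum_{j<\tau}\|\mathbf{x}_{t,j}^i-\mathbf{x}_t\|^2$ needs a standard local-SGD discrepancy bound, which I prove first. Once both are substituted in, every term is either a variance constant or a multiple of $\sum_t\|\nabla F(\mathbf{x}_t)\|^2$. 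The step-size condition then lets me move the latter to the left-hand side with coefficient at least $1/2$. Multiplying through by $2$ turns $\frac{2}{\eta\tau}$ into the stated $\frac{4}{\eta\tau}$.

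\textbf{Full-model discrepancy.} I unroll $\mathbf{x}_{t,j}^i-\mathbf{x}_t=-\eta\sum_{k<j} g_{t,k}^i$, remembering that the starting point $\mathbf{x}_{t,0}^i$ already includes the tail-round displacement of $\mathbf{z}$. I split each $g$ into three pieces: noise (at most $\sigma_x^2$ per step, martingale, so it accumulates linearly in $j$), the heterogeneity term $\nabla F_i-\nabla F$ (at most $\sigma_G^2$), and the drift $\nabla F(\mathbf{x}_{t,k}^i)$. For the drift I use smoothness to write it as $\nabla F(\mathbf{x}_t)$ plus $\mathcal{L}\|\mathbf{x}_{t,k}^i-\mathbf{x}_t\|$. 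Summing over $j$ and using $\eta\le \frac{1}{12\mathcal{L}\tau}$ absorbs the recursive term. The target bound is
\[
\frac1a\sum_i\sum_{j<\tau}\mathbb{E}\|\mathbf{x}_{t,j}^i-\mathbf{x}_t\|^2\lesssim \eta^2\tau^2\sigma_x^2+\eta^2\tau^3(\sigma_G^2+\|\nabla F(\mathbf{x}_t)\|^2),
\]
plus a contribution of the tail-round discrepancy $\tau\|\mathbf{z}_{t,\tau'}^i-\mathbf{z}_t\|^2$, which is again controlled by Lemma~\ref{lemma:partial_discrepancy_asfl}. After multiplying by $3\mathcal{L}^2/\tau$, the $\sigma_x^2$ contribution should match the $12\eta^2\mathcal{L}^2T\sigma_x^2$ term in the statement. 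The constants will have to be tuned to reproduce exactly $4\eta\mathcal{L}T(3\eta\mathcal{L}+1/a)\sigma_x^2$.

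\textbf{Substitution and absorption.} Plugging in Lemma~\ref{lemma:partial_discrepancy_asfl} and summing over $j<\tau'$ gives the following contribution:
\[
\frac{\mathcal{L}^2\tau'}{\tau}\bigl(2\eta^2\tau'\sigma_z^2+8\tau'\eta^2(\alpha(1-\alpha)+2\tau')\sigma_G^2+16\eta^2(\tau')^2\|\nabla F(\mathbf{x}_t)\|^2\bigr).
\]
Here I use $\|\nabla_z F\|\le\|\nabla F\|$. This produces the $\eta^2\mathcal{L}^2\tau'\sigma_z^2$-type term and the $16\eta^2\mathcal{L}^2(\tau')^2(\alpha(1-\alpha)+2\tau')\sigma_G^2/\tau$ term, up to the factor of $2$ from the final rescaling. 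The gradient coefficient is $16\eta^2\mathcal{L}^2(\tau')^3/\tau$. The condition $\eta\le\frac{1}{8\mathcal{L}\sqrt{\tau/(\tau')^3}}$ is meant to make this at most $1/4$, although as written the ratio seems inverted, so I will check which direction is actually needed. Similarly, $\eta\le 1/(12\mathcal{L}\tau)$ makes the full-model gradient coefficient at most $1/4$. Together they leave $\frac12\sum_t\|\nabla F(\mathbf{x}_t)\|^2$ on the left. The remaining terms of Lemma~\ref{lemma:lemma2_asfl} ($\sigma_x^2/a$, $(4+1/a)\tau'\sigma_z^2/\tau$, and the $(1+\alpha-2\alpha^2)$ term) pass through unchanged and are doubled.

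\textbf{Main obstacle.} I expect the hardest part to be the full-model discrepancy bound. The coupling between the tail round and the subsequent $\tau$ full steps has to be handled so that it adds nothing beyond what the stated constants allow, and it is delicate to make the $\sigma_G^2$ coefficient from that piece vanish or merge into the stated form. If it does not merge cleanly, I would first try bounding $\|\mathbf{x}_{t,0}^i-\mathbf{x}_t\|^2$ directly by the $j=\tau'$ case of Lemma~\ref{lemma:partial_discrepancy_asfl}, and then fold those terms into the $\sigma_z^2$ and $\sigma_G^2$ coefficients, loosening the constants if necessary.
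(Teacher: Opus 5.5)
Your route is the same as the paper's. The paper substitutes Lemma~\ref{lemma:full_discrepancy} (per-step bound $2\tau\eta^2\sigma_x^2+12\tau^2\eta^2\sigma_G^2+12\tau^2\eta^2\mathbb{E}\|\nabla F(\mathbf{x}_t)\|^2$) and Lemma~\ref{lemma:partial_discrepancy_asfl} into Lemma~\ref{lemma:lemma2_asfl}. It then absorbs each gradient sum with coefficient $1/4$, uses $\|\nabla_z F\|\le\|\nabla F\|$, and doubles. Your suspicion about the step size is also correct: making $16\eta^2\mathcal{L}^2(\tau')^3/\tau\le 1/4$ requires $\eta\le\frac{1}{8\mathcal{L}}\sqrt{\tau/(\tau')^3}$, the reverse of the stated ratio.

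\textbf{The gap.} You claim the constants can be tuned to reproduce the displayed bound. They cannot, for three reasons.
\begin{itemize}
\item \emph{The $\sigma_x^2$ term.} Multiplying your own target $\eta^2\tau^2\sigma_x^2$ by $3\mathcal{L}^2/\tau$ gives $3\eta^2\mathcal{L}^2\tau\sigma_x^2$ per round. That is a factor $\tau$ more than the stated $12\eta^2\mathcal{L}^2T\sigma_x^2$ allows. This is not an artifact of loose estimates: the noise part of the drift is exactly $\eta^2 j\sigma_x^2$. So in admissible instances with vanishing gradients, the drift sum in Lemma~\ref{lemma:lemma2_asfl} is at least $\tfrac32(\tau-1)\eta^2\mathcal{L}^2\sigma_x^2$ per round.
\item \emph{The $\sigma_G^2$ term.} The heterogeneity part $\eta^2 j^2\|\nabla F_i(\mathbf{x}_t)-\nabla F(\mathbf{x}_t)\|^2$ forces a client-drift term of order $\eta^2\mathcal{L}^2\tau^2T\sigma_G^2$; with the paper's lemma it is $72\tau^2\eta^2\mathcal{L}^2T\sigma_G^2$ after doubling. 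The stated $\sigma_G^2$ coefficient has no room for it. It is consumed exactly by the doubled partial-model term $16\eta^2\mathcal{L}^2(\tau')^2(\alpha(1-\alpha)+2\tau')T/\tau$ and the doubled $(1+\alpha-2\alpha^2)$ term from the framework lemma.
\item \emph{The tail-round coupling.} The coupling you flag, $\|\mathbf{x}_{t,0}^i-\mathbf{x}_t\|=\|\mathbf{z}_{t,\tau'}^i-\mathbf{z}_t\|$, persists through the $\tau$ full steps. It contributes a term of order $\mathcal{L}^2\,\mathbb{E}\|\mathbf{z}_{t,\tau'}^i-\mathbf{z}_t\|^2$ per round, hence further $\sigma_z^2$, $\sigma_G^2$ and $\|\nabla_z F\|^2$ terms. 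At $\tau'=\tau$ the stated $\sigma_z^2$ coefficient is also used up exactly by the partial-model and framework terms.
\end{itemize}
Your fallback of loosening the constants therefore proves a weaker statement, not this one.

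\textbf{How the paper's proof handles these points.} It does not resolve them either.
\begin{itemize}
\item Lemma~\ref{lemma:full_discrepancy} unrolls its recursion from zero initial displacement, so it ignores the tail round.
\item The term $36\tau^2\eta^2\mathcal{L}^2T\sigma_G^2$ appears right after substitution and is absent from the next display.
\item $12\tau\eta^2\mathcal{L}^2T\sigma_x^2$ becomes $12\eta^2\mathcal{L}^2T\sigma_x^2$ in the final equality.
\item $\frac{\mathcal{L}^2\tau'}{\tau}\cdot 2\eta^2\tau'\sigma_z^2$ is written as $2\tau'\eta^2\mathcal{L}^2\sigma_z^2$, which is an upper bound only when $\tau'\le\tau$.
\end{itemize}

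\textbf{What your plan can deliver.} Your plan can honestly prove a corrected theorem with:
\begin{itemize}
\item $\sigma_x^2$ coefficient $4\eta\mathcal{L}T(3\tau\eta\mathcal{L}+1/a)$;
\item an extra $72\tau^2\eta^2\mathcal{L}^2T\sigma_G^2$ term;
\item the explicit tail-coupling terms, with a slightly smaller step size to absorb the enlarged $\|\nabla_z F\|^2$ coefficient;
\item the corrected condition $\eta\le\frac{1}{8\mathcal{L}}\sqrt{\tau/(\tau')^3}$.
\end{itemize}
Aim at that statement rather than at the displayed constants.
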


\begin{remark}[Convergence and diminishing step size]
The above result guarantees convergence to a stationary point. In particular, due to the global variance which never goes to 0, the algorithm converges to a neighborhood of a stationary point, where the size of the neighborhood is determined by the variance terms. Furthermore, when a diminishing stepsize is used such that $\eta \to 0$ and $\sum_{t=0}^{T-1} \eta = \infty$, all variance-dependent terms vanish asymptotically, and we obtain
\begin{align}
\lim_{T \to \infty} \frac{1}{T} \sum_{t=0}^{T-1} \mathbb{E}\|\nabla F(\mathbf{x}_t)\|^2 = 0,
\end{align}
which establishes convergence to a stationary point. This result is consistent with the convergence behavior of vanilla FedAvg.
\end{remark}

\begin{remark}[Trade-off between local and global variance]
In the tail round, stochastic noise in the local update of $\mathbf{y}$ is eliminated as its updates are dropped, resulting in a smaller local variance term, i.e., $\sigma_z < \sigma_x$. Moreover, incorporating remote activations may further reduce $\sigma_z$ via an averaging effect. As $\tau'$ increases under a fixed local computation budget, the local variance contribution decreases while the global variance induced by shared activations increases (maximized at $\alpha = 1/2$). Empirically, the reduction in local variance dominates, leading to improved overall performance.
\end{remark}

\begin{remark}[Linear speedup]
With a sufficiently small step size and an appropriate number of local updates, ASFL achieves linear speedup with respect to the number of clients $m$. 
In particular, if
\begin{align}
\eta = \Theta\left(\frac{\sqrt{m}}{\sqrt{T}}\right),
\quad
\tau' = O(\tau^{1/3}),
\end{align}
then the convergence rate becomes $O(1/\sqrt{mT})$, matching the standard linear speedup achieved in distributed optimization. 
Note that our analysis considers the partial client participation.
However, the results can be readily extended to the full participation setting with $m$ clients by replacing random client sampling with deterministic full participation.
\end{remark}

\subsection{Differential privacy for activation sharing}

Although ASFL does not share raw data, exchanging latent representations  $I_i = \left\{ \mathbf{y}(\mathcal{B}_{i,t}) \right\}_{t=1}^{\tau' / 2}$ between paired clients may still leak sensitive information.
To mitigate this risk, one can consider incorporating differential privacy (DP) to ASFL, following prior work~\cite{thapa2022splitfed}.
Specifically, we adopt PixelDP~\cite{lecuyer2018certified} by inserting a noise layer into $\mathbf{y}$, so that the receiving module $\mathbf{z}$ operates on privatized activations rather than the raw outputs of $\mathbf{y}$. By the \emph{post-processing immunity} property of DP~\cite{dwork2014algorithmic}, any downstream computation on the perturbed activations preserves the same privacy guarantee.

The sensitivity of the subnetwork $\mathbf{y}$, which governs the required noise magnitude, is defined as:
\begin{equation}
\triangle^{\mathbf{y}}_{p,q} \;=\; \max_{\xi \neq \xi'} \frac{\bigl\|\mathbf{y}(\xi) - \mathbf{y}(\xi')\bigr\|_q}{\|\xi - \xi'\|_p},\nonumber
\end{equation}
where $\xi$ and $\xi'$ are two input samples of the corresponding client, and $p$ and $q$ are the degrees of input and output norms.
In addition, $R$ defines the radius of the threat model such that any adversarial perturbation confined within $B_p(R):= \{\delta' \in \mathbb{R}^{n} : \|\delta'\|_p \leq R\}$ is 
guaranteed to be masked by the injected noise and provides $(\epsilon, \delta)$-DP guarantees. 

We consider two standard mechanisms to privatize the shared activations.
The Laplace mechanism achieves $(\epsilon,0)$-DP by adding noise scaled to $\triangle^{\mathbf{y}}{p,q}/\epsilon$, while the Gaussian mechanism ensures $(\epsilon,\delta)$-DP with variance $\sigma^2 \propto (\triangle^{\mathbf{y}}{p,q}/\epsilon)^2$.
In both cases, the transmitted privatized activations satisfy differential privacy w.r.t. the sender’s local data.
\section {Experiments} \label{sec:5}

\noindent \textbf{Experimental settings} -- 
We evaluate ASFL on four benchmarks: CIFAR-10 (ResNet18~\cite{he2016deep}), CIFAR-100 (MobileNetV2~\cite{sandler2018mobilenetv2}), SVHN (ResNet20), and FEMNIST (CNN). 
Experiments involve 100 total clients, with 10 randomly activated per round. 
For non-IID settings, we use a Dirichlet distribution with $\alpha = 0.1$. 
Our framework is based on PyTorch 2.7.0 and runs on a server with two NVIDIA RTX 4070 GPUs. 
See Appendix~\ref{appendix:settings} for detailed settings.

\begin{table}[t!]
\vspace{-0.2cm}
\caption{Accuracy and communication cost comparison across datasets.}
\label{tab:comparison}
\begin{center}
\setlength{\tabcolsep}{2pt}
\scriptsize
\begin{tabular}{lcccccccc}
\toprule
\multirow{2}{*}{\textbf{Method}} &
\multicolumn{2}{c}{\textbf{CIFAR-10}} &
\multicolumn{2}{c}{\textbf{CIFAR-100}} &
\multicolumn{2}{c}{\textbf{SVHN}} &
\multicolumn{2}{c}{\textbf{FEMNIST}} \\
\cmidrule(lr){2-3} \cmidrule(lr){4-5} \cmidrule(lr){6-7} \cmidrule(lr){8-9}
& \textbf{Acc. (\%)} \accup & \textbf{Comm (MB)} \commdown
& \textbf{Acc.} \accup & \textbf{Comm} \commdown
& \textbf{Acc.} \accup & \textbf{Comm} \commdown
& \textbf{Acc.} \accup & \textbf{Comm} \commdown \\
\midrule
FedAvg & $67.70 \pm 0.33$ & 42.63 & $48.31 \pm 0.68$ & 9.20 & $87.13 \pm 0.16$ & 1.04 & $83.54 \pm 0.19$ & 1.54 \\
FedProx & $67.91 \pm 0.53$ & 42.63 & $47.61 \pm 0.27$ & 9.20 & $88.27 \pm 0.90$ & 1.04 & $83.23 \pm 0.29$ & 1.54 \\
FedProto & $69.57 \pm 1.58$ & 42.65 & $48.08 \pm 0.87$ & 9.69 & $87.99 \pm 0.15$ & 1.04 & $84.11 \pm 0.23$ & 1.60 \\
MOON & $69.35 \pm 0.83$ & 42.63 & $46.49 \pm 0.21$ & 9.20 & $86.10 \pm 0.83$ & 1.04 & $84.03 \pm 0.12$ & 1.54 \\
SCAFFOLD & $71.93 \pm 1.14$ & 85.26 & $48.53 \pm 0.34$ & 18.40 & $87.36 \pm 0.37$ & 2.08 & $84.13 \pm 0.20$ & 3.08 \\
FedPVR & $69.40 \pm 0.97$ & 42.65 & $48.48 \pm 0.70$ & 9.69 & $87.92 \pm 0.47$ & 1.04 & $83.57 \pm 0.26$ & 1.60 \\
FedVARP & $69.24 \pm 1.43$ & 42.63 & $48.71 \pm 0.55$ & 9.20 & $87.92 \pm 0.40$ & 1.04 & $82.88 \pm 0.25$ & 1.54 \\
FedImpro & $62.80 \pm 0.41$ & 45.13 & $47.65 \pm 0.66$ & 21.70 & $85.83 \pm 1.64$ & 1.67 & $83.74 \pm 0.24$ & 3.02 \\
SplitFed & $68.16 \pm 0.85$ & 298.63 & $48.42 \pm 0.40$ & 329.2 & $86.92 \pm 0.05$ & 41.04 & $83.67 \pm 0.14$ & 4.41 \\
\rowcolor{gray!30} \textbf{ASFL (Ours)} & $\mathbf{75.27 \pm 0.88}$ & 50.63 & $\mathbf{50.56 \pm 0.32}$ & 13.20 & $\mathbf{89.10 \pm 0.31}$ & 2.04 & $\mathbf{84.85 \pm 0.33}$ & 1.78 \\
\bottomrule
\end{tabular}
\end{center}
\vspace{-0.3cm}
\end{table}

\begin{figure}[t]
\begin{center}
\includegraphics[width=0.9\columnwidth]{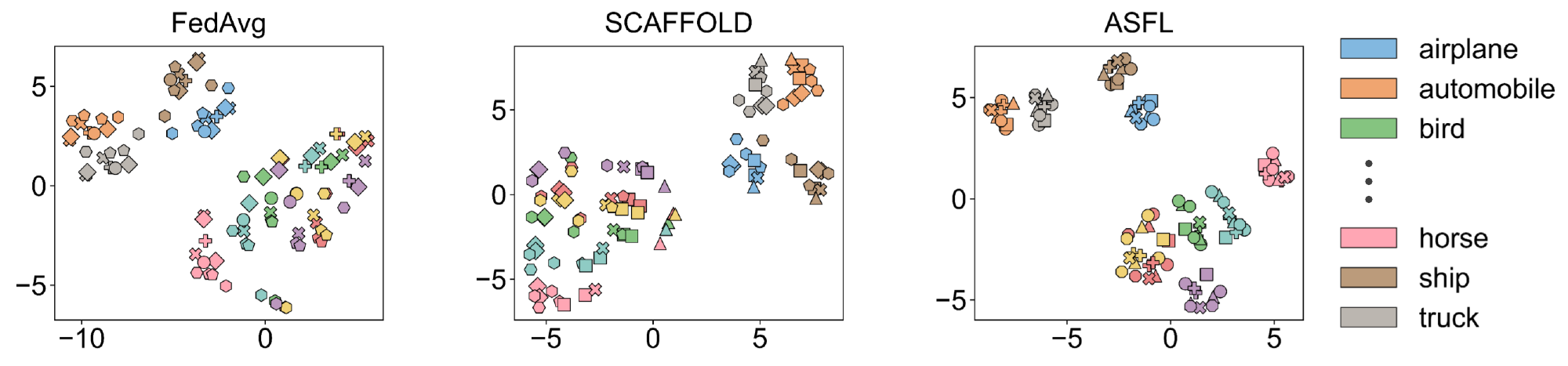}
\vspace{-0.2cm}
\end{center}
\caption{
t-SNE visualization of per-client class centroids based on penultimate features from 10 clients on CIFAR-10. Colors denote classes and marker shapes indicate clients.
}
\label{fig:tsne}
\vspace{-0.4cm}
\end{figure}

\subsection{Comparative study}
\textbf{FL benchmarks} --
Table~\ref{tab:comparison} compares ASFL with SOTA FL methods, where PixelDP is not applied to the transmitted activations to isolate the effect of activation sharing on optimization efficiency.
Drift-mitigation methods, including SCAFFOLD, FedProto, MOON, FedPVR, and FedVARP, improve accuracy over FedAvg on most benchmarks, but SCAFFOLD nearly doubles communication cost through control variate aggregation, and the other four offer relatively limited gains.
SplitFed achieves higher accuracy by exchanging activations at every update step, but at the cost of substantially increased communication cost.
In contrast, ASFL attains higher accuracy across all benchmarks while introducing only marginal additional communication cost.
E.g., on CIFAR-10, ASFL improves accuracy by approximately $7.5\%$ over FedAvg while exchanging 8 MB more activations per round.
Extra experimental analyses supporting these findings are provided in Appendix~\ref{appendix:supplementary}.

\textbf{t-SNE analysis} --
Figure~\ref{fig:tsne} provides a t-SNE visualization of per-client class centroids in the latent space. 
Under FedAvg, representations of the same class are widely scattered, whereas SCAFFOLD shows only limited concentration. 
In contrast, ASFL forms noticeably tighter and more coherent clusters across different clients, suggesting superior cross-client representation alignment. 
Overall, these results demonstrate that activation sharing effectively alleviates data heterogeneity by promoting consistent representations, directly leading to the higher validation accuracy shown in Table~\ref{tab:comparison}.

\subsection{Per-client round wall-clock time}
\label{app:local_training_time}

We compare the per-client round wall-clock time for local computation and report the results in Table~\ref{tab:local_training_time}, normalized to FedAvg on each dataset.
All methods use the same total local update budget per communication round (See Appendix~\ref{appendix:main-comparison-protocol}).
ASFL divides this budget equally between tail-round updates and standard local updates ($\tau'=\tau$), whereas methods without a tail round devote the entire budget to standard local training.
We synchronize CUDA at both the start and the end of each timed region.
For ASFL, the measured local computation time includes both the training loop and the forward passes used to generate shared activations, which we time separately and sum.

\begin{table}[ht]
\vspace{-0.2cm}
\caption{Per-client round wall-clock time for local computation, normalized to FedAvg on each dataset. Bars share a $0$--$4\times$ scale, and vertical ticks mark the FedAvg baseline ($1\times$).}
\label{tab:local_training_time}
\begin{center}
\setlength{\tabcolsep}{3pt}
\scriptsize
\definecolor{timebaraccent}{HTML}{168A87}
\newcommand{\timebar}[2][gray!55]{%
  \begingroup
  \setlength{\unitlength}{3mm}%
  \makebox[10mm][r]{$#2\times$}\hspace{1mm}%
  \begin{picture}(4,0.7)(0,-0.15)
    \put(0,0){\color{gray!12}\rule{4\unitlength}{1.1mm}}
    \put(0,0){\color{#1}\rule{#2\unitlength}{1.1mm}}
    \put(1,-0.1){\color{gray!75}\line(0,1){0.55}}
  \end{picture}%
  \endgroup
}
\renewcommand{\arraystretch}{1.2}
\begin{tabular}{lcccc}
\toprule
\textbf{Method} & \textbf{CIFAR-10} & \textbf{CIFAR-100} & \textbf{SVHN} & \textbf{FEMNIST} \\
\midrule
FedProto & \timebar{1.02} & \timebar{1.04} & \timebar{1.04} & \timebar{1.26} \\
MOON & \timebar{1.38} & \timebar{1.34} & \timebar{1.32} & \timebar{1.84} \\
SCAFFOLD & \timebar{2.06} & \timebar{1.77} & \timebar{1.43} & \timebar{2.27} \\
FedPVR & \timebar{1.00} & \timebar{1.04} & \timebar{1.01} & \timebar{1.10} \\
FedVARP & \timebar{1.07} & \timebar{1.02} & \timebar{1.03} & \timebar{1.18} \\
FedImpro & \timebar{1.38} & \timebar{1.92} & \timebar{1.17} & \timebar{3.67} \\
SplitFed & \timebar{1.00} & \timebar{1.00} & \timebar{1.02} & \timebar{1.19} \\
\textbf{ASFL (Ours)} & {\boldmath\timebar[timebaraccent]{0.69}} & {\boldmath\timebar[timebaraccent]{0.77}} & {\boldmath\timebar[timebaraccent]{0.67}} & {\boldmath\timebar[timebaraccent]{0.94}} \\
\bottomrule
\end{tabular}
\end{center}
\vspace{-0.2cm}
\end{table}

ASFL achieves the lowest local computation wall-clock time on all four datasets under the same total local update budget, even after including shared activation generation.
The tail round contributes to this reduction through partial training: its updates train only part of the model, reducing backward-pass computation per update.
Thus, allocating part of the fixed local update budget to the tail round provides a computational advantage while preserving the total number of local updates.

\begin{table}[h]
\caption{Performance of ASFL with Pixel DP in accuracy (\%). The radius of threat model, $R=0.1$.}
\label{tab:noise}
\begin{center}
\scriptsize
\begin{tabular}{clcccc}  
    \toprule
    \textbf{Noise type} & \textbf{Noise level} & \textbf{CIFAR-10} & \textbf{CIFAR-100} & \textbf{SVHN} & \textbf{FEMNIST}   \\
    \midrule
    \multirow{4}{*}{Laplace} & $\epsilon=2.0$ & $74.69 \pm 0.97$ & $49.09 \pm 0.59$ & $88.49 \pm 0.34$ &  $85.84 \pm 0.19$  \\
    & $\epsilon=1.0$ & $74.42 \pm 0.97$ & $47.85 \pm 0.58$ & $88.88 \pm 0.40$ &  $85.76 \pm 0.19$  \\
    & $\epsilon=0.5$ & $74.14 \pm 0.47$ & $47.30 \pm 0.82$ & $88.80 \pm 0.38$ &  $85.29 \pm 0.19$  \\
    & $\epsilon=0.2$ & $69.83 \pm 0.97$ & $45.11 \pm 1.02$ & $87.07 \pm 0.41$ &  $84.77 \pm 0.31$  \\
    \midrule
    \multirow{3}{*}{Gaussian} & $\delta=0.05$ & $73.02 \pm 0.84$ & $46.18 \pm 0.39$ & $87.88 \pm 0.59$ &  $85.27 \pm 0.08$  \\
    & $\delta=0.01$ & $71.07 \pm 0.64$ & $45.56 \pm 1.02$ & $86.67 \pm 0.80$ &  $84.88 \pm 0.30$  \\
    & $\delta=0.001$ & $68.55 \pm 0.70$ & $45.19 \pm 0.05$ & $85.95 \pm 0.81$ &  $84.96 \pm 0.08$  \\
    \bottomrule
\end{tabular}
\end{center}
\vspace{-0.6cm}
\end{table}

\subsection{ASFL with differential privacy}
We evaluate the privacy--utility trade-off of ASFL with PixelDP under Laplace and Gaussian noise across varying privacy budgets (Table~\ref{tab:noise}).
As expected, stronger privacy guarantees (e.g., smaller $\epsilon$ for Laplace or smaller $\delta$ for Gaussian) lead to larger injected noise, which in turn degrades model performance. 
However, even under moderate noise levels, ASFL continues to outperform other FL methods shown in Table~\ref{tab:comparison} that do not even employ differential privacy techniques.
This highlights that ASFL provides a superior privacy-utility trade-off, making it a highly practical strategy for privacy-preserving FL.
More extensive empirical results are provided in Appendix~\ref{appendix:privacy}.

\begin{figure}[t]
\begin{center}
\includegraphics[width=1.0\columnwidth]{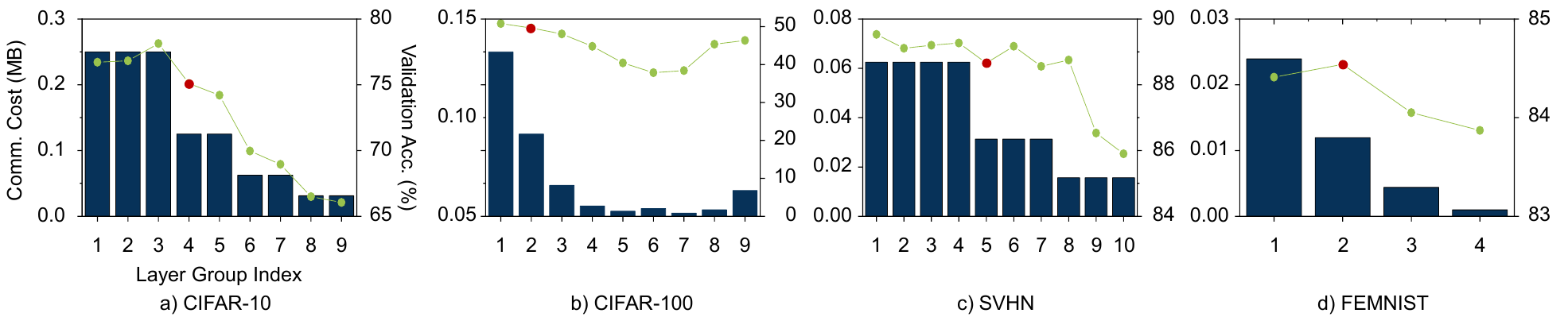}
\vspace{-0.7cm}
\end{center}
\caption{
    Ablation study on selection of activation sharing layers.
    The left and right y-axes represent the communication cost and the validation accuracy, respectively. The x-axis denotes the layer index.
}
\label{fig:ablation}
\vspace{-0.4cm}
\end{figure}

\subsection {Ablation study}

\textbf{Effect of exchange layer selection} -- Figure~\ref{fig:ablation} shows the communication cost and model accuracy under different exchange layer settings.
The communication cost is calculated by adding the size of model parameters and activations.
The red dot indicates the layer automatically selected by our proposed method.
When activations are shared at layers closer to the output, both the communication cost and accuracy tend to decrease.
One exception is CIFAR-100, where both communication cost and accuracy increase at layers $8$ and $9$.
Nevertheless, our method consistently identifies a practical layer that significantly reduces communication cost while marginally degrading accuracy.

\textbf{Effect of the number of exchanged activations} --
We conduct extra experiments with varying numbers of exchanged activations.
Due to the limited space, we present the results in Appendix~\ref{appendix:activation}.

\section {Conclusion} \label{sec:conclusion}
We propose ASFL, an activation sharing framework for FL that mitigates data heterogeneity across clients.
Our theoretical analysis provides insights into its convergence behavior, and extensive experiments demonstrate its effectiveness across diverse FL benchmarks.
While activation sharing may raise privacy concerns, we investigate this aspect using PixelDP, a recently proposed representative DP method, and provide empirical insights.
Overall, our results suggest that activation sharing is a practical and effective approach for achieving fast and accurate FL across a wide range of applications.
As the present study focuses on a practical gossip-based communication pattern, future work could extend this framework by aggregating activations at a central server and exploring alternative strategies for their utilization.

\clearpage

\bibliographystyle{unsrt}
\bibliography{references}

\clearpage
\appendix

\section {Appendix}

The appendix is structured as follows:
\begin{itemize}
\item Section~\ref{appendix:settings} summarizes experimental settings corresponding to all the experimental results reported in the main manuscript
\item Section~\ref{appendix:theory} provides detailed proofs of the proposed lemma, theorem, and corollary.
\item Section~\ref{appendix:communication-pattern} provides additional details on the gossip-style communication pattern used to transfer latent-space information among clients.
\item Section~\ref{appendix:privacy} presents additional experimental results and analyses regarding privacy.
\item Section~\ref{appendix:supplementary} provides supplementary experimental analyses that further support the empirical findings reported in the main manuscript.
\end{itemize}

\subsection{Experimental details}
\label{appendix:settings}

This section summarizes the experimental details required to reproduce and interpret the results reported in the main manuscript, including training settings, activation-sharing configurations, cost estimation, and gradient-variance measurement.

\subsubsection{Main comparison protocol}
\label{appendix:main-comparison-protocol}
\begin{table}[ht!]
\caption{Training settings used for the main comparison in Table~\ref{tab:comparison}. Each column corresponds to the dataset--backbone pair used in the main experiments.}
\label{tab:hyperparams}
\begin{center}
\footnotesize
\setlength{\tabcolsep}{4.5pt}
\renewcommand{\arraystretch}{1.08}
\begin{tabular}{lcccc}
    \toprule
    \textbf{Hyperparameter} & \textbf{CIFAR-10} & \textbf{CIFAR-100} & \textbf{SVHN} & \textbf{FEMNIST} \\
    & \textbf{(ResNet18)} & \textbf{(MobileNetV2)} & \textbf{(ResNet20)} & \textbf{(CNN)} \\
    \midrule
    Learning rate & 0.1 & 0.05 & 0.1 & 0.01 \\
    Batch size & 64 & 64 & 32 & 20 \\
    Weight decay & 1e-4 & 5e-4 & 1e-4 & 1e-4 \\
    LR decay rounds & 100, 150 & 100, 150 & 100, 150 & -- \\
    Communication rounds & 200 & 200 & 200 & 200 \\
    Total local update budget & 32 & 80 & 40 & 12 \\
    ASFL tail updates ($\tau'$) & 16 & 40 & 20 & 6 \\
    ASFL standard local updates ($\tau$) & 16 & 40 & 20 & 6 \\
    \bottomrule
\end{tabular}
\end{center}
\end{table}
We use a unified experimental protocol for the main comparison in Table~\ref{tab:comparison}.
Experiments are run on two NVIDIA RTX 4070 GPUs using PyTorch 2.7.0 with CUDA 11.8 and MPI-based distributed execution.
Each run uses a total of 100 clients, with 10 clients sampled uniformly at random in every communication round.
Client data are partitioned using a Dirichlet distribution with concentration parameter $\alpha=0.1$, which creates a strongly non-IID label distribution across clients.
All methods are optimized with mini-batch SGD with momentum 0.9.
Table~\ref{tab:hyperparams} reports the training settings for each dataset--backbone pair.
For ASFL, the local computation budget in each communication round is divided equally between the tail round and the subsequent standard local training.
Specifically, we set $\tau'=\tau$, where $\tau'$ denotes the number of tail-round updates and $\tau$ denotes the number of standard local updates.
For methods without a tail round, the same total local update budget is used entirely for standard local training.
In each communication round, ASFL shares activations corresponding to one mini-batch, regardless of the local update budget.
For the baseline methods, we use the same settings reported in Table~\ref{tab:hyperparams}, except that the learning rate is selected by grid search.

\begin{table}[ht!]
\caption{Method-specific hyperparameters used in the main comparison.}
\label{tab:baseline-hyperparams}
\begin{center}
\footnotesize
\setlength{\tabcolsep}{2pt}
\renewcommand{\arraystretch}{1.08}
\begin{tabular}{llcccc}
    \toprule
    \textbf{Method} & \textbf{Hyperparameter} & \textbf{CIFAR-10} & \textbf{CIFAR-100} & \textbf{SVHN} & \textbf{FEMNIST} \\
    & & \textbf{(ResNet18)} & \textbf{(MobileNetV2)} & \textbf{(ResNet20)} & \textbf{(CNN)} \\
    \midrule
    FedProx & $\mu$ (proximal coefficient)
    & 0.001 & 0.001 & 0.001 & 0.001 \\
    FedProto & $\lambda$ (prototype regularization coefficient)
    & 1.0 & 1.0 & 1.0 & 1.0 \\
    MOON & $\mu$ (contrastive coefficient), $T$ (temperature)
    & 1.0, 0.5 & 1.0, 0.5 & 0.1, 0.5 & 1.0, 0.5 \\
    \bottomrule
\end{tabular}
\end{center}
\end{table}

Method-specific hyperparameters for FedProx, FedProto, and MOON are summarized in Table~\ref{tab:baseline-hyperparams}.
FedProx uses the proximal coefficient $\mu$, FedProto uses the prototype regularization coefficient $\lambda$, and MOON uses the contrastive coefficient $\mu$ and temperature $T$.
For FedProto, $\lambda$ controls the strength of the regularization term that encourages local prototypes to align with their corresponding global prototypes.

\subsubsection{Activation-sharing positions}
ASFL defines candidate activation-sharing positions at the level of backbone-specific layer blocks, using the same indexing convention as the x-axis of Figure~\ref{fig:cv}.
Each layer block is defined as the smallest architectural unit that preserves the backbone's block structure and ends after a nonlinearity.
The candidate position is indexed by the layer block at which the tail-side partial update starts.
For a selected layer block $b$, ASFL forwards a mini-batch through the preceding blocks and exchanges the activation at the input of block $b$.
The selected block is therefore included in the partial training range.
The input boundary is excluded from the candidate set because it corresponds to exchanging the original mini-batch input rather than an intermediate activation.

\begin{table}[ht!]
\caption{Candidate activation-sharing positions for each dataset--backbone pair. The number of candidate positions excludes the input boundary but includes the final classifier position.}
\label{tab:layer-groups}
\begin{center}
\footnotesize
\setlength{\tabcolsep}{6pt}
\begin{tabular}{llcc}
    \toprule
    \textbf{Dataset} & \textbf{Backbone} & \textbf{No. of candidate positions} & \textbf{Selected layer block} \\
    \midrule
    CIFAR-10 & ResNet18 & 9 & 4 \\
    CIFAR-100 & MobileNetV2 & 9 & 2 \\
    SVHN & ResNet20 & 10 & 5 \\
    FEMNIST & CNN & 4 & 2 \\
    \bottomrule
\end{tabular}
\end{center}
\end{table}

Table~\ref{tab:layer-groups} summarizes the candidate positions and selected layer blocks for each dataset--backbone pair.
The selected layer block is chosen according to the cost--variance trade-off shown in Figure~\ref{fig:cv}.

\subsubsection{Activation-sharing quantity}
The number of shared batches controls the size of the peer activation pool exchanged during the tail round.
Table~\ref{tab:batches} summarizes the effect of this quantity on accuracy and communication cost.
In this ablation, the activation-sharing position is fixed to the selected layer block for each dataset--backbone pair, and only the number of shared batches is varied.
One shared batch denotes the activations generated from one mini-batch, with the mini-batch size set according to Table~\ref{tab:hyperparams}.

As described in the communication-efficient activation sharing scheme in Algorithm~\ref{alg:LIS}, the received peer activation information $I_j$ is reused when constructing the random mini-batches for the tail round.
Thus, the number of shared batches in Table~\ref{tab:batches} controls the amount of peer activation information exchanged through the directed gossip communication pattern, rather than the total number of tail-round mini-batches used to update the tail subnetwork.

\subsubsection{Communication and computational cost}
For each candidate activation-sharing position, we compute the communication cost as the number of activation elements exchanged at that position. Specifically, for a candidate layer block $b$, the transmitted activation is the input to block $b$, and its communication cost is measured as the product of its tensor dimensions.

For the computational cost, we estimate the backpropagation cost of the tail subnetwork starting from each candidate layer block.
The estimate focuses on the convolutional and linear layers included in the tail subnetwork, which are the main parameterized computational components of the backbone.
We use this layer-wise estimate to compare the relative cost of tail-side partial training across candidate positions. Figure~\ref{fig:cv} visualizes the resulting normalized cost curve together with the cumulative gradient-variance curve for layer-block selection.

\subsubsection{Gradient variance measurement}
This subsection explains how we estimate the gradient-variance term in the cost--variance trade-off of Section~\ref{sec:tradeoff}.
For the variance-reduction curve $\mathcal{V}(k)=\sum_{l=1}^{k-1}\mathrm{Var}(g^l)$, we estimate each layer-block term under the client-sampling setting in Section~\ref{appendix:main-comparison-protocol}, using one local update step on each active client $i \in \mathcal{A}$.

For each active client, we compute per-sample gradients of the cross-entropy loss with respect to the trainable parameters of each layer block.
The parameters in a block are vectorized into a single gradient vector, and the block-wise variance is measured as the average squared deviation of the per-sample gradients from their mini-batch mean:
\begin{align*}
    \mathrm{Var}_i(g^l)
    = \frac{1}{B} \sum_{j=1}^{B} \left\| g_{i,j}^{l} - \bar{g}_{i}^{l} \right\|_2^2.
\end{align*}
This corresponds to the trace of the per-sample gradient covariance for layer block $l$ on client $i$.
We average this quantity over the active clients to obtain $\mathrm{Var}(g^l)$. We then construct $\mathcal{V}(k)$ as a prefix sum over the candidate layer blocks before block $k$, excluding the input boundary, and apply min--max normalization for the layer selection shown in Figure~\ref{fig:cv}.
\clearpage
\subsection {Convergence analysis of partial model updates with activation sharing} \label{appendix:theory}

To make this appendix self-contained, we provide the problem definition and assumptions that are provided in the main text.
We consider non-convex and smooth optimization problems:
\begin{align}
    \min_{\mathbf{x} \in \mathbb{R}^d} F(\mathbf{x}) := \frac{1}{m} \sum_{i=1}^m F_i(\mathbf{x}), \nonumber
\end{align}
where $F_i(\mathbf{x}) = \mathbb{E}_{\xi_i \sim D_i}[f(\mathbf{x}, \xi_i)]$ is the local loss function associated with the local data distribution $D_i$ of client $i$ and $m$ is the number of clients.
Our analysis is based on the following assumptions.

\textbf{Assumption 1} (Lipschitz continuity).
    \textit{There exists a constant $\mathcal{L}>0$, such that $\| \nabla F_i(\mathbf{a}) - \nabla F_i(\mathbf{b}) \| \leq \mathcal{L} \| \mathbf{a} - \mathbf{b} \|, \forall \mathbf{a},\mathbf{b} \in \mathbb{R}^d, \textrm{ and } i\in[m]$.
    }

\textbf{Assumption 2} (Unbiased local gradients).
    \textit{The local gradient estimator is unbiased such that $\mathbb{E}_{\xi_i \sim D_i}[\nabla f(\mathbf{a}, \xi_i)] = \nabla F_i(\mathbf{a}), \forall i \in [m]$.
    }

\textbf{Assumption 3} (Bounded local and global variances).
    \textit{There exist two constants $\sigma_x > 0$ and $\sigma_G > 0$, such that the local gradient variance is bounded by $\mathbb{E}[\| \nabla f(\mathbf{x}, \xi_i) - \nabla F_i(\mathbf{x}) \|^2] \leq \sigma_x^2, \forall i \in [m]$, and the global variability is bounded by $\mathbb{E}\left[ \| \nabla F_i(\mathbf{x}) - \nabla F(\mathbf{x}) \|^2 \right] \leq \sigma_G^2, \forall i \in [m]$.
    }

\textbf{Assumption 4} (Bounded local variance of a partial model).
    \textit{There exists a constant $\sigma_z > 0$ such that the local gradient variance is bounded by $\mathbb{E}[\| \nabla f(\mathbf{z}, \xi_i) - \nabla F_i(\mathbf{z}) \|^2] \leq \sigma_z^2, \forall i \in [m]$.
    }

Here, we consider client $i$'s $\tau'$ local updates in a tail round use not only its local activations but also remote activations received from a random client $k$.
Without loss of generality, in the tail round, a fraction $\alpha$ of the data consists of activations from local data, while the remaining fraction $1 - \alpha$ consists of activations from remote data.
Due to the linearity of expectation over the mini-batch sampling, the expected gradient with respect to $z$, computed from random mini-batches of these activations, is given by
\begin{align}
\mathbb{E}[ \tilde{p}_{t,j}^i ] = \alpha \nabla_z F_i(\mathbf{x}_{t,j}^i) + (1 - \alpha)\nabla_z F_k(\mathbf{x}_{t,j}^i),
\end{align}
where the expectation is taken over the sampled client $k$ and the mini-batch sampling.
Thus, the global update at round $t$ across $a$ randomly activated clients is defined as 
\begin{align}
    \tilde{\Delta}_t := \frac{1}{a} \sum_{i \in \mathcal{A}} \left( \sum_{j=0}^{\tau' - 1} \tilde{p}_{t,j}^i + \sum_{j=0}^{\tau - 1} g_{t,j}^i \right),
\end{align}
where the first summation denotes the local $\tau'$ updates in the tail round while the second summation denotes the local $\tau$ updates in the full round.

\begin{lemma}[local full-batch gradient bound]
Under Assumption $1\sim 4$, the following condition holds:
    \begin{align}
        \mathbb{E}\left[ \left\| \sum_{i\in \mathcal{A}} \sum_{j=0}^{\tau' - 1} \nabla_z F_i(\mathbf{x}_{t,j}^i) \right\|^2 \right] &\leq 4 a\tau' \mathcal{L}^2 \sum_{i\in \mathcal{A}} \sum_{j=0}^{\tau' - 1} \mathbb{E}\left[\left\| \mathbf{x}_{t,j}^i - \mathbf{x}_t \right\|^2 \right] \nonumber \\
        &\qquad + 4 a^2 (\tau')^2 \sigma_z^2 + 2a^2(\tau')^2 \mathbb{E}\left[\left\| \nabla_z F (\mathbf{x}_t) \right\|^2 \right].
    \end{align} \label{lemma:local}
\end{lemma}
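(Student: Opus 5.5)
We prove Lemma~\ref{lemma:local} by a standard add-and-subtract decomposition of each local full-batch gradient around the global model $\mathbf{x}_t$, followed by Jensen/Cauchy--Schwarz. For each $i\in\mathcal{A}$ and $j<\tau'$ we write
\begin{align}
\nabla_z F_i(\mathbf{x}_{t,j}^i) = \left(\nabla_z F_i(\mathbf{x}_{t,j}^i) - \nabla_z F_i(\mathbf{x}_t)\right) + \left(\nabla_z F_i(\mathbf{x}_t) - \nabla_z F(\mathbf{x}_t)\right) + \nabla_z F(\mathbf{x}_t). \nonumber
\end{align}
Summing over $i$ and $j$ and applying $\|u+v\|^2\le 2\|u\|^2+2\|v\|^2$ twice, we group the terms so that the drift term and the heterogeneity term receive factor $4$ while the last term receives factor $2$. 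This yields $4\mathbb{E}\|\sum(\text{drift})\|^2 + 4\mathbb{E}\|\sum(\text{heterogeneity})\|^2 + 2\mathbb{E}\|a\tau'\nabla_z F(\mathbf{x}_t)\|^2$. The last piece is exactly $2a^2(\tau')^2\mathbb{E}\|\nabla_z F(\mathbf{x}_t)\|^2$.

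For the drift piece we use Jensen over the $a\tau'$ summands, $\|\sum_{n=1}^{N} u_n\|^2\le N\sum_n\|u_n\|^2$ with $N=a\tau'$. Then $\|\nabla_z F_i(\mathbf{x}_{t,j}^i)-\nabla_z F_i(\mathbf{x}_t)\|\le\|\nabla F_i(\mathbf{x}_{t,j}^i)-\nabla F_i(\mathbf{x}_t)\|$, since the $z$-block of a gradient is a coordinate projection. Assumption~\ref{asm:smooth} then gives $\mathcal{L}^2\|\mathbf{x}_{t,j}^i-\mathbf{x}_t\|^2$, producing the first term $4a\tau'\mathcal{L}^2\sum_i\sum_j\mathbb{E}\|\mathbf{x}_{t,j}^i-\mathbf{x}_t\|^2$.

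The heterogeneity piece is where the stated bound carries $4a^2(\tau')^2\sigma_z^2$ rather than a $\sigma_G^2$ term, and this is the step I expect to be the main obstacle. The plan is to apply Jensen again, $\|\sum\|^2\le a\tau'\sum\|\cdot\|^2$, which leaves $a^2(\tau')^2$ times a per-client deviation of the $z$-gradient. That per-client deviation must then be bounded by $\sigma_z^2$ using Assumption~\ref{asm:var_z}, reading $\nabla_z F_i(\mathbf{x}_t)-\nabla_z F(\mathbf{x}_t)$ as controlled by the partial-model variance in the tail round. The justification is that in the tail round the $z$-gradient is taken over the mixed local/remote activation pool, whose spread is what Assumption~\ref{asm:var_z} bounds. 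If this identification is not acceptable, the fallback is to bound this piece by $\sigma_G^2$ via Assumption~\ref{asm:var}, projected to the $z$-block; the constant $4a^2(\tau')^2$ stays the same and only the variance symbol changes. I would make the choice consistent with how the term is consumed in Lemma~\ref{lemma:lemma2_asfl}, which does display a $\sigma_z^2$ coefficient. Combining the three pieces gives the claimed inequality.
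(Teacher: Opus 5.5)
You follow the paper's route: the same three-way add-and-subtract around $\mathbf{x}_t$, the same $4/4/2$ weights from two uses of $\|u+v\|^2\le 2\|u\|^2+2\|v\|^2$, and Jensen plus smoothness for the drift piece. The paper applies Jensen over $i$ and then over $j$, which amounts to your single Jensen over $a\tau'$ summands, and it evaluates the mean piece exactly as you do. Your bookkeeping of the heterogeneity piece, $4a\tau'\sum_{i\in\mathcal{A}}\sum_{j}\mathbb{E}\|\nabla_z F_i(\mathbf{x}_t)-\nabla_z F(\mathbf{x}_t)\|^2 = 4a(\tau')^2\sum_{i\in\mathcal{A}}\mathbb{E}\|\nabla_z F_i(\mathbf{x}_t)-\nabla_z F(\mathbf{x}_t)\|^2$, is what actually produces the stated $(\tau')^2$. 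The paper's display loses a factor $\tau'$ there and ends at $4a^2\tau'\sigma_z^2$; its ``mean zero, independent across $i$'' remark cannot be used once Jensen over $i$ has been applied.

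The genuine gap is the step you flagged yourself: bounding $\mathbb{E}\|\nabla_z F_i(\mathbf{x}_t)-\nabla_z F(\mathbf{x}_t)\|^2$ by $\sigma_z^2$. Assumption 4 controls only within-client sampling noise, $\nabla f(\mathbf{z},\xi_i)-\nabla F_i(\mathbf{z})$ with $\xi_i\sim D_i$. The quantity here is a difference of full-batch gradients of two different objectives, i.e.\ inter-client heterogeneity, which only the $\sigma_G$ part of Assumption 3 controls. Your appeal to the mixed local/remote activation pool does not apply. The left-hand side contains $\nabla_z F_i$, not the mixed estimator $\tilde{p}_{t,j}^i$, so mixing never enters this lemma. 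Even for $\tilde{p}$, the paper's mixed-estimator lemma shows heterogeneity reappearing as $4\alpha(1-\alpha)\sigma_G^2$, not being absorbed into $\sigma_z^2$. The paper's proof makes the same substitution without justification, so matching it does not close the gap.

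The step cannot be repaired, because the inequality with $\sigma_z^2$ does not follow from Assumptions 1--4. Take the following instance:
\begin{itemize}
\item $\tau'=1$, so only $j=0$ occurs, $\mathbf{x}_{t,0}^i=\mathbf{x}_t$, and the drift term vanishes;
\item $a=1$ and $\mathbf{x}_t=0$;
\item two clients with noiseless gradients and $F_{1,2}(\mathbf{x})=\tfrac12\|\mathbf{x}\mp c\,\mathbf{e}\|^2$, where $\mathbf{e}$ is a unit vector in the $\mathbf{z}$-block.
\end{itemize}
Assumptions 1--3 hold with $\mathcal{L}=1$ and $\sigma_G=c$, and Assumption 4 holds for every $\sigma_z>0$. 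Yet $\nabla_z F(\mathbf{x}_t)=0$, the left-hand side equals $c^2$, and the right-hand side equals $4\sigma_z^2$. The inequality therefore fails as soon as $c>2\sigma_z$. More generally, the right-hand side does not depend on the heterogeneity scale, while the left-hand side grows like $c^2$ whenever the sampled gradients do not cancel.

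What your argument actually proves is your fallback, with $4a^2(\tau')^2\sigma_G^2$ in place of $4a^2(\tau')^2\sigma_z^2$. That is a different, correct inequality. It is not a free choice of variance symbol to be made according to how the framework lemma consumes the term. Adopting it means the part of the $\sigma_z^2$ coefficient in the framework lemma and the theorem that originates here must become a $\sigma_G^2$ term.
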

\begin{proof}
\begin{align}
    \mathbb{E}\left[ \left\| \sum_{i\in \mathcal{A}} \sum_{j=0}^{\tau' - 1} \nabla_z F_i(\mathbf{x}_{t,j}^i) \right\|^2 \right] &= \mathbb{E}\left[ \left\| \sum_{i\in \mathcal{A}} \sum_{j=0}^{\tau' - 1} \nabla_z F_i(\mathbf{x}_{t,j}^i) - \nabla_z F (\mathbf{x}_t) + \nabla_z F (\mathbf{x}_t) \right\|^2 \right] \nonumber \\
    &\leq 2 \mathbb{E}\left[\left\| \sum_{i\in \mathcal{A}} \sum_{j=0}^{\tau' - 1} \nabla_z F_i(\mathbf{x}_{t,j}^i) - \nabla_z F (\mathbf{x}_t) \right\|^2 \right] + 2 \mathbb{E}\left[\left\| \sum_{i\in \mathcal{A}} \sum_{j=0}^{\tau' - 1} \nabla_z F (\mathbf{x}_t) \right\|^2 \right] \nonumber \\
    &\leq 2 a \sum_{i\in \mathcal{A}} \mathbb{E}\left[\left\| \sum_{j=0}^{\tau' - 1} \nabla_z F_i(\mathbf{x}_{t,j}^i) - \nabla_z F (\mathbf{x}_t) \right\|^2 \right] + 2a^2(\tau')^2 \mathbb{E}\left[\left\| \nabla_z F (\mathbf{x}_t) \right\|^2 \right] \label{eq:indep} \\
    &= 2 a \sum_{i\in \mathcal{A}} \mathbb{E}\left[\left\| \sum_{j=0}^{\tau' - 1} \left( \nabla_z F_i(\mathbf{x}_{t,j}^i) - \nabla_z F_i(\mathbf{x}_t) + \nabla_z F_i(\mathbf{x}_t) - \nabla_z F (\mathbf{x}_t) \right) \right\|^2 \right] \nonumber \\
    &\qquad + 2a^2(\tau')^2 \mathbb{E}\left[\left\| \nabla_z F (\mathbf{x}_t) \right\|^2 \right] \nonumber \\
    &= 4 a \tau' \sum_{i\in \mathcal{A}} \sum_{j=0}^{\tau' - 1} \mathbb{E}\left[\left\| \nabla_z F_i(\mathbf{x}_{t,j}^i) - \nabla_z F_i(\mathbf{x}_t) \right\|^2 \right] \nonumber \\
    &\qquad + 4 a \sum_{i\in \mathcal{A}} \sum_{j=0}^{\tau' - 1} \mathbb{E}\left[\left\| \nabla_z F_i(\mathbf{x}_t) - \nabla_z F (\mathbf{x}_t) \right\|^2 \right] \label{eq:zeronorm} \\
    &\qquad + 2a^2(\tau')^2 \mathbb{E}\left[\left\| \nabla_z F (\mathbf{x}_t) \right\|^2 \right] \nonumber \\
    &= 4 a\tau' \mathcal{L}^2 \sum_{i\in \mathcal{A}} \sum_{j=0}^{\tau' - 1} \mathbb{E}\left[\left\| \mathbf{x}_{t,j}^i - \mathbf{x}_t \right\|^2 \right] \nonumber \\
    &\qquad + 4 a^2 \tau' \sigma_z^2 + 2a^2(\tau')^2 \mathbb{E}\left[\left\| \nabla_z F (\mathbf{x}_t) \right\|^2 \right], \nonumber
\end{align}
where (\ref{eq:zeronorm}) holds because $\nabla_z F_i(\mathbf{x}_t) - \nabla_z F(\mathbf{x})_t)$ has a mean of zero and is independent across $i$.
\end{proof}

\begin{lemma}[Variance of Mixed Gradient Estimator]
Let $\tilde{p}_i$ be a stochastic gradient defined as
\begin{align}
    \tilde{p}^i =
        \begin{cases}
            p^i & \text{with probability } \alpha, \\
            p^k & \text{with probability } 1 - \alpha, 
        \end{cases} \nonumber
\end{align}
where $p^i$ and $p^k$ are stochastic gradients satisfying
\begin{align}
\mathbb{E}[p^i] = \nabla_z F_i(\mathbf{x}), \quad \mathbb{E}[p^k] = \nabla_z F_k(\mathbf{x}).    
\end{align}
Assume that $\mathrm{Var}(p^i), \mathrm{Var}(p^k) \le \sigma_z^2$.
Then, the variance of $\tilde{p}_i$ satisfies
\[
\mathrm{Var}(\tilde{p}_i)
\le
\sigma_z^2 + 4\alpha(1-\alpha)\sigma_G^2.
\]
\end{lemma}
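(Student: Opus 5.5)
We will use the law of total variance, conditioning on which source the mixed estimator draws from. Let $S\in\{i,k\}$ be the Bernoulli selector, with $\Pr(S=i)=\alpha$, and write $\mu_i:=\nabla_z F_i(\mathbf{x})$ and $\mu_k:=\nabla_z F_k(\mathbf{x})$. Then
\begin{align}
\mathrm{Var}(\tilde{p}_i) = \mathbb{E}\left[\mathrm{Var}(\tilde{p}_i \mid S)\right] + \mathrm{Var}\left(\mathbb{E}[\tilde{p}_i\mid S]\right). \nonumber
\end{align}

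\textbf{Within-source term.} Conditioned on $S$, $\tilde{p}_i$ is either $p^i$ or $p^k$, and each has variance at most $\sigma_z^2$ (this is Assumption 4). So the first term is at most $\alpha\sigma_z^2+(1-\alpha)\sigma_z^2=\sigma_z^2$.

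\textbf{Between-source term.} The conditional mean takes the value $\mu_i$ with probability $\alpha$ and $\mu_k$ with probability $1-\alpha$. For such a two-point random vector, a direct computation around the mixture mean $\alpha\mu_i+(1-\alpha)\mu_k$ gives
\begin{align}
\mathrm{Var}\left(\mathbb{E}[\tilde{p}_i\mid S]\right) = \alpha(1-\alpha)\|\mu_i-\mu_k\|^2. \nonumber
\end{align}
To verify it, note the deviations are $(1-\alpha)(\mu_i-\mu_k)$ with weight $\alpha$ and $-\alpha(\mu_i-\mu_k)$ with weight $1-\alpha$; summing the weighted squared norms gives $\alpha(1-\alpha)\|\mu_i-\mu_k\|^2$.

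We then insert $\nabla_z F(\mathbf{x})$ and apply $\|a+b\|^2\le 2\|a\|^2+2\|b\|^2$:
\begin{align}
\|\mu_i-\mu_k\|^2\le 2\|\nabla_z F_i(\mathbf{x})-\nabla_z F(\mathbf{x})\|^2+2\|\nabla_z F_k(\mathbf{x})-\nabla_z F(\mathbf{x})\|^2. \nonumber
\end{align}
Each term on the right is at most the corresponding full-gradient deviation, because the $z$-block of a vector has norm no larger than the whole vector. Taking expectations and using the global variability bound of Assumption 3 gives $\mathbb{E}\|\mu_i-\mu_k\|^2\le 4\sigma_G^2$. The between-source term is therefore at most $4\alpha(1-\alpha)\sigma_G^2$.

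\textbf{Main obstacle.} The delicate point is the interpretation of the claim. The bound on $\|\mu_i-\mu_k\|^2$ only holds in expectation over the client draws, not deterministically. So the variance in the statement should be read as also averaging over the random choice of clients $i,k$, consistent with the expectation in Assumption 3. Equivalently, one can assume the deterministic version $\|\nabla F_i-\nabla F\|^2\le\sigma_G^2$ for every $i$, in which case the bound holds pointwise. I would state this convention explicitly and then add the within-source and between-source bounds to obtain $\sigma_z^2+4\alpha(1-\alpha)\sigma_G^2$.
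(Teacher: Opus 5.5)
Your proposal is correct and follows the paper's proof. The paper's add-and-subtract expansion of $\mathbb{E}\|\tilde{p}^i-\mathbb{E}[\tilde{p}^i]\|^2$, with the cross terms vanishing, is exactly your law-of-total-variance split into $\alpha\mathrm{Var}(p^i)+(1-\alpha)\mathrm{Var}(p^k)$ plus $\alpha(1-\alpha)\|\mathbb{E}[p^i]-\mathbb{E}[p^k]\|^2$, followed by the same insertion of $\nabla_z F(\mathbf{x})$ and factor-2 inequality to reach $4\alpha(1-\alpha)\sigma_G^2$; your caveats (the $z$-block norm dominated by the full norm, and Assumption 3 needed pointwise, or else the left side read as $\mathbb{E}_{i,k}[\mathrm{Var}(\tilde{p}^i\mid i,k)]$ rather than the unconditional variance, which would add a between-client term) are steps the paper uses silently.
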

\begin{proof}
\begin{align}
    \mathrm{Var}(\tilde{p}^i) &= \mathbb{E}[ \| \tilde{p}^i - \mathbb{E}[ \tilde{p}^i ] \|^2] \nonumber \\
    &= \alpha \mathbb{E}[\| p^i - \mathbb{E}[\tilde{p}^i] \|^2] + (1 - \alpha) \mathbb{E}[\| p^k - \mathbb{E}[\tilde{p}^i] \|^2] \nonumber \\
    &= \alpha \mathbb{E}[\| p^i - \mathbb{E}[p^i] + \mathbb{E}[p^i] - \mathbb{E}[\tilde{p}^i] \|^2] \nonumber \\
    & \qquad + (1 - \alpha) \mathbb{E}[\| p^k - \mathbb{E}[p^k] + \mathbb{E}[p^k] - \mathbb{E}[\tilde{p}^i] \|^2] \nonumber \\
    &= \alpha \mathbb{E}[\| p^i - \mathbb{E}[p^i] \|^2] + \alpha \| \mathbb{E}[p^i] - \mathbb{E}[\tilde{p}^i] \|^2 + 2\alpha \mathbb{E}[ \langle p^i - \mathbb{E}[p^i], \mathbb{E}[p^i] - \mathbb{E}[\tilde{p}^i] \rangle ]  \nonumber \\
    & \qquad + (1 - \alpha) \mathbb{E}[\| p^k - \mathbb{E}[p^k] \|^2] + (1 - \alpha) \| \mathbb{E}[p^k] - \mathbb{E}[\tilde{p}^i] \|^2 + 2(1 - \alpha) \mathbb{E}[\langle p^k - \mathbb{E}[p^k], \mathbb{E}[p^k] - \mathbb{E}[\tilde{p}^i] \rangle ]\nonumber
\end{align}
Since $\mathbb{E}[p^i - \mathbb{E}[p^i]] = 0$ for all $i \in [N]$, the above variance is simplified as follows.
\begin{align}
    \mathrm{Var}(\tilde{p}^i) &= \alpha \mathbb{E}[\| p^i - \mathbb{E}[p^i] \|^2] + \alpha \| \mathbb{E}[p^i] - \mathbb{E}[\tilde{p}^i] \|^2 \nonumber \\
    & \qquad + (1 - \alpha) \mathbb{E}[\| p^k - \mathbb{E}[p^k] \|^2] + (1 - \alpha) \| \mathbb{E}[p^k] - \mathbb{E}[\tilde{p}^i] \|^2 \nonumber \\
    &= \alpha \mathrm{Var}(p^i) + (1 - \alpha) \mathrm{Var}(p^k) \nonumber \\
    & \qquad + \alpha \| \mathbb{E}[p^i] - \mathbb{E}[\tilde{p}^i] \|^2 + (1 - \alpha) \| \mathbb{E}[p^k] - \mathbb{E}[\tilde{p}^i] \|^2. \label{eq:var}
\end{align}
By the definition of $\mathbb{E}[\tilde{p}^i] = \alpha \mathbb{E}[p^i] + (1 - \alpha)\mathbb{E}[p^k]$, the following equations hold:
\begin{align}
    \mathbb{E}[p^i] - \mathbb{E}[\tilde{p}^i] &= (1 - \alpha)(\mathbb{E}[p^i] - \mathbb{E}[p^k]) \nonumber \\
    \mathbb{E}[p^k] - \mathbb{E}[\tilde{p}^i] &= - \alpha(\mathbb{E}[p^i] - \mathbb{E}[p^k]). \nonumber
\end{align}
By plugging in the above terms, (\ref{eq:var}) can be re-written as follows:
\begin{align}
    \mathrm{Var}(\tilde{p}^i) &= \alpha \mathrm{Var}(p^i) + (1 - \alpha) \mathrm{Var}(p^k) \nonumber \\
    &\qquad + \alpha(1-\alpha)^2 \| \mathbb{E}[p^i] - \mathbb{E}[p^k] \|^2 + (1-\alpha)\alpha^2 \| \mathbb{E}[p^i] - \mathbb{E}[p^k] \|^2 \nonumber \\
    &= \alpha \mathrm{Var}(p^i) + (1 - \alpha) \mathrm{Var}(p^k) + \alpha(1-\alpha) \| \mathbb{E}[p^i] - \mathbb{E}[p^k] \|^2 \nonumber \\
    &= \alpha \mathrm{Var}(p^i) + (1 - \alpha) \mathrm{Var}(p^k) + \alpha(1-\alpha) \| \mathbb{E}[p^i] - \nabla_z F(\mathbf{x}) + \nabla_z F(\mathbf{x}) - \mathbb{E}[p^k] \|^2 \nonumber \\
    &\leq \alpha \mathrm{Var}(p^i) + (1 - \alpha) \mathrm{Var}(p^k) + 2\alpha(1-\alpha) \| \mathbb{E}[p^i] - \nabla_z F(\mathbf{x}) \|^2 + 2\alpha(1-\alpha) \| \mathbb{E}[p^k] - \nabla_z F(\mathbf{x}) \|^2 \nonumber \\
    &\leq \alpha \mathrm{Var}(p^i) + (1 - \alpha) \mathrm{Var}(p^k) + 4\alpha(1-\alpha)\sigma_G^2 \nonumber \\
    &\leq \sigma_z^2 + 4\alpha(1-\alpha)\sigma_G^2 \nonumber
\end{align}
\end{proof}

\begin{lemma}[framework]
Under assumption 1 $\sim$ 3, if the learning rate $\eta \leq \min \{\frac{1}{\tau\mathcal{L}}, \frac{1}{4\tau'\mathcal{L}}\}$, the loss reduction of a single communication round is bounded as follows.
\begin{align}
    \sum_{t=0}^{T-1} \mathbb{E}\left[ \left\| \nabla F(\mathbf{x}_t) \right\|^2 \right] & \leq \frac{2}{\eta \tau}\left( F(\mathbf{x}_0) - \mathbb{E}[F(\mathbf{x}_{T})] \right) \nonumber \\
    &\qquad + \frac{3\mathcal{L}^2}{a\tau} \sum_{t=0}^{T-1} \sum_{i\in \mathcal{A}} \sum_{j=0}^{\tau - 1} \left\| \mathbf{x}_{t,j}^i - \mathbf{x}_t \right\|^2 + \frac{\mathcal{L}^2}{a\tau} \sum_{t=0}^{T-1} \sum_{i\in \mathcal{A}} \sum_{j=0}^{\tau' - 1} \left\| \mathbf{z}_{t,j}^i - \mathbf{z}_t \right\|^2 \nonumber \\
    &\qquad + \frac{2\eta \mathcal{L}T}{a}\sigma_x^2 + \left( 4 + \frac{1}{a}\right) \frac{2\eta \tau' \mathcal{L}T}{\tau} \sigma_z^2 + \frac{4\eta \tau' \mathcal{L}T}{a\tau} (1 + \alpha -2\alpha^2) \sigma_G^2. \nonumber
\end{align}
\end{lemma}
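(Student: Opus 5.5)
We follow the standard one-round descent argument for local SGD. The round-$t$ update is written as $\mathbf{x}_{t+1} = \mathbf{x}_t - \eta \tilde{\Delta}_t$, where $\tilde{\Delta}_t$ collects the $\tau'$ tail-round gradients $\tilde{p}_{t,j}^i$ (these touch only the $\mathbf{z}$ coordinates) and the $\tau$ full-model gradients $g_{t,j}^i$. By $\mathcal{L}$-smoothness (Assumption~\ref{asm:smooth}),
\begin{align}
\mathbb{E}[F(\mathbf{x}_{t+1})] \le F(\mathbf{x}_t) - \eta\,\mathbb{E}\langle \nabla F(\mathbf{x}_t), \tilde{\Delta}_t\rangle + \frac{\eta^2\mathcal{L}}{2}\mathbb{E}\|\tilde{\Delta}_t\|^2 . \nonumber
\end{align}
We then bound the inner-product term and the second-moment term separately. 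Summing over $t=0,\dots,T-1$, telescoping, and dividing by $\eta\tau/2$ will give the $\frac{2}{\eta\tau}(F(\mathbf{x}_0)-\mathbb{E}F(\mathbf{x}_T))$ term.

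\textbf{Inner product.} Using Assumption~\ref{asm:unbiased} and the mixed-mean formula $\mathbb{E}[\tilde{p}_{t,j}^i] = \alpha\nabla_z F_i + (1-\alpha)\nabla_z F_k$, we replace the stochastic gradients by conditional means. For the full-round part we apply the polarization identity $-\langle a,b\rangle = \frac12(\|a-b\|^2 - \|a\|^2 - \|b\|^2)$ with $a=\nabla F(\mathbf{x}_t)$ and $b=\frac1a\sum_i\nabla F_i(\mathbf{x}_{t,j}^i)$. This produces $-\frac{\eta\tau}{2}\|\nabla F(\mathbf{x}_t)\|^2$, a negative $\|b\|^2$ term we keep to absorb part of the second moment, and a drift term bounded via smoothness by $\frac{\mathcal{L}^2}{a}\sum_i\|\mathbf{x}_{t,j}^i-\mathbf{x}_t\|^2$. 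For the tail part we restrict to the $\mathbf{z}$-block. We split $\nabla_z F_i(\mathbf{z}_{t,j}^i) = \nabla_z F_i(\mathbf{z}_t) + [\nabla_z F_i(\mathbf{z}_{t,j}^i)-\nabla_z F_i(\mathbf{z}_t)]$. The first piece averages over the random $i,k$ to $\nabla_z F(\mathbf{x}_t)$ and contributes a nonpositive term, which we drop. The second piece is handled by Young's inequality, giving the $\frac{\mathcal{L}^2}{a\tau}\sum\|\mathbf{z}_{t,j}^i-\mathbf{z}_t\|^2$ discrepancy term plus a small multiple of $\|\nabla F(\mathbf{x}_t)\|^2$ that is absorbed on the left. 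This is why the final constant is $2/(\eta\tau)$ rather than $1/(\eta\tau)$.

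\textbf{Second moment.} We bound $\|\tilde{\Delta}_t\|^2 \le 2\|\frac1a\sum_i\sum_j \tilde{p}_{t,j}^i\|^2 + 2\|\frac1a\sum_i\sum_j g_{t,j}^i\|^2$. For the full-round block, the usual mean/noise split gives $\frac{\tau}{a}\sigma_x^2$ from the independent noise across clients and steps. The mean part is cancelled by the negative $\|b\|^2$ term, provided $\eta\le 1/(\tau\mathcal{L})$. For the tail block we invoke the Variance of Mixed Gradient Estimator lemma, so that the noise variance is at most $\sigma_z^2 + 4\alpha(1-\alpha)\sigma_G^2$ per sample, divided by $a$ across clients. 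We invoke Lemma~\ref{lemma:local} for the mean part, which yields $\sigma_z^2$, drift, and $\|\nabla_z F(\mathbf{x}_t)\|^2$ contributions. The mean part also carries the $(1-\alpha)(\nabla_z F_k-\nabla_z F_i)$ bias, which we bound by $\sigma_G^2$ terms. Under $\eta\le 1/(4\tau'\mathcal{L})$, the $\|\nabla_z F\|^2$ piece is dominated by $\frac{\tau}{4}\|\nabla F(\mathbf{x}_t)\|^2$ and moves to the left. The drift piece from Lemma~\ref{lemma:local} merges with the inner-product drift, turning the coefficient into $3\mathcal{L}^2/(a\tau)$. Multiplying by $\frac{\eta^2\mathcal{L}}{2}$ and dividing by $\eta\tau/2$ then produces the coefficients $\frac{2\eta\mathcal{L}T}{a}\sigma_x^2$ and $(4+\frac1a)\frac{2\eta\tau'\mathcal{L}T}{\tau}\sigma_z^2$.

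\textbf{Main obstacle.} The hardest step is the bookkeeping of the $\sigma_G^2$ coefficient $(1+\alpha-2\alpha^2)/a$. It must combine the $4\alpha(1-\alpha)$ term from the mixed-variance lemma with the heterogeneity term from Lemma~\ref{lemma:local}. The $(1-\alpha)^2$-type bias term must be bounded by expanding $\nabla_z F_k-\nabla_z F_i$ around $\nabla_z F$ and using independence of the random partner $k$ so that the cross terms vanish. We would first try writing $1+\alpha-2\alpha^2=(1-\alpha)(1+2\alpha)$ and matching it to the sum $(1-\alpha)^2 + 4\alpha(1-\alpha)\cdot\tfrac34$ of the bias and variance contributions, adjusting the Young's-inequality constants until the exact form appears. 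Routine steps that remain are the independence-based cancellations across clients and verifying that both step-size conditions suffice for all absorptions.
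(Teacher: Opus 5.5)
\textbf{The gap: the gradient-norm bookkeeping.} Your skeleton is the paper's: smoothness descent, polarization on the full-round inner product keeping the negative mean-gradient term, Young on the tail drift, and the mixed-variance lemma plus the local full-batch gradient bound for the second moment. The handling of the gradient-norm terms, however, fails.

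You drop the nonpositive term $-\eta\tau'\|\nabla_z F(\mathbf{x}_t)\|^2$ produced by the centered tail mean. You then push two positive terms onto the left-hand side. One is the Young cost of the tail drift. The other is the $2\eta^2(\tau')^2\mathcal{L}\|\nabla_z F(\mathbf{x}_t)\|^2$ that the local full-batch gradient bound produces inside $\frac{\mathcal{L}}{2}\mathbb{E}\|\eta\tilde{\Delta}_t\|^2$. This cannot give the stated bound, for three reasons.
\begin{itemize}
\item Your polarization step already leaves exactly $-\frac{\eta\tau}{2}\|\nabla F(\mathbf{x}_t)\|^2$. The constant $\frac{2}{\eta\tau}$ in front of $F(\mathbf{x}_0)-\mathbb{E}[F(\mathbf{x}_T)]$ forces this net coefficient to remain $\frac{\eta\tau}{2}$, so there is no room to absorb anything. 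The factor $2$ comes from polarization, not from the tail absorption as you claim.
\item The Young weight that yields your $\mathbf{z}$-drift coefficient $\frac{\eta\mathcal{L}^2}{2a}$ (hence $\frac{\mathcal{L}^2}{a\tau}$) costs $\frac{\eta\tau'}{2}\|\nabla_z F(\mathbf{x}_t)\|^2$. That is not a small multiple.
\item From $\eta\le\frac{1}{4\tau'\mathcal{L}}$ you only get $2\eta^2(\tau')^2\mathcal{L}\le\frac{\eta\tau'}{2}$. Your claimed domination by $\frac{\eta\tau}{4}\|\nabla F(\mathbf{x}_t)\|^2$ therefore needs $\tau'\le\tau/2$, which is not assumed; the experiments use $\tau'=\tau$.
\end{itemize}

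The fix is to keep the term you dropped. Then $-\eta\tau'+\frac{\eta\tau'}{2}+2\eta^2(\tau')^2\mathcal{L}\le 0$ holds exactly when $\eta\le\frac{1}{4\tau'\mathcal{L}}$. This is the actual role of that hypothesis in the paper's proof, through its coefficient $2\eta^2(\tau')^2\mathcal{L}-\frac{\eta\tau'}{2}$, and it leaves $-\frac{\eta\tau}{2}\|\nabla F(\mathbf{x}_t)\|^2$ untouched.

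\textbf{What works once repaired.} Your centering of the tail mean at $\alpha\nabla_z F_i(\mathbf{x}_t)+(1-\alpha)\nabla_z F_k(\mathbf{x}_t)$ is cleaner than the paper's $T_2$. The paper centers at $\nabla_z F_i(\mathbf{x}_t)$. That creates a $(1-\alpha)(\nabla_z F_k-\nabla_z F_i)$ bias, a $\mathbf{z}$-drift coefficient $\frac{\eta\mathcal{L}^2}{a}$, and a term $4\eta\tau'(1-\alpha)\sigma_G^2$, and the last two are not carried into its final bound. Define $d_{ij}:=\mathbb{E}[\tilde{p}_{t,j}^i]-\alpha\nabla_z F_i(\mathbf{x}_t)-(1-\alpha)\nabla_z F_k(\mathbf{x}_t)$. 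Your split gives $\|d_{ij}\|\le\mathcal{L}\|\mathbf{z}_{t,j}^i-\mathbf{z}_t\|$ directly and no $\sigma_G^2$ from the inner product.

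\textbf{Your stated main obstacle.} It dissolves without tuning any Young constants. The mixed-variance lemma contributes $4\alpha(1-\alpha)$, and the bias, via your independence argument, contributes $2(1-\alpha)^2$. Their sum is $2(1-\alpha^2)$, which is at most $2(1+\alpha-2\alpha^2)$ on $[0,1]$. The stated form is simply this looser bound, so the $\tfrac34$ matching is unnecessary.

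\textbf{Two further points.}
\begin{itemize}
\item The local full-batch gradient bound contributes no $\sigma_G^2$ here. Its heterogeneity term is the one labelled $\sigma_z^2$, and it becomes the $4$ in $(4+\frac1a)$. It does so only with the $4a^2\tau'\sigma_z^2$ that its proof ends with, not the displayed $4a^2(\tau')^2\sigma_z^2$.
\item Cancelling the full-round mean part against $-\frac{\eta}{2}\sum_j\|b_j\|^2$ actually needs $\eta\le\frac{1}{2\tau\mathcal{L}}$, not $\frac{1}{\tau\mathcal{L}}$. This holds in your version as in the paper's.
\end{itemize}
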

\begin{proof}
Based on Assumption 1, taking expectation of $F(\mathbf{x}_{t+1})$, we have:
\begin{align}
    \mathbb{E}[F(\mathbf{x}_{t+1})] &\leq F(\mathbf{x}_t) + \langle \nabla F(\mathbf{x}_t), \mathbb{E}[\mathbf{x}_{t+1} - \mathbf{x}_t] \rangle + \frac{\mathcal{L}}{2} \mathbb{E}[\| \mathbf{x}_{t+1} - \mathbf{x}_t \|^2] \nonumber \\
    &= F(\mathbf{x}_t) + \langle \nabla F(\mathbf{x}_t), \mathbb{E}[ -\eta \tilde{\Delta}_t] \rangle + \frac{\mathcal{L}}{2} \mathbb{E}[\| \eta \tilde{\Delta}_t \|^2] \nonumber\\
    &= F(\mathbf{x}_t) + \frac{\mathcal{L}}{2} \mathbb{E}[\| \eta \tilde{\Delta}_t \|^2] \nonumber \\
    & \qquad + \langle \nabla F(\mathbf{x}_t), \mathbb{E}[ -\eta \tilde{\Delta}_t + \eta\tau \nabla F(\mathbf{x}_{t}) + \eta \tau' \nabla_z F(\mathbf{x}_t) - \eta\tau \nabla F(\mathbf{x}_{t}) - \eta \tau' \nabla_z F(\mathbf{x}_t)] \rangle \nonumber \\
    &= F(\mathbf{x}_t) + \frac{\mathcal{L}}{2} \mathbb{E}[\| \eta \tilde{\Delta}_t \|^2] \nonumber \\
    & \qquad -\eta \tau \| \nabla F(\mathbf{x}_t) \|^2 + \langle \nabla F(\mathbf{x}_t), \mathbb{E}\left[ -\frac{\eta}{a}\sum_{i \in \mathcal{A}} \sum_{j=0}^{\tau - 1} g_{t,j}^i + \eta\tau \nabla F(\mathbf{x}_t)   \right] \rangle \nonumber \\
    & \qquad + \langle \nabla F(\mathbf{x}_t), -\eta \tau' \nabla_z F(\mathbf{x}_t) \rangle + \langle \nabla F(\mathbf{x}_t), \mathbb{E}\left[ -\frac{\eta}{a}\sum_{i \in \mathcal{A}} \sum_{j=0}^{\tau' - 1} \tilde{p}_{t,j}^i + \eta\tau' \nabla_z F(\mathbf{x}_t)   \right] \rangle \nonumber \\
    &= F(\mathbf{x}_t) -\eta \tau \| \nabla F(\mathbf{x}_t) \|^2 -\eta \tau' \| \nabla_z F(\mathbf{x}_t) \|^2 + \frac{\mathcal{L}}{2} \underset{T_3}{\underbrace{ \mathbb{E}[\| \eta \tilde{\Delta}_t \|^2] }} \nonumber \\
    & \qquad + \underset{T_1}{\underbrace{ \langle \nabla F(\mathbf{x}_t), \mathbb{E}\left[ -\frac{\eta}{a}\sum_{i \in \mathcal{A}} \sum_{j=0}^{\tau - 1} \left( g_{t,j}^i - \nabla F_i(\mathbf{x}_t) \right)   \right] \rangle }} \nonumber \\
    & \qquad + \underset{T_2}{\underbrace{ \langle \nabla F(\mathbf{x}_t), \mathbb{E}\left[ -\frac{\eta}{a}\sum_{i \in \mathcal{A}} \sum_{j=0}^{\tau' - 1} \left( \tilde{p}_{t,j}^i - \nabla_z F_i(\mathbf{x}_t) \right)   \right] \rangle }} \label{eq:frame2}
\end{align}
Now, we will bound $T_1$, $T_2$, and $T_3$, separately.

\textbf{Bounding $T_1$.}
\begin{align}
    T_1 &= \langle \nabla F(\mathbf{x}_t), \mathbb{E}\left[ -\frac{\eta}{a}\sum_{i \in \mathcal{A}} \sum_{j=0}^{\tau - 1} \left( g_{t,j}^i - \nabla F_i(\mathbf{x}_t) \right)   \right] \rangle \nonumber \\
    &= \langle \nabla F(\mathbf{x}_t), \mathbb{E}\left[ -\frac{\eta}{a}\sum_{i \in \mathcal{A}} \sum_{j=0}^{\tau - 1} \left( \nabla F_i (\mathbf{x}_{t,j}^i) - \nabla F_i(\mathbf{x}_t) \right)   \right] \rangle \nonumber \\
    &= \langle \sqrt{\eta \tau} \nabla F(\mathbf{x}_t), \mathbb{E}\left[ -\frac{\sqrt{\eta}}{a \sqrt{\tau}}\sum_{i \in \mathcal{A}} \sum_{j=0}^{\tau - 1} \left( \nabla F_i (\mathbf{x}_{t,j}^i) - \nabla F_i(\mathbf{x}_t) \right)   \right] \rangle \nonumber
\end{align}

\begin{align}
    &=\frac{\eta \tau}{2} \| \nabla F(\mathbf{x}_t) \|^2 + \frac{\eta}{2a^2\tau} \mathbb{E} \left[ \left\| \sum_{i \in \mathcal{A}} \sum_{j=0}^{\tau - 1} \left( \nabla F_i (\mathbf{x}_{t,j}^i) - \nabla F_i(\mathbf{x}_t) \right) \right\|^2 \right] -\frac{\eta}{2a^2\tau} \mathbb{E}\left[ \left\| \sum_{i\in \mathcal{A}} \sum_{j=0}^{\tau - 1} \nabla F_i (\mathbf{x}_{t,j}^{i}) \right\|^2 \right] \label{eq:t1_long} \\
    &\leq \frac{\eta \tau}{2} \| \nabla F(\mathbf{x}_t) \|^2 + \frac{\eta}{2a} \sum_{i \in \mathcal{A}} \sum_{j=0}^{\tau - 1} \mathbb{E} \left[ \left\| \nabla F_i (\mathbf{x}_{t,j}^i) - \nabla F_i(\mathbf{x}_t) \right\|^2 \right] -\frac{\eta}{2a^2\tau} \mathbb{E}\left[ \left\| \sum_{i\in \mathcal{A}} \sum_{j=0}^{\tau - 1} \nabla F_i (\mathbf{x}_{t,j}^{i}) \right\|^2 \right] \nonumber \\
    &\leq \frac{\eta \tau}{2} \| \nabla F(\mathbf{x}_t) \|^2 + \frac{\eta \mathcal{L}^2}{2a} \sum_{i \in \mathcal{A}} \sum_{j=0}^{\tau - 1} \mathbb{E} \left[ \left\| \mathbf{x}_{t,j}^i - \mathbf{x}_t \right\|^2 \right] -\frac{\eta}{2a^2\tau} \mathbb{E}\left[ \left\| \sum_{i\in \mathcal{A}} \sum_{j=0}^{\tau - 1} \nabla F_i (\mathbf{x}_{t,j}^{i}) \right\|^2 \right], \label{eq:t1_bound}
\end{align}
where (\ref{eq:t1_long}) holds due to the fact that $ab = \frac{a^2}{2} + \frac{b^2}{2} - \frac{(a - b)^2}{2}$, and (\ref{eq:t1_bound}) holds based on Assumption 1.

\textbf{Bounding $T_2$.}
\begin{align}
    T_2 &= \langle \nabla F(\mathbf{x}_t), \mathbb{E}\left[ -\frac{\eta}{a}\sum_{i \in \mathcal{A}} \sum_{j=0}^{\tau' - 1} \left( \tilde{p}_{t,j}^i - \nabla_z F_i(\mathbf{x}_t) \right)   \right] \rangle \nonumber \\
    &= \langle \nabla F(\mathbf{x}_t), -\frac{\eta}{a} \sum_{i \in \mathcal{A}} \sum_{j=0}^{\tau' - 1} \left( \mathbb{E} \left[ \tilde{p}_{t,j}^i \right] - \nabla_z F_i(\mathbf{x}_t) \right) \rangle \nonumber \\
    &= \langle \nabla F(\mathbf{x}_t), -\frac{\eta}{a} \sum_{i \in \mathcal{A}} \sum_{j=0}^{\tau' - 1} \left( \nabla_z F_i(\mathbf{x}_{t,j}^i) - \nabla_z F_i(\mathbf{x}_t) + (1 - \alpha) \left( \nabla_z F_k(\mathbf{x}_{t,j}^i) - \nabla_z F_i (\mathbf{x}_{t,j}^i) \right) \right) \rangle \nonumber \\
    &\leq \frac{\eta \tau'}{2} \left\| \nabla F(\mathbf{x}_t) \right\|^2 \nonumber \\
    &\qquad + \frac{\eta}{2\tau'} \left\| \frac{1}{a} \sum_{i \in \mathcal{A}} \sum_{j=0}^{\tau' - 1} \left( \nabla_z F_i(\mathbf{x}_{t,j}^i) - \nabla_z F_i(\mathbf{x}_t) + (1 - \alpha) \left( \nabla_z F_k(\mathbf{x}_{t,j}^i) - \nabla_z F_i (\mathbf{x}_{t,j}^i) \right) \right) \right\|^2 \label{eq:young} \\
    &\leq \frac{\eta \tau'}{2} \left\| \nabla F(\mathbf{x}_t) \right\|^2 \nonumber \\
    &\qquad + \frac{\eta}{2a} \sum_{i \in \mathcal{A}} \sum_{j=0}^{\tau' - 1} \underset{T_4}{\underbrace{ \left\| \nabla_z F_i(\mathbf{x}_{t,j}^i) - \nabla_z F_i(\mathbf{x}_t) + (1 - \alpha) \left( \nabla_z F_k(\mathbf{x}_{t,j}^i) - \nabla_z F_i (\mathbf{x}_{t,j}^i) \right) \right\|^2 }}, \label{eq:plugt4}
\end{align}
where (\ref{eq:young}) is based on Young's inequality.

The $T_4$ can be bounded as follows.
\begin{align}
    T_4 &= \left\| \nabla_z F_i(\mathbf{x}_{t,j}^i) - \nabla_z F_i(\mathbf{x}_t) + (1 - \alpha) \left( \nabla_z F_k(\mathbf{x}_{t,j}^i) - \nabla_z F_i (\mathbf{x}_{t,j}^i) \right) \right\|^2 \nonumber \\
    &\leq 2\left\| \nabla_z F_i(\mathbf{x}_{t,j}^i) - \nabla_z F_i(\mathbf{x}_t) \right\|^2 + 2(1 - \alpha)^2 \left\| \nabla_z F_k(\mathbf{x}_{t,j}^i) - \nabla_z F_i(\mathbf{x}_{t,j}^i) \right\|^2 \nonumber \\
    &\leq 2\mathcal{L}^2 \left\| \mathbf{z}_{t,j}^i - \mathbf{z}_t \right\|^2 + 2(1 - \alpha)^2 \left\| \nabla_z F_k(\mathbf{x}_{t,j}^i) - \nabla_z F_i(\mathbf{x}_{t,j}^i) \right\|^2 \label{eq:smooth} \\
    &= 2\mathcal{L}^2 \left\| \mathbf{z}_{t,j}^i - \mathbf{z}_t \right\|^2 + 2(1 - \alpha)^2 \left\| \nabla_z F_k(\mathbf{x}_{t,j}^i) - \nabla_z F(\mathbf{x}_{t,j}^i) - (\nabla_z F_i(\mathbf{x}_{t,j}^i) - \nabla_z F(\mathbf{x}_{t,j}^i)) \right\|^2 \nonumber \\
    &\leq 2\mathcal{L}^2 \left\| \mathbf{z}_{t,j}^i - \mathbf{z}_t \right\|^2 + 8(1 - \alpha)^2 \sigma_G^2, \label{eq:globalvar}
\end{align}
where (\ref{eq:smooth}) and (\ref{eq:globalvar}) hold based on Assumption 1 and 3, respectively.
Now plugging in (\ref{eq:globalvar}) into (\ref{eq:plugt4}), we have
\begin{align}
    T_2 &\leq \frac{\eta \tau'}{2} \left\| \nabla F(\mathbf{x}_t) \right\|^2 + \frac{\eta}{2a} \sum_{i \in \mathcal{A}} \sum_{j=0}^{\tau' - 1} \left( 2\mathcal{L}^2 \left\| \mathbf{z}_{t,j}^i - \mathbf{z}_t \right\|^2 + 8(1 - \alpha) \sigma_G^2 \right) \nonumber \\
    &= \frac{\eta \tau'}{2} \left\| \nabla F(\mathbf{x}_t) \right\|^2 + \frac{\eta \mathcal{L}^2}{a} \sum_{i \in \mathcal{A}} \sum_{j=0}^{\tau' - 1} \left\| \mathbf{z}_{t,j}^i - \mathbf{z}_t \right\|^2 + 4\eta\tau'(1 - \alpha) \sigma_G^2. \label{eq:t2_bound}
\end{align}

\textbf{Bounding $T_3$.}
\begin{align}
    T_3 &= \mathbb{E}[\| \eta \tilde{\Delta}_t \|^2] \nonumber \\
    &= \frac{\eta^2}{a^2} \mathbb{E}\left[ \left\| \sum_{i \in \mathcal{A}} \left( \sum_{j=0}^{\tau' - 1} \tilde{p}_{t,j}^i + \sum_{j=0}^{\tau - 1} g_{t,j}^i \right) \right\|^2 \right] \nonumber \\
    & = \frac{2\eta^2}{a^2} \mathbb{E} \left[\left \| \sum_{i\in\mathcal{A}} \sum_{j=0}^{\tau-1} g_{t,j}^i - \nabla F_i(\mathbf{x}_{t,j}^{i}) \right\|^2 \right] + \frac{2\eta^2}{a^2} \mathbb{E} \left[\left \| \sum_{i\in\mathcal{A}} \sum_{j=0}^{\tau-1} \nabla F_i(\mathbf{x}_{t,j}^{i}) \right\|^2 \right] \nonumber \\
    & \qquad + \frac{2\eta^2}{a^2} \mathbb{E} \left[\left \| \sum_{i\in\mathcal{A}} \sum_{j=0}^{\tau'-1} \tilde{p}_{t,j}^i - \nabla_z F_i(\mathbf{x}_{t,j}^{i}) \right\|^2 \right] + \frac{2\eta^2}{a^2} \mathbb{E} \left[\left \| \sum_{i\in\mathcal{A}} \sum_{j=0}^{\tau'-1} \nabla_z F_i(\mathbf{x}_{t,j}^{i}) \right\|^2 \right] \nonumber \\
    &\leq \frac{2\eta^2\tau}{a} \sigma_x^2 + \frac{2\eta^2}{a^2} \mathbb{E} \left[\left \| \sum_{i\in\mathcal{A}} \sum_{j=0}^{\tau-1} \nabla F_i(\mathbf{x}_{t,j}^{i}) \right\|^2 \right] + \frac{2\eta^2}{a^2} \mathbb{E} \left[\left \| \sum_{i\in\mathcal{A}} \sum_{j=0}^{\tau'-1} \nabla_z F_i(\mathbf{x}_{t,j}^{i}) \right\|^2 \right] \nonumber \\
    & \qquad + \frac{2\eta^2}{a^2} \underset{T_5}{\underbrace{ \mathbb{E} \left[\left \| \sum_{i\in\mathcal{A}} \sum_{j=0}^{\tau'-1} \tilde{p}_{t,j}^i - \nabla_z F_i(\mathbf{x}_{t,j}^{i}) \right\|^2 \right]. }} \label{eq:t5_frame}
\end{align}
The $T_5$ can be bounded as follows:
\begin{align}
    T_5 &= \mathbb{E} \left[\left \| \sum_{i\in\mathcal{A}} \sum_{j=0}^{\tau'-1} \tilde{p}_{t,j}^i - \nabla_z F_i(\mathbf{x}_{t,j}^{i}) \right\|^2 \right] \nonumber \\
    &= \sum_{i\in\mathcal{A}} \sum_{j=0}^{\tau' - 1} \mathbb{E} \left[\left \| \tilde{p}_{t,j}^i - \nabla_z F_i(\mathbf{x}_{t,j}^{i}) \right\|^2 \right] \label{eq:independent} \\
    &= \sum_{i\in\mathcal{A}} \sum_{j=0}^{\tau' - 1} \left( \mathbb{E} \left[\left \| \tilde{p}_{t,j}^i - \mathbb{E}[\tilde{p}_{t,j}^i] \right\|^2 \right] + \mathbb{E} \left[\left \| \mathbb{E}[\tilde{p}_{t,j}^i] - \nabla_z F_i(\mathbf{x}_{t,j}^{i}) \right\|^2 \right] \right) \nonumber \\
    &= \sum_{i\in\mathcal{A}} \sum_{j=0}^{\tau' - 1} \left( \mathbb{E} \left[\left \| \tilde{p}_{t,j}^i - \mathbb{E}[\tilde{p}_{t,j}^i] \right\|^2 \right] + (1-\alpha)^2 \left \| \nabla_z F_k(\mathbf{x}_{t,j}^i) - \nabla_z F_i(\mathbf{x}_{t,j}^{i}) \right\|^2 \right) \nonumber \\
    &\leq \sum_{i\in\mathcal{A}} \sum_{j=0}^{\tau' - 1} \left( \mathbb{E} \left[\left \| \tilde{p}_{t,j}^i - \mathbb{E}[\tilde{p}_{t,j}^i] \right\|^2 \right] + 2(1-\alpha)^2 \sigma_G^2 \right) \nonumber \\
    & \leq \sum_{i\in\mathcal{A}} \sum_{j=0}^{\tau' - 1} \left( \sigma_z^2 + 4\alpha(1-\alpha)\sigma_G^2 + 2(1-\alpha)^2 \sigma_G^2 \right) \nonumber \\
    &= a\tau' \left(\sigma_z^2 + 2(1+\alpha-2\alpha^2) \sigma_G^2 \right). \label{eq:t5}
\end{align}
where (\ref{eq:independent}) holds because the term is independent across client sampling and mini-batch sampling.
By plugging in (\ref{eq:t5}) into (\ref{eq:t5_frame}), we have:
\begin{align}
    T_3 &\leq \frac{2\eta^2\tau}{a} \sigma_x^2 + \frac{2\eta^2}{a^2} \mathbb{E} \left[\left \| \sum_{i\in\mathcal{A}} \sum_{j=0}^{\tau-1} \nabla F_i(\mathbf{x}_{t,j}^{i}) \right\|^2 \right] + \frac{2\eta^2}{a^2} \mathbb{E} \left[\left \| \sum_{i\in\mathcal{A}} \sum_{j=0}^{\tau'-1} \nabla_z F_i(\mathbf{x}_{t,j}^{i}) \right\|^2 \right] \nonumber \\
    &\qquad + \frac{2\eta^2\tau'}{a} \left( \sigma_z^2 + 2(1+\alpha-2\alpha^2) \sigma_G^2 \right). \label{eq:t3_bound} 
\end{align}

Finally, we can plug in (\ref{eq:t1_bound}), (\ref{eq:t2_bound}), and (\ref{eq:t3_bound}) into (\ref{eq:frame2}), and we have:
\begin{align}
    \mathbb{E}[F(\mathbf{x}_{t+1})] &\leq F(\mathbf{x}_t) -\eta \tau \| \nabla F(\mathbf{x}_t) \|^2 -\eta \tau' \| \nabla_z F(\mathbf{x}_t) \|^2 + \frac{\mathcal{L}}{2}T_3 + T_1 + T_2 \nonumber \\
    &= F(\mathbf{x}_t) - \frac{\eta \tau}{2} \| \nabla F(\mathbf{x}_t) \|^2 - \frac{\eta \tau'}{2} \| \nabla_z F(\mathbf{x}_t) \|^2 \nonumber \\
    &\qquad + \left(\frac{\eta^2 \mathcal{L}}{a^2} - \frac{\eta}{2a^2\tau} \right) \mathbb{E}\left[ \left\| \sum_{i \in \mathcal{A}} \sum_{j=0}^{\tau - 1} \nabla F_i (\mathbf{x}_{t,j}^i) \right\|^2 \right] \nonumber \\
    &\qquad + \frac{\eta^2 \mathcal{L}}{a^2} \mathbb{E}\left[ \left\| \sum_{i \in \mathcal{A}} \sum_{j=0}^{\tau' - 1} \nabla_z F_i (\mathbf{x}_{t,j}^i) \right\|^2 \right] \nonumber \\
    &\qquad + \frac{\eta\mathcal{L}^2}{2a}  \left( \sum_{i\in\mathcal{A}}\sum_{j=0}^{\tau-1} \left\| \mathbf{x}_{t,j}^i - \mathbf{x}_t \right\|^2 + \sum_{i\in\mathcal{A}}\sum_{j=0}^{\tau' - 1} \left\| \mathbf{z}_{t,j}^i - \mathbf{z}_t \right\|^2 \right) \nonumber \\
    &\qquad + \frac{\eta^2 \tau \mathcal{L}}{a}\sigma_x^2 + \frac{\eta^2 \tau' \mathcal{L}}{a}\sigma_z^2 + \frac{2\eta^2 \tau' \mathcal{L}}{a} (1+\alpha-2\alpha^2) \sigma_G^2. \nonumber
\end{align}
Based on lemma~\ref{lemma:local}, the following bound can be obtained:
\begin{align}
    \mathbb{E}[F(\mathbf{x}_{t+1})] &\leq F(\mathbf{x}_t) - \frac{\eta \tau}{2} \| \nabla F(\mathbf{x}_t) \|^2 - \frac{\eta \tau'}{2} \| \nabla_z F(\mathbf{x}_t) \|^2 \nonumber \\
    &\qquad + \left(\frac{\eta^2 \mathcal{L}}{a^2} - \frac{\eta}{2a^2\tau} \right) \mathbb{E}\left[ \left\| \sum_{i \in \mathcal{A}} \sum_{j=0}^{\tau - 1} \nabla F_i (\mathbf{x}_{t,j}^i) \right\|^2 \right] \nonumber \\
    &\qquad + \frac{\eta^2 \mathcal{L}}{a^2} \left( 4 a\tau' \mathcal{L}^2 \sum_{i\in \mathcal{A}} \sum_{j=0}^{\tau' - 1} \mathbb{E}\left[\left\| \mathbf{x}_{t,j}^i - \mathbf{x}_t \right\|^2 \right] + 4 a^2 (\tau')^2 \sigma_z^2 + 2a^2(\tau')^2 \mathbb{E}\left[\left\| \nabla_z F (\mathbf{x}_t) \right\|^2 \right]  \right) \nonumber \\
    &\qquad + \frac{\eta\mathcal{L}^2}{2a}  \left( \sum_{i\in\mathcal{A}}\sum_{j=0}^{\tau-1} \left\| \mathbf{x}_{t,j}^i - \mathbf{x}_t \right\|^2 + \sum_{i\in\mathcal{A}}\sum_{j=0}^{\tau' - 1} \left\| \mathbf{z}_{t,j}^i - \mathbf{z}_t \right\|^2 \right) \nonumber \\
    &\qquad + \frac{\eta^2 \tau \mathcal{L}}{a}\sigma_x^2 + \frac{\eta^2 \tau' \mathcal{L}}{a}\sigma_z^2 + \frac{2\eta^2 \tau' \mathcal{L}}{a} (1+\alpha-2\alpha^2) \sigma_G^2. \nonumber
\end{align}
Then, by rearranging the terms, we have:
\begin{align}
    \mathbb{E}[F(\mathbf{x}_{t+1})] &\leq F(\mathbf{x}_t) - \frac{\eta \tau}{2} \| \nabla F(\mathbf{x}_t) \|^2 + \left(2\eta^2(\tau')^2\mathcal{L} - \frac{\eta \tau'}{2} \right) \| \nabla_z F(\mathbf{x}_t) \|^2 \nonumber \\
    &\qquad + \left(\frac{\eta^2 \mathcal{L}}{a^2} - \frac{\eta}{2a^2\tau} \right) \mathbb{E}\left[ \left\| \sum_{i \in \mathcal{A}} \sum_{j=0}^{\tau - 1} \nabla F_i (\mathbf{x}_{t,j}^i) \right\|^2 \right] \nonumber \\
    &\qquad + \left( \frac{4\eta^2 \tau' \mathcal{L}^3}{a} + \frac{\eta\mathcal{L}^2}{2a} \right)\sum_{i\in\mathcal{A}}\sum_{j=0}^{\tau-1} \left\| \mathbf{x}_{t,j}^i - \mathbf{x}_t \right\|^2 + \frac{\eta\mathcal{L}^2}{2a} \sum_{i\in\mathcal{A}}\sum_{j=0}^{\tau' - 1} \left\| \mathbf{z}_{t,j}^i - \mathbf{z}_t \right\|^2 \nonumber \\
    &\qquad + \frac{\eta^2 \tau \mathcal{L}}{a}\sigma_x^2 + \left( 4 + \frac{1}{a}\right) \eta^2 \tau' \mathcal{L} \sigma_z^2 + \frac{2\eta^2 \tau' \mathcal{L}}{a} (1+\alpha-2\alpha^2) \sigma_G^2. \nonumber \\
    &\leq F(\mathbf{x}_t) - \frac{\eta \tau}{2} \| \nabla F(\mathbf{x}_t) \|^2 + \left(2\eta^2(\tau')^2\mathcal{L} - \frac{\eta \tau'}{2} \right) \| \nabla_z F(\mathbf{x}_t) \|^2 \nonumber \\
    &\qquad + \left( \frac{4\eta^2 \tau' \mathcal{L}^3}{a} + \frac{\eta\mathcal{L}^2}{2a} \right)\sum_{i\in\mathcal{A}}\sum_{j=0}^{\tau-1} \left\| \mathbf{x}_{t,j}^i - \mathbf{x}_t \right\|^2 + \frac{\eta\mathcal{L}^2}{2a} \sum_{i\in\mathcal{A}}\sum_{j=0}^{\tau' - 1} \left\| \mathbf{z}_{t,j}^i - \mathbf{z}_t \right\|^2 \nonumber \\
    &\qquad + \frac{\eta^2 \tau \mathcal{L}}{a}\sigma_x^2 + \left( 4 + \frac{1}{a}\right) \eta^2 \tau' \mathcal{L} \sigma_z^2 + \frac{2\eta^2 \tau' \mathcal{L}}{a} (1+\alpha-2\alpha^2) \sigma_G^2, \label{eq:lr1}
\end{align}
where (\ref{eq:lr1}) holds if $\eta \leq \frac{1}{2 \tau' \mathcal{L}}$.

Then, if $\eta \leq \frac{1}{4\tau'\mathcal{L}}$, the right-hand side can be simplified as follows:
\begin{align}
    \mathbb{E}[F(\mathbf{x}_{t+1})] &\leq F(\mathbf{x}_t) - \frac{\eta \tau}{2} \| \nabla F(\mathbf{x}_t) \|^2 + \frac{3\eta\mathcal{L}^2}{2a}\sum_{i\in\mathcal{A}}\sum_{j=0}^{\tau-1} \left\| \mathbf{x}_{t,j}^i - \mathbf{x}_t \right\|^2 + \frac{\eta\mathcal{L}^2}{2a} \sum_{i\in\mathcal{A}}\sum_{j=0}^{\tau' - 1} \left\| \mathbf{z}_{t,j}^i - \mathbf{z}_t \right\|^2 \nonumber \\
    &\qquad + \frac{\eta^2 \tau \mathcal{L}}{a}\sigma_x^2 + \left( 4 + \frac{1}{a}\right) \eta^2 \tau' \mathcal{L} \sigma_z^2 + \frac{2\eta^2 \tau' \mathcal{L}}{a} (1+\alpha-2\alpha^2) \sigma_G^2.
\end{align}

Then, by rearranging the terms, we have:
\begin{align}
    \left\| \nabla F(\mathbf{x}_t) \right\|^2 & \leq \frac{2}{\eta \tau} \left( F(\mathbf{x}_t) - \mathbb{E}[F(\mathbf{x}_{t+1})] \right) + \frac{3\mathcal{L}^2}{a\tau} \sum_{i\in \mathcal{A}} \sum_{j=0}^{\tau - 1} \left\| \mathbf{x}_{t,j}^i - \mathbf{x}_t \right\|^2 + \frac{\mathcal{L}^2}{a\tau} \sum_{i\in\mathcal{A}}\sum_{j=0}^{\tau' - 1} \left\| \mathbf{z}_{t,j}^i - \mathbf{z}_t \right\|^2 \nonumber \\
    &\qquad + \frac{2\eta \mathcal{L}}{a}\sigma_x^2 + \left( 4 + \frac{1}{a}\right) \frac{2\eta \tau' \mathcal{L}}{\tau} \sigma_z^2 + \frac{4\eta \tau' \mathcal{L}}{a\tau} (1 + \alpha -2\alpha^2) \sigma_G^2 \nonumber
\end{align}

Finally, by summing up across $T$ communication rounds and unrolling the telescoping sum, we have:
\begin{align}
    \sum_{t=0}^{T-1} \mathbb{E}\left[ \left\| \nabla F(\mathbf{x}_t) \right\|^2 \right] & \leq \frac{2}{\eta \tau}\left( F(\mathbf{x}_0) - \mathbb{E}[F(\mathbf{x}_{T})] \right) \nonumber \\
    &\qquad + \frac{3\mathcal{L}^2}{a\tau} \sum_{t=0}^{T-1} \sum_{i\in \mathcal{A}} \sum_{j=0}^{\tau - 1} \left\| \mathbf{x}_{t,j}^i - \mathbf{x}_t \right\|^2 + \frac{\mathcal{L}^2}{a\tau} \sum_{t=0}^{T-1} \sum_{i\in \mathcal{A}} \sum_{j=0}^{\tau' - 1} \left\| \mathbf{z}_{t,j}^i - \mathbf{z}_t \right\|^2 \nonumber \\
    &\qquad + \frac{2\eta \mathcal{L}T}{a}\sigma_x^2 + \left( 4 + \frac{1}{a}\right) \frac{2\eta \tau' \mathcal{L}T}{\tau} \sigma_z^2 + \frac{4\eta \tau' \mathcal{L}T}{a\tau} (1 + \alpha -2\alpha^2) \sigma_G^2. \nonumber
\end{align}
\end{proof}

\begin{lemma}[full model discrepancy]\label{lemma:full_discrepancy}
For any learning rate satisfying $\eta \leq \frac{1}{2\sqrt{3} \tau \mathcal{L}}$, we have the following result:
\begin{align}
    \frac{1}{a} \sum_{i \in \mathcal{A}} \mathbb{E}[\| \mathbf{x}_{t,j}^{i} - \mathbf{x}_t \|^2] &\leq 2\tau\eta^2 \sigma_x^2 + 12\tau^2\eta^2 \sigma_G^2 + 12\tau^2\eta^2 \mathbb{E}[ \| \nabla F(\mathbf{x}_t) \|^2]. \nonumber
\end{align}
\end{lemma}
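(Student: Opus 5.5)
This is the standard FedAvg client-drift bound, so I will follow the usual unrolling argument.

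Fix a round $t$ and a client $i\in\mathcal{A}$. During the full local phase the iterate is $\mathbf{x}_{t,j}^i - \mathbf{x}_t = -\eta\sum_{s=0}^{j-1} g_{t,s}^i$. As in the proof of Lemma~\ref{lemma:local}, I will treat the full-phase discrepancy as measured from the round's starting point $\mathbf{x}_t$. The tail-round drift is handled separately by Lemma~\ref{lemma:partial_discrepancy_asfl}.

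\textbf{Splitting the increment.} I write each stochastic gradient as
\[
g_{t,s}^i = \bigl(g_{t,s}^i-\nabla F_i(\mathbf{x}_{t,s}^i)\bigr) + \bigl(\nabla F_i(\mathbf{x}_{t,s}^i)-\nabla F_i(\mathbf{x}_t)\bigr) + \bigl(\nabla F_i(\mathbf{x}_t)-\nabla F(\mathbf{x}_t)\bigr) + \nabla F(\mathbf{x}_t),
\]
and apply $\|a+b\|^2\le 2\|a\|^2+2\|b\|^2$. This separates the martingale noise part from the deterministic part.

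\textbf{Noise part.} By Assumption 2 the noise terms are martingale differences across $s$, so the cross terms vanish. Assumption 3 then gives
\[
2\eta^2\,\mathbb{E}\Bigl\|\sum_{s<j}\bigl(g_{t,s}^i-\nabla F_i(\mathbf{x}_{t,s}^i)\bigr)\Bigr\|^2 \le 2\eta^2 j\sigma_x^2 \le 2\tau\eta^2\sigma_x^2 .
\]

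\textbf{Deterministic part.} I use Jensen in the form $\|\sum_{s<j} v_s\|^2\le j\sum_{s<j}\|v_s\|^2$, followed by a three-way split with constant $3$. This gives
\[
2\eta^2 j\cdot 3\sum_{s<j}\Bigl(\mathcal{L}^2\|\mathbf{x}_{t,s}^i-\mathbf{x}_t\|^2+\|\nabla F_i(\mathbf{x}_t)-\nabla F(\mathbf{x}_t)\|^2+\|\nabla F(\mathbf{x}_t)\|^2\Bigr).
\]
Averaging over $i\in\mathcal{A}$ and using Assumption 1 and Assumption 3 (the $\sigma_G^2$ bound), this is at most $6\tau^2\eta^2(\sigma_G^2+\mathbb{E}\|\nabla F(\mathbf{x}_t)\|^2)$ plus $6\tau\eta^2\mathcal{L}^2\sum_{s<\tau} D_s$, where $D_s:=\frac1a\sum_i\mathbb{E}\|\mathbf{x}_{t,s}^i-\mathbf{x}_t\|^2$.

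\textbf{Closing the recursion.} Let $D:=\max_{j<\tau} D_j$ (equivalently, sum over $j$ and average). The display becomes
\[
D\le 2\tau\eta^2\sigma_x^2+6\tau^2\eta^2\sigma_G^2+6\tau^2\eta^2\mathbb{E}\|\nabla F(\mathbf{x}_t)\|^2+6\tau^2\eta^2\mathcal{L}^2 D .
\]
The hypothesis $\eta\le \frac{1}{2\sqrt3\tau\mathcal{L}}$ gives $6\tau^2\eta^2\mathcal{L}^2\le\frac12$. Absorbing the last term into the left side and multiplying by $2$ yields $D\le 4\tau\eta^2\sigma_x^2+12\tau^2\eta^2\sigma_G^2+12\tau^2\eta^2\mathbb{E}\|\nabla F(\mathbf{x}_t)\|^2$.

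\textbf{Main obstacle.} The $\sigma_G^2$ and gradient terms come out exactly as stated, but this route gives the constant $4$ on $\sigma_x^2$ rather than the stated $2$. To get $2$, I would not split the noise with the factor $2$. Instead I would use that the noise sum has zero conditional mean given the drift terms, so $\mathbb{E}\|N+M\|^2=\mathbb{E}\|N\|^2+\mathbb{E}\|M\|^2+2\mathbb{E}\langle N,M\rangle$. The catch is that the cross term $\mathbb{E}\langle N,M\rangle$ does not vanish in general, because later gradients depend on earlier noise. I would try to bound that cross term by something absorbable into the $\mathcal{L}^2 D$ term; if that fails, I would settle for the constant $4$. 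The recursion itself is routine once the step-size condition is used to absorb the $\mathcal{L}^2$ drift term.
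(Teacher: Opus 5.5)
\textbf{Gap: your argument proves a weaker bound.} As you note yourself, it gives $4\tau\eta^2\sigma_x^2$ instead of the stated $2\tau\eta^2\sigma_x^2$, and the repair you sketch is left open. As written, then, it does not establish the lemma, and the constant feeds directly into the theorem's $\sigma_x^2$ term.

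The loss comes from unrolling first and only then separating the accumulated noise $N$ from the accumulated drift $M$. Suppose you use a general split $(1+\gamma)\|N\|^2+(1+\gamma^{-1})\|M\|^2$ and keep all your other estimates. After absorbing with $3\tau^2\eta^2\mathcal{L}^2\le\frac14$, the noise coefficient is $\frac{(1+\gamma)\tau\eta^2}{1-\frac14(1+\gamma^{-1})}$. Its minimum over $\gamma>\frac13$ is exactly $4\tau\eta^2$, attained at $\gamma=1$. So no reweighting of your split recovers the factor $2$. Bounding $\mathbb{E}\langle N,M\rangle$ directly is also unattractive, for the reason you identified: later drift depends on earlier noise.

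\textbf{Missing idea: separate off only the newest noise, one step at a time.} The fresh noise $g_{t,j}^i-\nabla F_i(\mathbf{x}_{t,j}^i)$ has zero conditional mean given $\mathbf{x}_{t,j}^i$. Everything else in $\mathbf{x}_{t,j}^i-\mathbf{x}_t-\eta\nabla F_i(\mathbf{x}_{t,j}^i)$ is a function of $\mathbf{x}_{t,j}^i$, including all earlier noise, which sits inside $\mathbf{x}_{t,j}^i-\mathbf{x}_t$. Hence, exactly,
\begin{align*}
\mathbb{E}\|\mathbf{x}_{t,j+1}^i-\mathbf{x}_t\|^2 = \eta^2\,\mathbb{E}\|g_{t,j}^i-\nabla F_i(\mathbf{x}_{t,j}^i)\|^2 + \mathbb{E}\|\mathbf{x}_{t,j}^i-\mathbf{x}_t-\eta\nabla F_i(\mathbf{x}_{t,j}^i)\|^2 ,
\end{align*}
and the cross-term obstruction never appears. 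Only the deterministic part is then split, using the uneven Young weights $1+\frac{1}{2\tau-1}$ and $2\tau$. Your three-way split together with Assumptions 1 and 3 then gives
\begin{align*}
D_{j+1}\le\Bigl(1+\tfrac{1}{2\tau-1}+6\tau\eta^2\mathcal{L}^2\Bigr)D_j+\eta^2\sigma_x^2+6\tau\eta^2\bigl(\sigma_G^2+\mathbb{E}\|\nabla F(\mathbf{x}_t)\|^2\bigr).
\end{align*}
The step-size condition gives $6\tau\eta^2\mathcal{L}^2\le\frac{1}{2\tau}$, so the multiplier is at most $1+\frac{1}{\tau-1}$. Unrolling from $D_0=0$ produces the geometric factor
\[
\sum_{k=0}^{\tau-1}\Bigl(1+\tfrac{1}{\tau-1}\Bigr)^k=(\tau-1)\Bigl(\bigl(1+\tfrac{1}{\tau-1}\bigr)^{\tau}-1\Bigr)\le 2\tau .
\]
Multiplying out yields exactly $2\tau\eta^2\sigma_x^2+12\tau^2\eta^2\sigma_G^2+12\tau^2\eta^2\mathbb{E}\|\nabla F(\mathbf{x}_t)\|^2$.

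This is the paper's proof. Your treatment of the starting point $\mathbf{x}_{t,0}^i=\mathbf{x}_t$, and of the $\sigma_G^2$ and gradient terms, agrees with it.
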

\begin{proof}
For any client $i \in [m]$ and $j \in [\tau]$, we have
\begin{align}
    &\mathbb{E}[\| \mathbf{x}_{t,j+1}^{i} - \mathbf{x}_t \|^2] = \mathbb{E}[ \| \mathbf{x}_{t,j}^{i} - \mathbf{x}_t -\eta g_{t,j}^{i} \|^2 ] \nonumber \\
    &= \mathbb{E}[\| \mathbf{x}_{t,j}^{i} - \mathbf{x}_t -\eta \left( g_{t,j}^{i} - \nabla F_i(\mathbf{x}_{t,j}^{i}) + \nabla F_i(\mathbf{x}_{t,j}^{i}) - \nabla F_i(\mathbf{x}_t) + \nabla F_i(\mathbf{x}_t) - \nabla F(\mathbf{x}_t) + \nabla F(\mathbf{x}_t) \right) \|^2] \nonumber \\
    &= \mathbb{E}[\| \eta \left( g_{t,j}^{i} - \nabla F_i(\mathbf{x}_{t,j}^{i}) \right) \|^2] \nonumber \\
    &\quad + \mathbb{E}[\langle \eta \left( g_{t,j}^{i} - \nabla F_i(\mathbf{x}_{t,j}^{i}) \right), \mathbf{x}_{t,j}^{i} - \mathbf{x}_t -\eta \left( \nabla F_i(\mathbf{x}_{t,j}^{i}) - \nabla F_i(\mathbf{x}_t) + \nabla F_i(\mathbf{x}_t) - \nabla F(\mathbf{x}_t) + \nabla F(\mathbf{x}_t) \right) \rangle] \nonumber \\
    &\quad + \mathbb{E}[ \| \mathbf{x}_{t,j}^{i} - \mathbf{x}_t -\eta \left( \nabla F_i(\mathbf{x}_{t,j}^{i}) - \nabla F_i(\mathbf{x}_t) + \nabla F_i(\mathbf{x}_t) - \nabla F(\mathbf{x}_t) + \nabla F(\mathbf{x}_t) \right) \|^2] \nonumber \\
    &= \mathbb{E}[\| \eta \left( g_{t,j}^{i} - \nabla F_i(\mathbf{x}_{t,j}^{i}) \right) \|^2] \label{eq:meanzero} \\
    &\quad + \mathbb{E}[ \| \mathbf{x}_{t,j}^{i} - \mathbf{x}_t -\eta \left( \nabla F_i(\mathbf{x}_{t,j}^{i}) - \nabla F_i(\mathbf{x}_t) + \nabla F_i(\mathbf{x}_t) - \nabla F(\mathbf{x}_t) + \nabla F(\mathbf{x}_t) \right) \|^2] \nonumber \\
    &\leq \eta^2 \sigma_x^2 + \mathbb{E}[ \| \mathbf{x}_{t,j}^{i} - \mathbf{x}_t -\eta \left( \nabla F_i(\mathbf{x}_{t,j}^{i}) - \nabla F_i(\mathbf{x}_t) + \nabla F_i(\mathbf{x}_t) - \nabla F(\mathbf{x}_t) + \nabla F(\mathbf{x}_t) \right) \|^2] \nonumber \\
    &\leq \eta^2 \sigma_x^2 + \left(1 + \frac{1}{2\tau - 1} \right) \mathbb{E}[ \| \mathbf{x}_{t,j}^{i} - \mathbf{x}_t \|^2 ] \label{eq:split} \\
    &\quad + 2\tau \eta^2 \mathbb{E}[ \| \nabla F_i(\mathbf{x}_{t,j}^{i}) - \nabla F_i(\mathbf{x}_t) + \nabla F_i(\mathbf{x}_t) - \nabla F(\mathbf{x}_t) + \nabla F(\mathbf{x}_t) \|^2] \nonumber \\
    &\leq \eta^2 \sigma_x^2 + \left(1 + \frac{1}{2\tau - 1} \right) \mathbb{E}[ \| \mathbf{x}_{t,j}^{i} - \mathbf{x}_t \|^2 ] \nonumber \\
    &\quad + 6\tau \eta^2 \left( \mathbb{E}[ \| \nabla F_i(\mathbf{x}_{t,j}^{i}) - \nabla F_i(\mathbf{x}_t) \|^2] + \mathbb{E}[ \| \nabla F_i(\mathbf{x}_t) - \nabla F(\mathbf{x}_t) \|^2] + \mathbb{E}[ \| \nabla F(\mathbf{x}_t) \|^2] \right)  \nonumber \\
    &\leq \eta^2 \sigma_x^2 + \left(1 + \frac{1}{2\tau - 1} \right) \mathbb{E}[ \| \mathbf{x}_{t,j}^{i} - \mathbf{x}_t \|^2 ] \nonumber \\
    &\quad + 6\tau \eta^2 \sigma_G^2 + 6\tau \eta^2 \mathcal{L}^2 \mathbb{E}[ \| \mathbf{x}_{t,j}^{i} - \mathbf{x}_t \|^2] + 6\tau \eta^2 \mathbb{E}[ \| \nabla F(\mathbf{x}_t) \|^2] \nonumber \\
    &= \eta^2 \sigma_x^2 + 6\tau \eta^2 \sigma_G^2 + \left(1 + \frac{1}{2\tau - 1} + 6\tau \eta^2 L^2 \right) \mathbb{E}[ \| \mathbf{x}_{t,j}^{i} - \mathbf{x}_t \|^2 ] + 6\tau \eta^2 \mathbb{E}[ \| \nabla F(\mathbf{x}_t) \|^2], \nonumber
\end{align}    
where (\ref{eq:meanzero}) is because $\mathbb{E}[g_{t,j}^{i}] = \nabla F_i(\mathbf{x}_{t,j}^{i})$.
The (\ref{eq:split}) is based on the fact that, for any $\alpha > 0$,
\begin{align}
    \| \mathbf{a} + \mathbf{b} \|^2 \leq (1 + \frac{1}{\alpha}) \| \mathbf{a} \|^2 + (1 + \alpha) \| \mathbf{b} \|^2. \nonumber
\end{align}

Next, if $\eta \leq \frac{1}{2\sqrt{3} \tau\mathcal{L}}$, the above bound can be simplified as follows.
\begin{align}
    &\mathbb{E}[\| \mathbf{x}_{t,j+1}^{i} - \mathbf{x}_t \|^2] \nonumber \\
    &\leq \eta^2 \sigma_x^2 + 6\tau \eta^2 \sigma_G^2 + \left(1 + \frac{1}{2\tau - 1} + 6\tau \eta^2 \mathcal{L}^2 \right) \mathbb{E}[ \| \mathbf{x}_{t,j}^{i} - \mathbf{x}_t \|^2 ] + 6\tau \eta^2 \mathbb{E}[ \| \nabla F(\mathbf{x}_t) \|^2] \nonumber \\
    &\leq \eta^2 \sigma_x^2 + 6\tau \eta^2 \sigma_G^2 + \left(1 + \frac{1}{\tau - 1} \right) \mathbb{E}[ \| \mathbf{x}_{t,j}^{i} - \mathbf{x}_t \|^2 ] + 6\tau \eta^2 \mathbb{E}[ \| \nabla F(\mathbf{x}_t) \|^2]. \nonumber
\end{align}

Then, by unrolling the recursion until $j$ goes to $0$, we have
\begin{align}
    \mathbb{E}[\| \mathbf{x}_{t,j+1}^{i} - \mathbf{x}_t \|^2] &\leq  \sum_{k=0}^{j} \left(1 + \frac{1}{\tau - 1} \right)^k \left( \eta^2 \sigma_x^2 + 6\tau \eta^2 \sigma_G^2 + 6\tau \eta^2 \mathbb{E}[ \| \nabla F(\mathbf{x}_t) \|^2] \right) \nonumber \\
    &\leq (\tau - 1)((1+\frac{1}{\tau - 1})^{k} - 1) \left( \eta^2 \sigma_x^2 + 6\tau \eta^2 \sigma_G^2 + 6\tau \eta^2 \mathbb{E}[ \| \nabla F(\mathbf{x}_t) \|^2] \right) \nonumber \\
    &\leq (\tau - 1)((1+\frac{1}{\tau - 1})^{\tau} - 1) \left( \eta^2 \sigma_x^2 + 6\tau \eta^2 \sigma_G^2 + 6\tau \eta^2 \mathbb{E}[ \| \nabla F(\mathbf{x}_t) \|^2] \right) \nonumber
\end{align}
Here, the $(\tau - 1) \left((1 + \frac{1}{\tau - 1})^\tau - 1\right)$ can be simplified as follows:
\begin{align}
    (\tau - 1)\left(\left(1 + \frac{1}{\tau - 1}\right)^\tau - 1\right) &= (\tau - 1)\left(\left(1 + \frac{1}{\tau - 1}\right)^{\tau - 1} \cdot \left(1 + \frac{1}{\tau - 1}\right) - 1\right) \nonumber \\
    &\leq (\tau - 1) \left( e \left(1 + \frac{1}{\tau - 1}\right) - 1\right) \label{eq:e} \\
    & = \tau (e - 1) + 1 \nonumber \\
    & \leq 2\tau, \nonumber
\end{align}
where (\ref{eq:e}) holds because $(1 + 1 / \beta)^\beta \leq e$.
Therefore, we have
\begin{align}
    \mathbb{E}[\| \mathbf{x}_{t,j+1}^{i} - \mathbf{x}_t \|^2] &\leq 2\tau\eta^2 \sigma_x^2 + 12\tau^2\eta^2 \sigma_G^2 + 12\tau^2\eta^2 \mathbb{E}[ \| \nabla F(\mathbf{x}_t) \|^2].
\end{align}
Finally, because the right-hand side is independent of $a$, we have
\begin{align}
    \frac{1}{a} \sum_{i \in \mathcal{A}} \mathbb{E}[\| \mathbf{x}_{t,j+1}^{i} - \mathbf{x}_t \|^2] &\leq 2\tau\eta^2 \sigma_x^2 + 12\tau^2\eta^2 \sigma_G^2 + 12\tau^2\eta^2 \mathbb{E}[ \| \nabla F(\mathbf{x}_t) \|^2] \nonumber
\end{align}
By replacing $j+1$ with $j$, we have the final bound:
\begin{align}
    \frac{1}{a} \sum_{i \in \mathcal{A}} \mathbb{E}[\| \mathbf{x}_{t,j}^{i} - \mathbf{x}_t \|^2] &\leq 2\tau\eta^2 \sigma_x^2 + 12\tau^2\eta^2 \sigma_G^2 + 12\tau^2\eta^2 \mathbb{E}[ \| \nabla F(\mathbf{x}_t) \|^2].
\end{align}
\end{proof}


\begin{lemma}[partial model discrepancy]
For any step-size satisfying $\eta \leq \frac{1}{2\sqrt{6} L\tau'}$, we have the following result:
\begin{align}
    \frac{1}{a} \sum_{i \in \mathcal{A}} \mathbb{E}[\| \mathbf{z}_{t,j}^{i} - \mathbf{z}_t \|^2] & \leq 2\eta^2 \tau' \sigma_z^2 + 8\tau'\eta^2 \left(\alpha (1-\alpha) + 2\tau' \right) \sigma_G^2 + 16 \eta^2 (\tau')^2 \mathbb{E}[ \| \nabla_z F(\mathbf{x}_t) \|^2]. \nonumber
\end{align}
\end{lemma}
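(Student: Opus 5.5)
The plan is to mirror the proof of Lemma~\ref{lemma:full_discrepancy}, replacing the full-model stochastic gradient $g_{t,j}^i$ with the mixed tail-round estimator $\tilde{p}_{t,j}^i$. The tail-round recursion is $\mathbf{z}_{t,j+1}^i = \mathbf{z}_{t,j}^i - \eta \tilde{p}_{t,j}^i$, with $\mathbf{z}_{t,0}^i = \mathbf{z}_t$.

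\textbf{Step 1: split off the noise.} I will write
\begin{align}
\tilde{p}_{t,j}^i = \bigl(\tilde{p}_{t,j}^i - \mathbb{E}[\tilde{p}_{t,j}^i]\bigr) + \mathbb{E}[\tilde{p}_{t,j}^i], \nonumber
\end{align}
where $\mathbb{E}[\tilde{p}_{t,j}^i] = \alpha \nabla_z F_i(\mathbf{x}_{t,j}^i) + (1-\alpha)\nabla_z F_k(\mathbf{x}_{t,j}^i)$. The noise part has conditional mean zero, so the cross term vanishes, as in (\ref{eq:meanzero}). By the Variance of Mixed Gradient Estimator lemma, the noise part contributes at most $\eta^2(\sigma_z^2 + 4\alpha(1-\alpha)\sigma_G^2)$.

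\textbf{Step 2: bound the deterministic drift.} For the mean part I will use $\|\mathbf{a}+\mathbf{b}\|^2 \le (1+\frac{1}{2\tau'-1})\|\mathbf{a}\|^2 + 2\tau'\|\mathbf{b}\|^2$. I then decompose the mean into four pieces:
\begin{align}
&\alpha\bigl(\nabla_z F_i(\mathbf{x}_{t,j}^i)-\nabla_z F_i(\mathbf{x}_t)\bigr) + (1-\alpha)\bigl(\nabla_z F_k(\mathbf{x}_{t,j}^i)-\nabla_z F_k(\mathbf{x}_t)\bigr) \nonumber \\
&\qquad + \alpha\bigl(\nabla_z F_i(\mathbf{x}_t)-\nabla_z F(\mathbf{x}_t)\bigr) + (1-\alpha)\bigl(\nabla_z F_k(\mathbf{x}_t)-\nabla_z F(\mathbf{x}_t)\bigr) + \nabla_z F(\mathbf{x}_t). \nonumber
\end{align}
Applying Jensen/Cauchy–Schwarz with a small constant factor gives three contributions:
\begin{itemize}
\item a smoothness term $\mathcal{L}^2\|\mathbf{z}_{t,j}^i-\mathbf{z}_t\|^2$, using Assumption~\ref{asm:smooth} and the fact that $\mathbf{y}$ is frozen in the tail round, so $\mathbf{x}_{t,j}^i - \mathbf{x}_t = \mathbf{z}_{t,j}^i - \mathbf{z}_t$;
\item a heterogeneity term of order $\sigma_G^2$, via convexity of $\|\cdot\|^2$ over the $\alpha$-mixture and Assumption~\ref{asm:var};
\item a gradient term $\|\nabla_z F(\mathbf{x}_t)\|^2$.
\end{itemize}
With a factor of $4$ on each piece, I expect a recursion of the form
\begin{align}
\mathbb{E}\|\mathbf{z}_{t,j+1}^i-\mathbf{z}_t\|^2 &\le \Bigl(1+\tfrac{1}{2\tau'-1}+8\tau'\eta^2\mathcal{L}^2\Bigr)\mathbb{E}\|\mathbf{z}_{t,j}^i-\mathbf{z}_t\|^2 + \eta^2\sigma_z^2 \nonumber \\
&\qquad + 4\eta^2\bigl(\alpha(1-\alpha) + 2\tau'\bigr)\sigma_G^2 + 8\tau'\eta^2\mathbb{E}\|\nabla_z F(\mathbf{x}_t)\|^2. \nonumber
\end{align}
The noise term $4\alpha(1-\alpha)\sigma_G^2$ is what produces the $\alpha(1-\alpha)$ piece of the target.

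\textbf{Step 3: absorb and unroll.} The step-size condition $\eta \le \frac{1}{2\sqrt{6}\mathcal{L}\tau'}$ is designed so that $c\,\tau'\eta^2\mathcal{L}^2 \le \frac{1}{(2\tau'-1)}$ with the chosen constant $c$. Under it, the contraction coefficient is at most $1+\frac{1}{\tau'-1}$. Unrolling over at most $\tau'$ steps and using $(\tau'-1)\bigl((1+\frac{1}{\tau'-1})^{\tau'}-1\bigr)\le 2\tau'$, exactly as in (\ref{eq:e}), multiplies the per-step constant by $2\tau'$. This yields
\begin{align}
2\eta^2\tau'\sigma_z^2 + 8\tau'\eta^2\bigl(\alpha(1-\alpha)+2\tau'\bigr)\sigma_G^2 + 16\eta^2(\tau')^2\,\mathbb{E}\|\nabla_z F(\mathbf{x}_t)\|^2. \nonumber
\end{align}
The right-hand side does not depend on $i$, so averaging over $i\in\mathcal{A}$ gives the claim.

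\textbf{Main obstacle.} The hard part is the constant bookkeeping in Step 2: the split must land exactly on coefficients $8\tau'\cdot 2\tau'$ for $\sigma_G^2$ and $16(\tau')^2$ for the gradient term, while the $\mathcal{L}^2$ coefficient must stay absorbable under $\frac{1}{2\sqrt{6}}$. My first attempt is a three-way split with factor $3$, as in Lemma~\ref{lemma:full_discrepancy}, grouping the two heterogeneity pieces together and bounding their mixture by $\sigma_G^2$. If the constants do not match, I will fall back to a four-way split with factor $4$, which gives the displayed $8\tau'$ multipliers and the $6$ inside $\sqrt{6}$.
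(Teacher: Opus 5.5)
Your proposal is correct and follows the paper's proof essentially step for step: the noise/mean split controlled by the mixed-variance lemma, the Young split with weights $1+\frac{1}{2\tau'-1}$ and $2\tau'$, absorption into $1+\frac{1}{\tau'-1}$, and unrolling with the factor $2\tau'$. The only difference is that the paper applies Jensen over the $\alpha$-mixture first and then two nested two-way splits, obtaining coefficients $4\tau'\eta^2\mathcal{L}^2$, $8\tau'\eta^2$, $8\tau'\eta^2$ on $\|\mathbf{z}_{t,j}^i-\mathbf{z}_t\|^2$, $\sigma_G^2$ and $\|\nabla_z F(\mathbf{x}_t)\|^2$, whereas your four-way split gives $8\tau'\eta^2\mathcal{L}^2$, which is still absorbable under $\eta\le\frac{1}{2\sqrt{6}\mathcal{L}\tau'}$; and the constants need not land exactly on the target, since your three-way split already yields $12(\tau')^2\eta^2$ in place of $16(\tau')^2\eta^2$ (in the $\sigma_G^2$ term and in front of $\mathbb{E}\|\nabla_z F(\mathbf{x}_t)\|^2$), a stronger bound that implies the statement.
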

\begin{proof}
Here, we focus on the tail round only because the analysis of the full model update is identical to lemma 3 in ~\cite{reddi2020adaptive}. 
For any client $i \in [m]$ and $j \in [\tau']$, we have
\begin{align}
    \mathbb{E}[\| \mathbf{z}_{t,j+1}^{i} - \mathbf{z}_t \|^2] &= \mathbb{E}[ \| \mathbf{z}_{t,j}^{i} - \mathbf{z}_t -\eta \tilde{p}_{t,j}^{i} \|^2 ] \nonumber \\
    &= \mathbb{E}[\| \mathbf{z}_{t,j}^{i} - \mathbf{z}_t -\eta \left( \tilde{p}_{t,j}^{i} - \mathbb{E}[\tilde{p}_{t,j}^i] + \mathbb{E}[\tilde{p}_{t,j}^i] \right) \|^2] \nonumber \\
    &= \mathbb{E}[\| \eta \left( \tilde{p}^i_{t,j} - \mathbb{E}[\tilde{p}_{t,j}^i]  \right) \|^2] + \mathbb{E}[\| \mathbf{z}_{t,j}^{i} - \mathbf{z}_t -\eta \mathbb{E}[\tilde{p}_{t,j}^i]  \|^2] \nonumber \\
    &\qquad + \mathbb{E}[\langle 2\eta \left( \tilde{p}^i_{t,j} - \mathbb{E}[\tilde{p}_{t,j}^i] \right), \mathbf{z}_{t,j}^{i} - \mathbf{z}_t -\eta \mathbb{E}[\tilde{p}_{t,j}^i]  \rangle] \nonumber \\
    &= \mathbb{E}[\| \eta \left( \tilde{p}^i_{t,j} - \mathbb{E}[\tilde{p}_{t,j}^i]  \right) \|^2] + \mathbb{E}[\| \mathbf{z}_{t,j}^{i} - \mathbf{z}_t -\eta \mathbb{E}[\tilde{p}_{t,j}^i] \|^2], \label{eq:zeroout2}
\end{align}
where (\ref{eq:zeroout2}) holds since $\mathbb{E}[p_{t,j}^i - \mathbb{E}[\tilde{p}_{t,j}^i] ] = 0$.

We then further bound the right-hand side of (\ref{eq:zeroout2}) as follows:
\begin{align}
    \mathbb{E}[\| \mathbf{z}_{t,j+1}^{i} - \mathbf{z}_t \|^2] &= \mathbb{E}[\| \eta \left( \tilde{p}^i_{t,j} - \mathbb{E}[\tilde{p}_{t,j}^i] \right) \|^2] + \mathbb{E}[\| \mathbf{z}_{t,j}^{i} - \mathbf{z}_t -\eta \mathbb{E}[\tilde{p}_{t,j}^i]  \|^2] \nonumber \\
    &\leq \eta^2 \left( \sigma_z^2 + 4\alpha(1-\alpha) \sigma_G^2 \right) + \mathbb{E}[\| \mathbf{z}_{t,j}^{i} - \mathbf{z}_t -\eta \mathbb{E}[\tilde{p}_{t,j}^i] \|^2] \nonumber \\
    &\leq \eta^2 \left( \sigma_z^2 + 4\alpha(1-\alpha) \sigma_G^2 \right) \nonumber \\
    &\qquad + \left(1 + \frac{1}{2\tau' - 1}\right) \mathbb{E}[\| \mathbf{z}_{t,j}^{i} - \mathbf{z}_t \|^2] + \underset{T_6}{\underbrace{ 2\tau'\eta^2 \mathbb{E}\left[ \left\| \tilde{p}_{t,j}^i \right\|^2 \right]  }} \label{eq:split2}
\end{align}    
where (\ref{eq:split2}) holds due to the fact that, for any $\alpha > 0$,
\begin{align}
    \| \mathbf{a} + \mathbf{b} \|^2 \leq (1 + \frac{1}{\alpha}) \| \mathbf{a} \|^2 + (1 + \alpha) \| \mathbf{b} \|^2. \nonumber
\end{align}
The $T_6$ can be bounded as follows:
\begin{align}
    T_6 &= 2\tau' \eta^2 \mathbb{E}[\| \tilde{p}_{t,j}^i \|^2] \nonumber \\
    & = 2\tau' \eta^2 \mathbb{E}\left[ \left\| \alpha \nabla_z F_i(\mathbf{x}_{t,j}^i) + (1-\alpha)\nabla_z F_k(\mathbf{x}_{t,j}^i) \right\|^2 \right] \nonumber \\
    &\leq 2\tau' \eta^2 \alpha \mathbb{E}[\| \nabla_z F_i(\mathbf{x}_{t,j}^i) \|^2] + 2\tau' \eta^2 (1 - \alpha) \mathbb{E}[\| \nabla_z F_k(\mathbf{x}_{t,j}^i) \|^2] \label{eq:jensen}
\end{align}
where (\ref{eq:jensen}) holds based on the convexity of $\ell_2$ norm and Jensen's inequality.
Then, we can further bound (\ref{eq:jensen}) as follows:
\begin{align}
    T_6 &\leq 2\tau' \eta^2 \alpha \mathbb{E}[\| \nabla_z F_i(\mathbf{x}_{t,j}^i) -\nabla_z F_i(\mathbf{x}_t) + \nabla_z F_i(\mathbf{x}_t) \|^2] \nonumber \\
    &\qquad + 2\tau' \eta^2 (1 - \alpha) \mathbb{E}[\| \nabla_z F_k(\mathbf{x}_{t,j}^i) -\nabla_z F_k(\mathbf{x}_t) + \nabla_z F_k(\mathbf{x}_t) \|^2] \nonumber \\
    &\leq 4\tau' \eta^2 \alpha \mathbb{E}[\| \nabla_z F_i(\mathbf{x}_{t,j}^i) -\nabla_z F_i(\mathbf{x}_t) \|^2] + 4\tau' \eta^2 \alpha \mathbb{E}[ \| \nabla_z F_i(\mathbf{x}_t) \|^2] \nonumber \\
    &\qquad + 4\tau' \eta^2 (1 - \alpha) \mathbb{E}[\| \nabla_z F_k(\mathbf{x}_{t,j}^i) -\nabla_z F_k(\mathbf{x}_t) \|^2] + 4\tau' \eta^2 (1 - \alpha) \mathbb{E}[\| \nabla_z F_k(\mathbf{x}_t) \|^2] \nonumber
\end{align}
Based on Assumption 4, we have:
\begin{align}
    T_6 &\leq 4 \tau' \eta^2 \mathcal{L}^2 \alpha \mathbb{E}[\| \mathbf{z}_{t,j}^i - \mathbf{z}_t \|^2] + 4 \tau'\eta^2 \alpha \mathbb{E}\left[\| \nabla_z F_i(\mathbf{x}_t) - \nabla_z F(\mathbf{x}_t) + \nabla_z F(\mathbf{x}_t) \|^2 \right] \nonumber \\
    &\qquad + 4\tau' \eta^2 \mathcal{L}^2 (1 - \alpha) \mathbb{E}[\| \mathbf{z}_{t,j}^i -\mathbf{z}_t \|^2] + 4\tau' \eta^2 (1 - \alpha) \mathbb{E}[\| \nabla_z F_k(\mathbf{x}_t) - \nabla_z F(\mathbf{x}_t) + \nabla_z F(\mathbf{x}_t) \|^2] \nonumber \\
    &= 4\tau' \eta^2 \mathcal{L}^2 \mathbb{E}\left[\| \mathbf{z}_{t,j}^i - \mathbf{z}_t \|^2 \right] + 8 \tau'\eta^2 \alpha \mathbb{E} \left[ \| \nabla_z F_i(\mathbf{x}_t) - \nabla_z F(\mathbf{x}_t) \|^2 \right] + 8\tau' \eta^2 \alpha \mathbb{E} \left[ \| \nabla_z F(\mathbf{x}_t) \|^2 \right] \nonumber \\
    &\qquad + 8\tau' \eta^2 (1 - \alpha) \mathbb{E}[\| \nabla_z F_k(\mathbf{x}_t) - \nabla_z F(\mathbf{x}_t) \|^2] + 8\tau' \eta^2 (1 - \alpha) \mathbb{E}[\| \nabla_z F(\mathbf{x}_t) \|^2] \nonumber \\
    &\leq 4\tau' \eta^2 \mathcal{L}^2 \mathbb{E}\left[\| \mathbf{z}_{t,j}^i - \mathbf{z}_t \|^2 \right] + 8 \tau'\eta^2 \sigma_G^2 + 8\tau' \eta^2 \mathbb{E}\left[ \left\| \nabla_z F(\mathbf{x}_t) \right\|^2 \right]. \label{eq:t6_bound}
\end{align}

By plugging in (\ref{eq:t6_bound}) into (\ref{eq:split2}), we have:
\begin{align}
    \mathbb{E}[\| \mathbf{z}_{t,j+1}^{i} - \mathbf{z}_t \|^2] &\leq \eta^2 \left( \sigma_z^2 + 4\alpha(1-\alpha) \sigma_G^2 \right) \nonumber\\
    &\qquad + \left( 1 + \frac{1}{2\tau' - 1} + 4\tau'\eta^2 \mathcal{L}^2 \right) \mathbb{E}[\| \mathbf{z}_{t,j}^{i} - \mathbf{z}_t \|^2] \nonumber \\
    &\qquad + 8\tau' \eta^2 \sigma_G^2 + 8\tau' \eta^2 \mathbb{E}\left[ \left\| \nabla_z F(\mathbf{x}_t) \right\|^2 \right] \nonumber
\end{align}
If $\eta \leq \frac{1}{2\sqrt{2}\mathcal{L}\tau'}$, the bound can be simplified as follows:
\begin{align}
    \mathbb{E}[\| \mathbf{z}_{t,j+1}^{i} - \mathbf{z}_t \|^2] &\leq \eta^2 \left( \sigma_z^2 + 4\alpha(1-\alpha) \sigma_G^2 \right) \nonumber\\
    &\qquad + \left( 1 + \frac{1}{\tau' - 1} \right) \mathbb{E}[\| \mathbf{z}_{t,j}^{i} - \mathbf{z}_t \|^2] \nonumber \\
    &\qquad + 8\tau' \eta^2 \sigma_G^2 + 8\tau' \eta^2 \mathbb{E}\left[ \left\| \nabla_z F(\mathbf{x}_t) \right\|^2 \right] \nonumber
\end{align}
Next, by unrolling the recursion until $j$ goes to $0$ within the tail round, we have a telescoping sum as follows:
\begin{align}
    &\mathbb{E}[\| \mathbf{z}_{t,j+1}^{i} - \mathbf{z}_t \|^2] \nonumber \\
    &\qquad \leq \sum_{i=0}^{j} \left(1 + \frac{1}{\tau' - 1} \right)^i \left( \mathbb{E}[\| \mathbf{z}_{t,j}^{i} - \mathbf{z}_t \|^2] + \eta^2 \sigma_z^2 + 4\eta^2 \left(\alpha (1-\alpha) + 2\tau' \right) \sigma_G^2 + 8\tau' \eta^2 \mathbb{E}[ \| \nabla_z F(\mathbf{x}_t) \|^2] \right) \nonumber \\
    &\qquad = \left(\tau' - 1 \right) \left( \left(1 + \frac{1}{\tau' - 1} \right)^{j} - 1\right) \left( \eta^2 \sigma_z^2 + 4\eta^2 \left(\alpha (1-\alpha) + 2\tau' \right) \sigma_G^2 + 8\tau' \eta^2 \mathbb{E}[ \| \nabla_z F(\mathbf{x}_t) \|^2] \right) \nonumber \\
    &\qquad \leq \left(\tau' - 1 \right) \left( \left(1 + \frac{1}{\tau' - 1} \right)^{\tau'} - 1\right) \left( \eta^2 \sigma_z^2 + 4\eta^2 \left(\alpha (1-\alpha) + 2\tau' \right) \sigma_G^2 + 8\tau' \eta^2 \mathbb{E}[ \| \nabla_z F(\mathbf{x}_t) \|^2] \right) \nonumber \\
    &\qquad \leq 2\eta^2 \tau' \sigma_z^2 + 8\tau'\eta^2 \left(\alpha (1-\alpha) + 2\tau' \right) \sigma_G^2 + 16\eta^2 (\tau')^2 \mathbb{E}[ \| \nabla_z F(\mathbf{x}_t) \|^2], \label{eq:e_bound}
\end{align}
where (\ref{eq:e_bound}) holds because:
\begin{align}
    (\tau - 1)\left(\left(1 + \frac{1}{\tau - 1}\right)^\tau - 1\right) &= (\tau - 1)\left(\left(1 + \frac{1}{\tau - 1}\right)^{\tau - 1} \cdot \left(1 + \frac{1}{\tau - 1}\right) - 1\right) \nonumber \\
    &\leq (\tau - 1) \left( e \left(1 + \frac{1}{\tau - 1}\right) - 1\right) \label{eq:e2} \\
    & = \tau (e - 1) + 1 \nonumber \\
    & \leq 2\tau, \nonumber
\end{align}
where (\ref{eq:e2}) holds because $(1 + 1 / \beta)^\beta \leq e$.

Finally, because the right-hand side is independent of $a$, we have
\begin{align}
    \frac{1}{a} \sum_{i \in \mathcal{A}} \mathbb{E}[\| \mathbf{z}_{t,j+1}^{i} - \mathbf{z}_t \|^2] \leq 2\eta^2 \tau' \sigma_z^2 + 8\tau'\eta^2 \left(\alpha (1-\alpha) + 2\tau' \right) \sigma_G^2 + 16 \eta^2 (\tau')^2 \mathbb{E}[ \| \nabla_z F(\mathbf{x}_t) \|^2]. \nonumber
\end{align}
By replacing $j+1$ with $j$ for simplicity, we have:
\begin{align}
    \frac{1}{a} \sum_{i \in \mathcal{A}} \mathbb{E}[\| \mathbf{z}_{t,j}^{i} - \mathbf{z}_t \|^2] &
    \leq 2\eta^2 \tau' \sigma_z^2 + 8\tau'\eta^2 \left(\alpha (1-\alpha) + 2\tau' \right) \sigma_G^2 + 16 \eta^2 (\tau')^2 \mathbb{E}[ \| \nabla_z F(\mathbf{x}_t) \|^2]. \nonumber
\end{align}
\end{proof}

\begin{theorem}
Under Assumption $1 \sim 3$, if $\eta \leq \min \{\frac{1}{12\mathcal{L}\tau}, \frac{1}{8\mathcal{L}\sqrt{\tau / (\tau')^3}} \}$, the federated optimization with activation sharing guarantees the following condition.
\begin{align}
    \sum_{t=0}^{T-1} \| \nabla F(\mathbf{x}_t) \|^2 &\leq \frac{4}{\eta \tau}\left( F(\mathbf{x}_0) - \mathbb{E}[F(\mathbf{x}_{T})] \right) \nonumber \\
    &\qquad + 4 \eta \mathcal{L}T \left(3\eta \mathcal{L} + \frac{1}{a} \right) \sigma_x^2 + 4\tau' \eta \mathcal{L}T \left( \eta \mathcal{L} + \frac{4}{\tau} + \frac{1}{a\tau} \right) \sigma_z^2 \nonumber \\
    &\qquad + \frac{8\eta \tau' \mathcal{L} T}{\tau} \left( 2\eta \mathcal{L} \tau' (\alpha(1-\alpha) + 2\tau') + \frac{1}{a} (1 + \alpha - 2\alpha^2) \right) \sigma_G^2. \nonumber
\end{align}
\end{theorem}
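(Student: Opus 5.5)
The plan is to combine the framework lemma (Lemma~\ref{lemma:lemma2_asfl}) with the two discrepancy lemmas, and then absorb the resulting gradient-norm terms into the left-hand side.

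\textbf{Step 1: substitute the discrepancy bounds.} Start from the bound of the framework lemma. Each inner sum is a sum over $j$ of per-step averages over $\mathcal{A}$.

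For the full-model term, invoke Lemma~\ref{lemma:full_discrepancy}. Summing over $j<\tau$ gives
\begin{align}
\frac{3\mathcal{L}^2}{a\tau}\sum_{i\in\mathcal{A}}\sum_{j=0}^{\tau-1}\|\mathbf{x}_{t,j}^i-\mathbf{x}_t\|^2 \le 3\mathcal{L}^2\left(2\tau\eta^2\sigma_x^2+12\tau^2\eta^2\sigma_G^2+12\tau^2\eta^2\|\nabla F(\mathbf{x}_t)\|^2\right). \nonumber
\end{align}

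For the partial-model term, invoke Lemma~\ref{lemma:partial_discrepancy_asfl}. Summing over $j<\tau'$ gives
\begin{align}
\frac{\mathcal{L}^2\tau'}{\tau}\left(2\eta^2\tau'\sigma_z^2+8\tau'\eta^2(\alpha(1-\alpha)+2\tau')\sigma_G^2+16\eta^2(\tau')^2\|\nabla_z F(\mathbf{x}_t)\|^2\right). \nonumber
\end{align}

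\textbf{Step 2: absorb the gradient terms.} Since $\mathbf{z}$ is a coordinate block of $\mathbf{x}$, we have $\|\nabla_z F\|^2\le\|\nabla F\|^2$. After summing over $t$, the total coefficient multiplying $\sum_t\|\nabla F(\mathbf{x}_t)\|^2$ on the right is
\[
36\eta^2\mathcal{L}^2\tau^2+16\eta^2\mathcal{L}^2(\tau')^4/\tau .
\]
The step-size conditions control each piece:
\begin{itemize}
\item $\eta\le 1/(12\mathcal{L}\tau)$ makes the first piece at most $1/4$;
\item $\eta\le 1/(8\mathcal{L}\sqrt{\tau/(\tau')^3})$ is meant to make the second piece at most $1/4$.
\end{itemize}
The total is then at most $1/2$. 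Moving this half to the left and multiplying through by $2$ doubles every constant, which turns $\frac{2}{\eta\tau}$ into the $\frac{4}{\eta\tau}$ in the statement. We must also check that this condition implies those of the lemmas: $\eta\le\min\{\frac{1}{\tau\mathcal{L}},\frac{1}{4\tau'\mathcal{L}}\}$ and $\eta\le\frac{1}{2\sqrt6\mathcal{L}\tau'}$. This appears to need an implicit relation such as $\tau'\lesssim\tau$, consistent with the regime $\tau'=O(\tau^{1/3})$ of the remarks. I will state this explicitly if required.

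\textbf{Step 3: collect the variance terms.} After doubling:
\begin{itemize}
\item $\sigma_x^2$: the coefficient is $2(6\eta^2\mathcal{L}^2\tau+2\eta\mathcal{L}/a)T$. This should be matched to $4\eta\mathcal{L}T(3\eta\mathcal{L}+1/a)$, possibly after loosening $\tau$ factors.
\item $\sigma_z^2$: the contributions are $2\cdot 2\eta^2\mathcal{L}^2(\tau')^2/\tau$ and $2(4+1/a)2\eta\tau'\mathcal{L}/\tau$. These give $4\tau'\eta\mathcal{L}T(\eta\mathcal{L}\tau'/\tau+4/\tau+1/(a\tau))$, and $\tau'/\tau\le1$ yields the stated form.
\item $\sigma_G^2$: the partial-discrepancy piece becomes $16\eta^2\mathcal{L}^2(\tau')^2(\alpha(1-\alpha)+2\tau')/\tau$, which is exactly the first summand inside the stated bracket. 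The framework term becomes $8\eta\tau'\mathcal{L}(1+\alpha-2\alpha^2)/(a\tau)$, which is the second summand.
\end{itemize}
The full-model $\sigma_G^2$ term $72\eta^2\mathcal{L}^2\tau^2\sigma_G^2$ does not appear in the statement. It must either be absorbed using the step-size condition or dropped to match the paper's accounting, so I will track it carefully.

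\textbf{Main obstacle.} The main difficulty is the constant bookkeeping in Step 2: checking that the second step-size condition really controls the $(\tau')^4/\tau$ factor, and reconciling the stray $\sigma_x^2$ and $\sigma_G^2$ terms from the full-model discrepancy with the stated constants. My first attempt will be to bound them crudely using $\eta\mathcal{L}\tau\le 1/12$, which converts $\eta^2\mathcal{L}^2\tau^2$ into $\eta\mathcal{L}\cdot O(1)$ factors.
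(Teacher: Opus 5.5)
Your route is the paper's own: plug the full- and partial-model discrepancy lemmas into the framework lemma, use $\|\nabla_z F\|^2\le\|\nabla F\|^2$, absorb half of $\sum_t\|\nabla F(\mathbf{x}_t)\|^2$, and double. The genuine gap is the part you left open, and no bookkeeping can close it.

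\textbf{The leftover terms.} After doubling, this route yields $12\tau\eta^2\mathcal{L}^2T\sigma_x^2$ and an extra $72\eta^2\mathcal{L}^2\tau^2T\sigma_G^2$. The paper reaches the printed constants only by two unjustified moves. When it invokes the step-size condition, the $36\tau^2\eta^2\mathcal{L}^2T\sigma_G^2$ term simply disappears. In its final equality it writes $12\eta^2\mathcal{L}^2$ where $12\tau\eta^2\mathcal{L}^2$ was derived. Your crude fix does not rescue either:
\begin{itemize}
\item $\eta\mathcal{L}\tau\le 1/12$ gives $72\eta^2\mathcal{L}^2\tau^2\le 6\eta\mathcal{L}\tau$, but the stated $\sigma_G^2$ coefficient is only $O\bigl(\eta^2\mathcal{L}^2(\tau')^3/\tau+\eta\mathcal{L}\tau'/(a\tau)\bigr)$.
\item Your $\sigma_x^2$ coefficient $12\tau\eta^2\mathcal{L}^2+4\eta\mathcal{L}/a$ exceeds the stated $12\eta^2\mathcal{L}^2+4\eta\mathcal{L}/a$ whenever $\tau\ge 2$, and no other term can absorb the difference.
\end{itemize}

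\textbf{The $\sigma_G^2$ term is genuine client drift.} It is not slack, as this example shows.
\begin{itemize}
\item Take two clients with exact gradients and $\mathbf{x}=(y,z)$. Let $F_1=\frac{\mathcal{L}}{2}h(y)-Gy+\frac{\mathcal{L}}{2}z^2$ and $F_2=Gy+\frac{\mathcal{L}}{2}z^2$, where $h$ is the Huber function with $h''=1$ on $[-G/\mathcal{L},G/\mathcal{L}]$ and $h''=0$ outside.
\item These satisfy all assumptions with $\sigma_G^2=\frac{9}{4}G^2$ and any $\sigma_x,\sigma_z>0$.
\item The tail round only contracts $z$, whatever $\alpha$ is. From $\mathbf{x}_0=0$, $y$ runs plain FedAvg inside the quadratic region and converges to $y^\ast=\frac{G}{\mathcal{L}}\bigl(1-\frac{\eta\mathcal{L}\tau}{1-(1-\eta\mathcal{L})^{\tau}}\bigr)$. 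There $\|\nabla F\|\approx(\tau-1)\eta\mathcal{L}G/4$.
\item Fix $\tau'$ and take $\eta=\frac{1}{12\mathcal{L}\tau}$, which meets every step-size condition. Each round then contributes about $G^2/2304$ to the left-hand side.
\item On the right-hand side, apart from the $T$-independent initial-gap term, the per-round contribution is $O(\sigma_G^2/\tau^2)$ plus terms that vanish as $\sigma_x,\sigma_z\to 0$. The inequality therefore fails for large $\tau$ and $T$.
\end{itemize}
So what your argument, and the paper's, actually proves is the stated bound with $12\tau\eta^2\mathcal{L}^2$ in place of $12\eta^2\mathcal{L}^2$ and an additional $72\eta^2\mathcal{L}^2\tau^2T\sigma_G^2$. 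You should state that result rather than aim at the printed constants.

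\textbf{Step-size points.}
\begin{itemize}
\item By your own Step~1, the coefficient of $\sum_t\|\nabla_z F(\mathbf{x}_t)\|^2$ is $16\eta^2\mathcal{L}^2(\tau')^3/\tau$, not $(\tau')^4/\tau$.
\item Under your extra assumption $\tau'\le\tau$, the first condition $\eta\le\frac{1}{12\mathcal{L}\tau}$ alone bounds this coefficient by $(\tau')^3/(9\tau^3)\le 1/9$. It also implies the lemma conditions $\eta\le\frac{1}{\tau\mathcal{L}}$, $\eta\le\frac{1}{4\tau'\mathcal{L}}$, $\eta\le\frac{1}{2\sqrt{3}\tau\mathcal{L}}$ and $\eta\le\frac{1}{2\sqrt{6}\mathcal{L}\tau'}$. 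That part of your plan goes through.
\item The second condition as printed equals $\eta\le\frac{1}{8\mathcal{L}}\sqrt{(\tau')^3/\tau}$. It only bounds the coefficient by $(\tau')^6/(4\tau^2)$, so on its own it works only when $(\tau')^3\le\tau$. The condition actually needed is $\eta\le\frac{1}{8\mathcal{L}}\sqrt{\tau/(\tau')^3}$.
\end{itemize}
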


\begin{proof}
Based on lemma~\ref{lemma:lemma2_asfl}, ~\ref{lemma:full_discrepancy}, and~\ref{lemma:partial_discrepancy_asfl}, we have:
\begin{align}
    \sum_{t=0}^{T-1} \mathbb{E}\left[ \left\| \nabla F(\mathbf{x}_t) \right\|^2 \right] & \leq \frac{2}{\eta \tau}\left( F(\mathbf{x}_0) - \mathbb{E}[F(\mathbf{x}_{T})] \right) \nonumber \\
    &\qquad + \frac{3\mathcal{L}^2}{\tau} \sum_{t=0}^{T-1} \sum_{j=0}^{\tau - 1} \left( 2\tau\eta^2 \sigma_x^2 + 12\tau^2\eta^2 \sigma_G^2 + 12\tau^2\eta^2 \mathbb{E}[ \| \nabla F(\mathbf{x}_t) \|^2] \right)  \nonumber \\
    &\qquad + \frac{\mathcal{L}^2}{\tau} \sum_{t=0}^{T-1} \sum_{j=0}^{\tau' - 1} \left( 2\eta^2 \tau' \sigma_z^2 + 8\tau'\eta^2 \left(\alpha (1-\alpha) + 2\tau' \right) \sigma_G^2 + 16 \eta^2 (\tau')^2 \mathbb{E}[ \| \nabla_z F(\mathbf{x}_t) \|^2] \right) \nonumber \\
    &\qquad + \frac{2\eta \mathcal{L}T}{a}\sigma_x^2 + \left( 4 + \frac{1}{a}\right) \frac{2\eta \tau' \mathcal{L}T}{\tau} \sigma_z^2 + \frac{4\eta \tau' \mathcal{L}T}{a\tau} (1 + \alpha -2\alpha^2) \sigma_G^2 \nonumber \\
    & \leq \frac{2}{\eta \tau}\left( F(\mathbf{x}_0) - \mathbb{E}[F(\mathbf{x}_{T})] \right) \nonumber \\
    &\qquad + 6\tau \eta^2\mathcal{L}^2 T \sigma_x^2 + 36\tau^2 \eta^2 \mathcal{L}^2 T \sigma_G^2 + 36 \tau^2 \eta^2 \mathcal{L}^2 \sum_{t=0}^{T-1} \mathbb{E}[\| \nabla F(\mathbf{x}_t) \|^2] \nonumber \\
    &\qquad + 2\tau' \eta^2 \mathcal{L}^2 T \sigma_z^2 +8\eta^2 (\alpha(1-\alpha) + 2\tau')\mathcal{L}^2 T \frac{( \tau')^2}{\tau} \sigma_G^2 + 16\eta^2 \mathcal{L}^2 \frac{(\tau')^3}{\tau} \sum_{t=0}^{T-1} \mathbb{E}[\| \nabla_z F(\mathbf{x}_t) \|^2] \nonumber \\
    &\qquad + \frac{2\eta \mathcal{L}T}{a}\sigma_x^2 + \left( 4 + \frac{1}{a}\right) \frac{2\eta \tau' \mathcal{L}T}{\tau} \sigma_z^2 + \frac{4\eta \tau' \mathcal{L}T}{a\tau} (1 + \alpha -2\alpha^2) \sigma_G^2. \nonumber
\end{align}
If $\eta \leq \min \{\frac{1}{12\mathcal{L}\tau}, \frac{1}{8\mathcal{L}\sqrt{\tau / (\tau')^3}} \}$, the right-hand side can be bounded as follows:
\begin{align}
    \sum_{t=0}^{T-1} \mathbb{E}\left[ \left\| \nabla F(\mathbf{x}_t) \right\|^2 \right] & \leq \frac{2}{\eta \tau}\left( F(\mathbf{x}_0) - \mathbb{E}[F(\mathbf{x}_{T})] \right) \nonumber \\
    &\qquad + 6\tau \eta^2\mathcal{L}^2 T \sigma_x^2 + \frac{1}{4}\sum_{t=0}^{T-1} \mathbb{E}[\| \nabla F(\mathbf{x}_t) \|^2] \nonumber \\
    &\qquad + 2\tau' \eta^2 \mathcal{L}^2 T \sigma_z^2 +8\eta^2 (\alpha(1-\alpha) + 2\tau')\mathcal{L}^2 T \frac{( \tau')^2}{\tau} \sigma_G^2 + \frac{1}{4} \sum_{t=0}^{T-1}\mathbb{E}[\| \nabla_z F(\mathbf{x}_t) \|^2] \nonumber \\
    &\qquad + \frac{2\eta \mathcal{L}T}{a}\sigma_x^2 + \left( 4 + \frac{1}{a}\right) \frac{2\eta \tau' \mathcal{L}T}{\tau} \sigma_z^2 + \frac{4\eta \tau' \mathcal{L}T}{a\tau} (1 + \alpha -2\alpha^2) \sigma_G^2. \nonumber
\end{align}
By the definition, $\| \nabla_z F(\mathbf{x}_t) \|^2 \leq \| \nabla F(\mathbf{x}_t) \|^2$, and we have:
\begin{align}
    \sum_{t=0}^{T-1} \mathbb{E}\left[ \left\| \nabla F(\mathbf{x}_t) \right\|^2 \right] & \leq \frac{2}{\eta \tau}\left( F(\mathbf{x}_0) - \mathbb{E}[F(\mathbf{x}_{T})] \right) \nonumber \\
    &\qquad + 6\tau \eta^2\mathcal{L}^2 T \sigma_x^2 + \frac{1}{2}\sum_{t=0}^{T-1} \mathbb{E}[\| \nabla F(\mathbf{x}_t) \|^2] \nonumber \\
    &\qquad + 2\tau' \eta^2 \mathcal{L}^2 T \sigma_z^2 +8\eta^2 (\alpha(1-\alpha) + 2\tau')\mathcal{L}^2 T \frac{( \tau')^2}{\tau} \sigma_G^2 \nonumber \\
    &\qquad + \frac{2\eta \mathcal{L}T}{a}\sigma_x^2 + \left( 4 + \frac{1}{a}\right) \frac{2\eta \tau' \mathcal{L}T}{\tau} \sigma_z^2 + \frac{4\eta \tau' \mathcal{L}T}{a\tau} (1 + \alpha -2\alpha^2) \sigma_G^2. \nonumber
\end{align}
Finally, by rearranging the terms, we have the result:
\begin{align}
    \sum_{t=0}^{T-1} \mathbb{E}\left[ \left\| \nabla F(\mathbf{x}_t) \right\|^2 \right] & \leq \frac{4}{\eta \tau}\left( F(\mathbf{x}_0) - \mathbb{E}[F(\mathbf{x}_{T})] \right) \nonumber \\
    &\qquad + 12\tau \eta^2\mathcal{L}^2 T \sigma_x^2 \nonumber \\
    &\qquad + 4\tau' \eta^2 \mathcal{L}^2 T \sigma_z^2 +16\eta^2 (\alpha(1-\alpha) + 2\tau')\mathcal{L}^2 T \frac{( \tau')^2}{\tau} \sigma_G^2 \nonumber \\
    &\qquad + \frac{4\eta \mathcal{L}T}{a}\sigma_x^2 + \left( 4 + \frac{1}{a}\right) \frac{4\eta \tau' \mathcal{L}T}{\tau} \sigma_z^2 + \frac{8\eta \tau' \mathcal{L}T}{a\tau} (1 + \alpha -2\alpha^2) \sigma_G^2 \nonumber \\
    &= \frac{4}{\eta \tau}\left( F(\mathbf{x}_0) - \mathbb{E}[F(\mathbf{x}_{T})] \right) \nonumber \\
    &\qquad + 4 \eta \mathcal{L}T \left(3\eta \mathcal{L} + \frac{1}{a} \right) \sigma_x^2 + 4\tau' \eta \mathcal{L}T \left( \eta \mathcal{L} + \frac{4}{\tau} + \frac{1}{a\tau} \right) \sigma_z^2 \nonumber \\
    &\qquad + \frac{8\eta \tau' \mathcal{L} T}{\tau} \left( 2\eta \mathcal{L} \tau' (\alpha(1-\alpha) + 2\tau') + \frac{1}{a} (1 + \alpha - 2\alpha^2) \right) \sigma_G^2. \nonumber 
\end{align}
\end{proof}


\clearpage
\subsection{Communication pattern}\label{appendix:communication-pattern}
\subsubsection{Discussion of gossip-style communication}\label{appendix:gossip-communication}

We implement latent-space information sharing through a gossip-style communication pattern~\cite{boyd2006randomized}.
Rather than relying on a centralized collector or broadcasting latent information to all clients, each active client receives latent-space information from one randomly selected active client at each communication round.
Specifically, at round $r$, each active client $i$ uniformly samples one source client $j$, where $j \neq i$, and receives the latent-space information $I_j^{(r)}$ generated by client $j$.
Here, the directed communication link $j \rightarrow i$ indicates that latent-space information flows from the sampled source client $j$ to the receiving client $i$.

This communication pattern is particularly suitable for federated learning scenarios with a large number of edge devices. In such environments, clients are naturally distributed and can transfer latent-space information through local client-level connections rather than persistent global connectivity.
Our method organizes the transfer of $I_i$ over randomly formed directed communication links while preserving the standard federated training procedure.
Thus, the proposed method should be understood as latent-space information sharing over a directed gossip-style communication graph within the standard federated training procedure.

This gossip-style communication pattern has two practical advantages over centralized communication.
First, it does not incur blocking.
In contrast, when activations are aggregated at a central server and redistributed to clients, blocking is inevitable and increases with the number of clients.
Therefore, our design offers better scalability than typical centralized approaches.
Second, it enables more robust communication in FL environments.
Even if a client is disconnected, gossip-style latent information sharing does not stall the entire system, resulting in a more robust training process.
Nevertheless, exploring alternative communication patterns for latent information sharing remains a promising direction for future work.
\clearpage
\subsection {Additional discussion on privacy} \label{appendix:privacy}
While this paper primarily focuses on the proposed activation sharing mechanism for accelerating federated learning, privacy leakage remains a critical concern for practical deployment.
In this appendix, we provide additional experimental results and analysis on the differential privacy aspects of ASFL.

The sensitivity of the input-side subnetwork $\mathbf{y}$, which governs the required noise magnitude, is defined as:
\begin{equation}
\triangle^{\mathbf{y}}_{p,q} \;=\; \max_{\xi \neq \xi'} \frac{\bigl\|\mathbf{y}(\xi) - \mathbf{y}(\xi')\bigr\|_q}{\|\xi - \xi'\|_p},\nonumber
\end{equation}
where $\xi$ and $\xi'$ are two input samples of the corresponding client, and $p$ and $q$ are the degrees of input and output norms.
In addition, $R$ defines the radius of the threat model such that any adversarial perturbation confined within $B_p(R):= \{\delta' \in \mathbb{R}^{n} : \|\delta'\|_p \leq R\}$ is 
guaranteed to be masked by the injected noise and provides $(\epsilon, \delta)$-DP guarantees. 

We consider two noise mechanisms for privatizing the shared activations $I_i$, each offering a distinct privacy-utility trade-off. 
\begin{itemize}
\item \textbf{Laplace mechanism:} The Laplace mechanism adds independent noise drawn from a Laplace distribution to the activation vector: $ I^{P}_{i} \;:=\; \mathbf{y}(\mathcal{B}_{i,t}) \;+\; \mathrm{Lap}\!\left(\frac{\triangle^{\mathbf{y}}_{p,q}L}{\epsilon}\right), \label{eq:laplace_mech} $ where $\epsilon > 0$ is the privacy budget, and $I^{P}_{i}$ denotes the privatized activation vector. The Laplace mechanism satisfies \emph{pure} $(\epsilon, 0)$-DP.
\item \textbf{Gaussian mechanism:} The Gaussian mechanism adds isotropic Gaussian noise to the activation vector: $ I^{P}_{i} \;:=\; \mathbf{y}(\mathcal{B}_{i,t}) \;+\; \mathcal{N}\!\left(0,\, \sigma^{2} \mathbf{I}\right), \quad \sigma \;=\; \frac{\triangle^{\mathbf{y}}_{p,q}L\sqrt{2\ln(1.25/\delta)}}{\epsilon}. \label{eq:gaussian_mech} $ The Gaussian mechanism satisfies $(\epsilon, \delta)$-DP, where $\delta$ represents a small probability of the privacy guarantee failing. 
\end{itemize}
In both cases, by the post-processing immunity, the privatized activations $I^{P}_{i}$ forwarded to the paired client yield a received representation that is $(\epsilon, \delta)$-DP (or $(\epsilon, 0)$-DP for Laplace) with respect to the raw local data of the sending client.

\textbf{Impact of PixelDP on privacy leakage in ASFL} --
The following lemma, established in~\cite{lecuyer2018certified}, formalizes the robustness and privacy guarantee provided by PixelDP.

\begin{lemma}[PixelDP Stability Bound~\cite{lecuyer2018certified}]
\label{lemma:pixeldp}
Suppose a randomized function $\mathcal{M}$ with bounded output
$\mathcal{M} \in [0, b]$, $b \in \mathbb{R}^{+}$, satisfies
$(\epsilon, \delta)$-DP.
Then the expected value of its output satisfies:
\begin{equation}
    \forall\, \delta' \in B_p(1)\colon\quad
    \mathbb{E}\bigl[\mathcal{M}(x)\bigr]
    \;\leq\;
    e^{\epsilon}\cdot\mathbb{E}\bigl[\mathcal{M}(x + \delta')\bigr]
    \;+\; b\delta,
    \label{eq:pixeldp_lemma}
\end{equation}
where 
the expectation is taken over the
randomness of $\mathcal{M}$.
\end{lemma}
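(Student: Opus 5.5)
This lemma is quoted from Lecuyer et al.\ (2018). I would prove it directly from the definition of $(\epsilon,\delta)$-DP, applied to the pair of inputs $x$ and $x+\delta'$.

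\textbf{Adjacency.} In the PixelDP setting, a mechanism is $(\epsilon,\delta)$-DP with respect to $\ell_p$ perturbations of radius $1$. So any two inputs $x$ and $x+\delta'$ with $\delta' \in B_p(1)$ are adjacent, and for every measurable output set $S$,
\begin{equation*}
\Pr[\mathcal{M}(x)\in S] \le e^{\epsilon}\Pr[\mathcal{M}(x+\delta')\in S] + \delta .
\end{equation*}

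\textbf{Step 1: write the expectation as an integral of tail probabilities.} Since $\mathcal{M}$ takes values in $[0,b]$ and is nonnegative,
\begin{equation*}
\mathbb{E}[\mathcal{M}(x)] = \int_0^b \Pr[\mathcal{M}(x) > u]\,du .
\end{equation*}

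\textbf{Step 2: apply the DP inequality inside the integral.} For each $u$, apply the DP inequality to the event $S_u=\{\mathcal{M} > u\}$. This gives the pointwise bound
\begin{equation*}
\Pr[\mathcal{M}(x) > u] \le e^{\epsilon}\Pr[\mathcal{M}(x+\delta') > u] + \delta .
\end{equation*}

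\textbf{Step 3: integrate over $[0,b]$.} Integrating the pointwise bound gives
\begin{equation*}
\mathbb{E}[\mathcal{M}(x)] \le e^{\epsilon}\int_0^b \Pr[\mathcal{M}(x+\delta')>u]\,du + b\delta = e^{\epsilon}\,\mathbb{E}[\mathcal{M}(x+\delta')] + b\delta ,
\end{equation*}
which is exactly the claimed bound. All expectations are over the internal randomness of $\mathcal{M}$.

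\textbf{Main obstacle.} The only delicate point is making precise what $(\epsilon,\delta)$-DP means for a single input. It must mean DP with respect to the metric $\|\cdot\|_p$ at radius $1$. If the mechanism is instead calibrated for radius $R$, one would rescale the perturbation set accordingly.

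Measurability of the events $S_u$ is routine because $\mathcal{M}$ is real-valued. The bounded range $[0,b]$ is used to keep the additive term finite: it contributes at most $b\delta$ rather than an unbounded amount.
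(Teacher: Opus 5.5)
Your proof is correct and matches the standard argument. The paper gives no proof of its own and imports the lemma from Lecuyer et al., whose proof is the same layer-cake identity $\mathbb{E}[\mathcal{M}(x)]=\int_0^b \Pr[\mathcal{M}(x)>u]\,du$, followed by applying the DP inequality to each superlevel set and integrating over $[0,b]$. Your reading of ``$(\epsilon,\delta)$-DP'' as DP with respect to $\ell_p$-perturbations of radius $1$ is the right clarification, since that is what justifies the quantifier over $\delta' \in B_p(1)$.
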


Lemma~\ref{lemma:pixeldp} implies that the expected scoring function of the model is stable under bounded input perturbations, providing a certified robustness guarantee against adversarial examples in addition to privacy
protection. Together, these properties make PixelDP a principled choice for protecting the activation sharing step in ASFL.

\subsubsection{Impact of activation sharing layer on performance}
\label{subsec:noise_layer_impact}
Beyond the choice of noise mechanism and budget, the selection of the network layer at which activations are extracted and shared has a significant impact on the utility of ASFL.
To investigate this, we vary the partition point across layers of the network and evaluate three configurations:
(i)~ASFL (no noise),
(ii)~ASFL+Laplace ($\epsilon = 0.2$, highest privacy), and
(iii)~ASFL+Gaussian ($\epsilon = 0.5$, $\delta = 0.001$).
The results across all datasets are reported in
Tables~\ref{tab:noise_layer_cifar10}--\ref{tab:noise_layer_femnist}. For each configuration, we additionally report the {Laplace Drop} and 
{Gaussian Drop}, defined as the absolute accuracy degradation incurred 
by adding Laplace or Gaussian noise relative to the noise-free ASFL baseline, 
respectively.

As the partition layer is moved
deeper into the network, the performance of ASFL generally degrades. For example, on CIFAR-10, the accuracy of ASFL decreases from $79.6\%$ at the earlier partition point to $66.0\%$ at a deeper layer. This behavior can be attributed to two complementary factors. First, deeper layers produce more task-specific, lower-dimensional representations that carry less transferable information across heterogeneous clients.
Second, the activations of deeper layers lie in a lower-dimensional manifold, reducing the diversity benefit of activation sharing from a paired client.
In contrast, earlier layers produce richer 
representations that are more amenable to cross-client transfer, thereby better alleviating data heterogeneity.

Moreover, the performance drop induced by noise injection is reported in Tables~\ref{tab:noise_layer_cifar10}--\ref{tab:noise_layer_femnist}. Both the Laplace Drop and Gaussian Drop 
decrease as the partition point moves deeper into the network. 
On CIFAR-10, the Laplace Drop reduces from $11.47\%$ at the earliest layer 
to $0.66\%$ at the deepest layer, and the Gaussian Drop similarly narrows 
from $12.47\%$ to $0.73\%$. A consistent pattern is observed on CIFAR-100, 
where the Gaussian Drop decreases from $5.04\%$ at the input layer 
to $0.89\%$ at the final layer. This behavior can be explained by the reduction of the progressive compression of 
representations across network depth as representations become more compressed and task-specific.
The injected noise has a proportionally smaller disruptive effect on the downstream task.
In other words, activations in deeper layers exhibit reduced sensitivity to privacy perturbations. \textit{This finding suggests a practical trade-off that earlier layers offer higher utility for activation 
sharing but require more careful noise calibration to preserve privacy, while deeper layers are inherently more noise-tolerant but yield diminishing collaboration benefits.}

\begin{table*}[ht!]
\caption{Impact of activation sharing layer on performance of CIFAR-10 with Laplace ($\epsilon=0.2$) and Gaussian ($\delta=0.001, \epsilon=0.5$). Laplace Drop and Gaussian Drop denote the accuracy reduction (\%) of ASFL + Laplace and ASFL + Gaussian relative to the ASFL baseline, respectively.}
\label{tab:noise_layer_cifar10}
\begin{center}
\resizebox{\textwidth}{!}{\begin{tabular}{ccccccccccc}  
    \toprule   
        & $l=0$ & $l=1$ & $l=2$ & $l=3$ & $l=4$ & $l=5$ & $l=6$ & $l=7$ & $l=8$ & $l=9$   \\
    \midrule
    ASFL & $79.6 \pm \text{\scriptsize 0.71}$ & $76.74 \pm \text{\scriptsize 0.47}$ & $76.87 \pm \text{\scriptsize 0.38}$ &  $77.38 \pm \text{\scriptsize 0.26}$ & $75.24 \pm \text{\scriptsize 0.03}$ & $73.85 \pm \text{\scriptsize 0.31}$ & $70.14 \pm \text{\scriptsize 0.52}$ &  $69.38 \pm \text{\scriptsize 0.44}$ & $65.68 \pm \text{\scriptsize 0.21}$ &  $66.00 \pm \text{\scriptsize 0.58}$ \\
    ASFL + Laplace & $67.59 \pm \text{\scriptsize 0.95}$ & $69.77 \pm \text{\scriptsize 1.32}$ & $67.05 \pm \text{\scriptsize 0.87}$ &  $70.53 \pm \text{\scriptsize 0.95}$ & $69.83 \pm \text{\scriptsize 0.97}$ & $70.86 \pm \text{\scriptsize 0.40}$ & $67.93 \pm \text{\scriptsize 0.38}$ &  $66.81 \pm \text{\scriptsize 0.95}$ & $65.09 \pm \text{\scriptsize 0.37}$ &  $65.34 \pm \text{\scriptsize 0.95}$ \\
    ASFL + Gaussian & $66.59 \pm \text{\scriptsize 1.20}$ & $68.54 \pm \text{\scriptsize 0.18}$ & $68.91 \pm \text{\scriptsize 1.31}$ &  $69.63 \pm \text{\scriptsize 0.74}$ & $68.55 \pm \text{\scriptsize 0.70}$ & $70.16 \pm \text{\scriptsize 0.79}$ & $68.10 \pm \text{\scriptsize 0.46}$ &  $67.60 \pm \text{\scriptsize 0.48}$ & $65.48 \pm \text{\scriptsize 0.29}$ &  $65.27 \pm \text{\scriptsize 0.94}$ \\
    Laplace Drop & $11.47\downarrow$ & $6.97\downarrow$ & $9.82\downarrow$ &  $6.85\downarrow$ & $5.41\downarrow$ & $2.99\downarrow$ & $2.21\downarrow$ &  $2.57\downarrow$ & $0.59\downarrow$ &  $0.66\downarrow$ \\
    Gaussian Drop & $12.47\downarrow$ & $8.20\downarrow$ & $7.96\downarrow$ &  $7.75\downarrow$ & $6.69\downarrow$ & $3.69\downarrow$ & $2.04\downarrow$ &  $1.78\downarrow$ & $0.20\downarrow$ &  $0.73\downarrow$ \\
    \bottomrule
\end{tabular}}
\end{center}
\end{table*}

\begin{table*}[ht!]
\caption{Impact of activation sharing layer on performance of CIFAR-100 with Laplace ($\epsilon=0.2$) and Gaussian ($\delta=0.001, \epsilon=0.5$). Laplace Drop and Gaussian Drop denote the accuracy reduction (\%) of ASFL + Laplace and ASFL + Gaussian relative to the ASFL baseline, respectively.}
\label{tab:noise_layer_cifar100}
\begin{center}
\resizebox{\textwidth}{!}{\begin{tabular}{ccccccccccc}  
    \toprule   
        & $l=0$ & $l=1$ & $l=2$ & $l=3$ & $l=4$ & $l=5$ & $l=6$ & $l=7$ & $l=8$ & $l=9$   \\
    \midrule
    ASFL & $50.88 \pm \text{\scriptsize 0.78}$ & $49.91 \pm \text{\scriptsize 0.40}$ & $49.12 \pm \text{\scriptsize 0.73}$ &  $47.81 \pm \text{\scriptsize 0.24}$ & $40.95 \pm \text{\scriptsize 0.95}$ & $31.82 \pm \text{\scriptsize 1.13}$ & $32.44 \pm \text{\scriptsize 2.18}$ & $39.21 \pm \text{\scriptsize 0.85}$ & $44.08 \pm \text{\scriptsize 1.22}$ &  $47.23 \pm \text{\scriptsize 0.58}$ \\
    ASFL + Laplace & $47.16 \pm \text{\scriptsize 0.59}$ & $46.20 \pm \text{\scriptsize 0.54}$ & $45.11 \pm \text{\scriptsize 1.02}$ &  $43.91 \pm \text{\scriptsize 0.92}$ & $37.48 \pm \text{\scriptsize 0.91}$ & $29.92 \pm \text{\scriptsize 0.57}$ & $28.60 \pm \text{\scriptsize 1.45}$ & $38.06 \pm \text{\scriptsize 0.83}$ & $41.50 \pm \text{\scriptsize 0.80}$ &  $44.80 \pm \text{\scriptsize 0.48}$ \\
    ASFL + Gaussian & $45.84 \pm \text{\scriptsize 1.00}$ & $46.40 \pm \text{\scriptsize 0.24}$ & $45.19 \pm \text{\scriptsize 0.05}$ &  $43.48 \pm \text{\scriptsize 0.94}$ & $39.61 \pm \text{\scriptsize 2.41}$ & $30.90 \pm \text{\scriptsize 0.57}$ & $29.75 \pm \text{\scriptsize 0.75}$ & $38.03 \pm \text{\scriptsize 0.05}$ & $41.17 \pm \text{\scriptsize 0.48}$ & $46.34 \pm \text{\scriptsize 0.17}$ \\
    Laplace Drop & $3.72\downarrow$ & $3.71\downarrow$ & $4.01\downarrow$ &  $3.9\downarrow$ & $3.47\downarrow$ & $1.90\downarrow$ & $3.84\downarrow$ &  $1.15\downarrow$ & $2.58\downarrow$ &  $2.43\downarrow$ \\
    Gaussian Drop & $5.04\downarrow$ & $3.71\downarrow$ & $3.93\downarrow$ &  $4.33\downarrow$ & $1.34\downarrow$ & $0.92\downarrow$ & $2.69\downarrow$ &  $1.18\downarrow$ & $2.91\downarrow$ &  $0.89\downarrow$ \\
    \bottomrule
\end{tabular}}
\end{center}
\end{table*}

\begin{table*}[ht!]
\caption{Impact of activation sharing layer on performance of SVHN with Laplace ($\epsilon=0.2$) and Gaussian ($\delta=0.001, \epsilon=0.5$). Laplace Drop and Gaussian Drop denote the accuracy reduction (\%) of ASFL + Laplace and ASFL + Gaussian relative to the ASFL baseline, respectively.}
\label{tab:noise_layer_svhn}
\begin{center}
\resizebox{\textwidth}{!}{\begin{tabular}{cccccccccccc}  
    \toprule   
        & $l=0$ & $l=1$ & $l=2$ & $l=3$ & $l=4$ & $l=5$ & $l=6$ & $l=7$ & $l=8$ & $l=9$ & $l=10$  \\
    \midrule
    ASFL & $89.20 \pm \text{\scriptsize 0.16}$ & $89.37 \pm \text{\scriptsize 0.10}$ & $89.11 \pm \text{\scriptsize 0.03}$ &  $89.40 \pm \text{\scriptsize 0.18}$ & $89.34 \pm \text{\scriptsize 0.11}$ & $89.12 \pm \text{\scriptsize 0.10}$ & $88.62 \pm \text{\scriptsize 0.28}$ & $89.43 \pm \text{\scriptsize 0.55}$ & $88.39 \pm \text{\scriptsize 0.15}$ &  $86.61 \pm \text{\scriptsize 0.41}$ &  $86.02 \pm \text{\scriptsize 0.59}$ \\
    ASFL + Laplace & $19.66 \pm \text{\scriptsize 0.09}$ & $51.61 \pm \text{\scriptsize 24.99}$ & $19.61 \pm \text{\scriptsize 0.02}$ &  $41.58 \pm \text{\scriptsize 31.07}$ & $64.19 \pm \text{\scriptsize 31.53}$ & $87.07 \pm \text{\scriptsize 0.41}$ & $87.56 \pm \text{\scriptsize 0.40}$ & $87.71 \pm \text{\scriptsize 0.88}$ & $86.76 \pm \text{\scriptsize 0.65}$ &  $85.97 \pm \text{\scriptsize 0.30}$ &  $85.31 \pm \text{\scriptsize 0.90}$ \\
    ASFL + Gaussian & $22.73 \pm \text{\scriptsize 4.44}$ & $32.99 \pm \text{\scriptsize 18.77}$ & $19.60 \pm \text{\scriptsize 0.01}$ &  $54.13 \pm \text{\scriptsize 25.37}$ & $64.76 \pm \text{\scriptsize 31.94}$ & $85.95 \pm \text{\scriptsize 0.81}$ & $85.82 \pm \text{\scriptsize 0.92}$ & $86.67 \pm \text{\scriptsize 0.90}$ & $88.00 \pm \text{\scriptsize 0.25}$ &  $85.73 \pm \text{\scriptsize 0.73}$ &  $85.36 \pm \text{\scriptsize 1.23}$ \\
    Laplace Drop & $69.54\downarrow$ & $37.76\downarrow$ & $69.50\downarrow$ &  $47.82\downarrow$ & $25.15\downarrow$ & $2.05\downarrow$ & $1.06\downarrow$ &  $1.72\downarrow$ & $1.63\downarrow$ & $0.64\downarrow$ & $0.71\downarrow$ \\
    Gaussian Drop & $66.47\downarrow$ & $56.38\downarrow$ & $69.51\downarrow$ &  $35.27\downarrow$ & $25.48\downarrow$ & $3.17\downarrow$ & $2.80\downarrow$ &  $2.76\downarrow$ & $0.39\downarrow$ & $0.88\downarrow$ & $0.66\downarrow$ \\
    \bottomrule
\end{tabular}}
\end{center}
\end{table*}

\begin{table}[ht!]
\caption{Impact of activation sharing layer on performance of FEMNIST with Laplace ($\epsilon=0.2$) and Gaussian ($\delta=0.001, \epsilon=0.5$). Laplace Drop and Gaussian Drop denote the accuracy reduction (\%) of ASFL + Laplace and ASFL + Gaussian relative to the ASFL baseline, respectively.}
\label{tab:noise_layer_femnist}
\begin{center}
\scriptsize
\resizebox{0.8\textwidth}{!}{\begin{tabular}{cccccc}  
    \toprule   
        & $l=0$ & $l=1$ & $l=2$ & $l=3$ & $l=4$  \\
    \midrule
    ASFL & $86.38 \pm \text{\scriptsize 0.17}$ & $86.27 \pm \text{\scriptsize 0.57}$ & $86.01 \pm \text{\scriptsize 0.20}$ &  $85.56 \pm \text{\scriptsize 0.35}$ & $85.76 \pm \text{\scriptsize 0.24}$  \\
    ASFL + Laplace & $83.66 \pm \text{\scriptsize 0.14}$ & $83.25 \pm \text{\scriptsize 0.51}$ & $84.77 \pm \text{\scriptsize 0.31}$ &  $85.03 \pm \text{\scriptsize 0.08}$ & $85.18 \pm \text{\scriptsize 0.25}$  \\
    ASFL + Gaussian & $83.66 \pm \text{\scriptsize 0.19}$ & $83.55 \pm \text{\scriptsize 0.40}$ & $84.96 \pm \text{\scriptsize 0.08}$ &  $85.05 \pm \text{\scriptsize 0.24}$ & $85.18 \pm \text{\scriptsize 0.16}$  \\
    Laplace Drop & $2.72\downarrow$ & $3.02\downarrow$ & $1.24\downarrow$ &  $0.53\downarrow$ & $0.58\downarrow$  \\
    Gaussian Drop & $2.72\downarrow$ & $2.72\downarrow$ & $1.05\downarrow$ &  $0.51\downarrow$ & $0.58\downarrow$  \\
    \bottomrule
\end{tabular}}
\end{center}
\end{table}
\subsubsection{Model inversion attack on shared activations}
\label{subsec:model-inversion-attack}

We examine the privacy behavior of ASFL under an optimization-based model inversion attack to assess whether Gaussian perturbation reduces visually recoverable information in shared latent representations.
An adversary observing these representations may attempt to reconstruct the corresponding private inputs.

Model inversion attacks in the image domain are commonly categorized into optimization-based and generative approaches.
Optimization-based methods directly optimize an input image so that its model response matches either a target intermediate representation~\cite{mahendran2015understanding} or a target model output, such as a class confidence score~\cite{fredrikson2015model}.
In contrast, generative methods introduce a learned image prior and perform inversion in the latent space of a generator, thereby constraining the reconstruction toward the natural image manifold~\cite{zhang2020secret,peng2025generative}.

In this experiment, we consider an optimization-based model inversion attack that reconstructs inputs by matching shared intermediate features. Given an observed shared activation $\tilde{I}$, the attacker reconstructs an input image by solving:
\begin{align}
    x^* =
    \arg\min_{x'} \left\| \mathbf{y}(x') - \tilde{I} \right\|_2^2 + \lambda R_{\mathrm{TV}}(x') \nonumber
\end{align}
where $\mathbf{y}(x')$ denotes the intermediate representation at the shared activation layer, $R_{\mathrm{TV}}(\cdot)$ is the total variation regularizer that suppresses high-frequency artifacts, and $\lambda$ controls its strength.

We evaluate the attack with Gaussian perturbation using $(\epsilon,\delta)=(0.5,0.05)$. Figure~\ref{fig:inversion-attack} shows the shared activations used for inversion and the corresponding reconstructed images.

\begin{figure}[ht!]
    \centering
    \includegraphics[width=1.0\columnwidth]{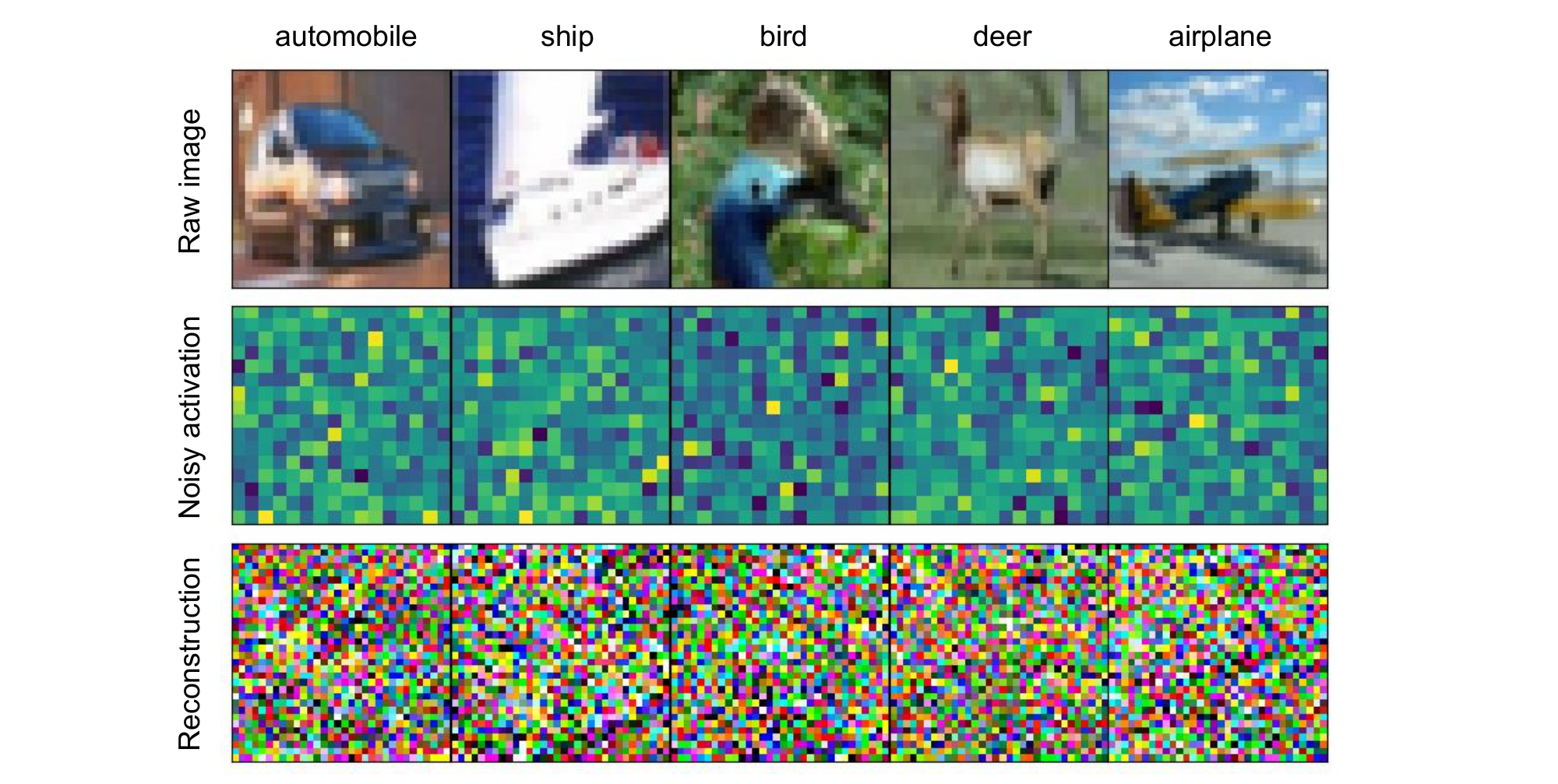}
    \caption{Optimization-based feature inversion attack on shared activations under Gaussian perturbation. The top row shows the original private input images for reference, the middle row visualizes the corresponding perturbed shared activations used for inversion, and the bottom row shows the reconstructed images.}
    \label{fig:inversion-attack}
\end{figure}

As shown in Figure~\ref{fig:inversion-attack}, the reconstructed images do not visually reveal the corresponding private inputs under this attack setting.
\clearpage
\subsection{Additional experimental results}
\label{appendix:supplementary}

\subsubsection{Ablation on activation pool construction}
\label{appendix:activation-pool-ablation}

We provide an additional ablation study on the construction of the augmented activation pool used in communication-efficient activation sharing.
As described in Section~\ref{sec:comm-efficient-sharing}, ASFL exchanges only a small subset of remote activations and reuses them within each communication round to reduce communication overhead.
Since the number of received remote activations can be smaller than the desired number of remote samples in the augmented pool, the construction strategy influences how effectively the model incorporates newly received remote information without being dominated by local activations.

We compare three strategies for handling the received remote activations.
The first strategy, denoted as \textit{Random}, samples remote activations uniformly at random with replacement until the desired number of remote samples is reached.
This is the strategy adopted in ASFL.
The second strategy, denoted as \textit{Replication}, repeats the received remote activation pool until it reaches the required size and then uses the leading portion.
The third strategy, denoted as \textit{Over-sampling}, directly uses the received remote activation pool without additional resampling, even when the number of remote activations is smaller than the target size.

\begin{table}[h]
\caption{Ablation study on activation pool construction. We report validation accuracy (\%) for three strategies for handling received remote activations: Random, Replication, and Over-sampling.}
\label{tab:activation_pool_ablation}
\begin{center}
\scriptsize
\begin{tabular}{lcc}
    \toprule
    Strategy & CIFAR-10 (ResNet18) & FEMNIST (CNN) \\
    \midrule
    Random  & $\mathbf{75.25} \pm 0.32$ & $\mathbf{84.49} \pm 0.59$ \\
    Replication   & $74.34 \pm 0.76$ & $84.11 \pm 0.38$ \\
    Over-sampling & $71.95 \pm 0.83$ & $84.39 \pm 0.30$ \\
    \bottomrule
\end{tabular}
\end{center}
\vspace{-0.3cm}
\end{table}

Table~\ref{tab:activation_pool_ablation} reports the results on CIFAR-10 with ResNet18 and FEMNIST with a CNN.
On CIFAR-10, Random achieves the best accuracy, improving over Replication and Default by 0.91 and 3.30 percentage points, respectively.
This result indicates that randomly reusing remote activations helps preserve the intended balance between local and remote latent information when communication-efficient sharing substantially reduces the number of exchanged activations.
On FEMNIST, the gap among the three strategies is relatively small, with Random still achieving the highest average accuracy.
Overall, these results suggest that Random is a practical and robust choice for constructing the augmented activation pool.

\subsubsection {Effect of the number of exchanged activations}\label{appendix:activation}
Table~\ref{tab:batches} presents ablation results with varying numbers of shared batches across clients.
The accuracy generally improves as more batches are exchanged, although communication overhead also increases accordingly.
However, the gains are not always consistent, possibly due to randomness in activation sampling.
If the remote data used to compute activations are similar to the local data, activation sharing may fail to reduce data heterogeneity while still incurring additional communication cost, and less informative shared activations may even degrade performance.
These results suggest that exchanging one or two high-loss batches is sufficient to mitigate data heterogeneity with only marginal communication overhead.

\begin{table}[ht]
\caption{Ablation study on activation sharing quantity (the number of shared activation batches).}
\label{tab:batches}
\begin{center}
\setlength{\tabcolsep}{2pt}
\scriptsize
\begin{tabular}{lcccccccccc}
\toprule
\textbf{Shared}
    & \multicolumn{2}{c}{\textbf{CIFAR-10}}
    & \multicolumn{2}{c}{\textbf{CIFAR-100}}
    & \multicolumn{2}{c}{\textbf{SVHN}}
    & \multicolumn{2}{c}{\textbf{FEMNIST}} \\
\cmidrule(lr){2-3}\cmidrule(lr){4-5}\cmidrule(lr){6-7}\cmidrule(lr){8-9}
\textbf{Batches} & \textbf{Acc (\%)} \accup & \textbf{Comm (MB)} \commdown
& \textbf{Acc (\%)} \accup & \textbf{Comm (MB)} \commdown
& \textbf{Acc (\%)} \accup & \textbf{Comm (MB)} \commdown
& \textbf{Acc (\%)} \accup & \textbf{Comm (MB)} \commdown \\
\midrule
$8\times$
    & 74.15 $\pm$ 1.17 & 64
    & 53.27 $\pm$ 0.86 & 32
    & 89.06 $\pm$ 0.27 & 8
    & 84.82 $\pm$ 0.28 & 1.92 \\
$4\times$
    & 76.01 $\pm$ 0.50 & 32
    & 52.34 $\pm$ 1.12 & 16
    & 88.99 $\pm$ 0.49 & 4
    & 84.59 $\pm$ 0.21 & 0.96 \\
$2\times$
    & 75.81 $\pm$ 0.70 & 16
    & 51.18 $\pm$ 0.69 & 8
    & 89.26 $\pm$ 0.65 & 2
    & 84.61 $\pm$ 0.07 & 0.48 \\
$1\times$
    & 75.80 $\pm$ 0.55 & 8
    & 50.24 $\pm$ 0.54 & 4
    & 89.29 $\pm$ 0.31 & 1
    & 84.62 $\pm$ 0.28 & 0.24 \\
\bottomrule
\end{tabular}
\end{center}
\vspace{-0.4cm}
\end{table}

\subsubsection{Effect of the Dirichlet Concentration Parameter}

We conduct additional experiments on CIFAR-10 to evaluate ASFL under different degrees of data heterogeneity.
Specifically, we use Dirichlet concentration parameters $\alpha = 0.05$ and $\alpha = 0.5$, corresponding to stronger and weaker non-IID conditions, respectively, relative to the original setting of $\alpha = 0.1$.

\begin{table}[h]
\caption{Validation accuracy (\%) on CIFAR-10 with different
Dirichlet concentration parameters. Smaller $\alpha$ corresponds
to stronger data heterogeneity.}
\label{tab:dirichlet}
\begin{center}
\scriptsize
\begin{tabular}{lcc}
    \toprule
    Method & $\alpha = 0.05$ & $\alpha = 0.5$ \\
    \midrule
    FedAvg & $58.93 \pm 0.3$ & $85.77 \pm 0.3$ \\
    FedProx & $56.16 \pm 0.1$ & $86.20 \pm 0.1$ \\
    FedProto & $59.44 \pm 1.0$ & $86.86 \pm 0.2$ \\
    MOON & $58.69 \pm 1.1$ & $86.58 \pm 0.1$ \\
    SCAFFOLD & $55.37 \pm 1.8$ & $87.59 \pm 0.1$ \\
    FedPVR & $58.45 \pm 0.9$ & $86.27 \pm 0.1$ \\
    FedVARP & $61.14 \pm 0.4$ & $85.93 \pm 0.1$ \\
    FedImpro & $50.88 \pm 0.7$ & $83.11 \pm 0.5$ \\
    SplitFed & $58.21 \pm 0.5$ & $85.75 \pm 0.2$ \\
    \rowcolor{gray!30} \textbf{ASFL (Ours)} & $\mathbf{67.14} \pm 0.5$ & $\mathbf{87.60} \pm 0.1$ \\
    \bottomrule
\end{tabular}
\end{center}
\vspace{-0.3cm}
\end{table}

As shown in Table~\ref{tab:dirichlet}, ASFL still achieves strong validation accuracy under more severe non-IID conditions ($\alpha = 0.05$).
It achieves $67.14\%$ accuracy, exceeding FedAvg and the strongest baseline, FedVARP, by $8.21$ and $6.00$ percentage points, respectively.
These results are consistent with our original findings and support the effectiveness of ASFL under strong data heterogeneity.

Under weaker non-IID conditions ($\alpha = 0.5$), all methods achieve higher validation accuracy, and the performance gaps across methods are substantially smaller, leaving limited additional benefit from ASFL in this setting.
Nevertheless, ASFL retains the highest validation accuracy at $87.60\%$.
This pattern is consistent with the motivation of ASFL's activation-sharing mechanism, which is designed to mitigate client drift caused by data heterogeneity: its benefit is more pronounced when heterogeneity is severe and diminishes as client data distributions become more similar.

\subsubsection{Isolated performance gain from activation sharing}
\label{appendix:activation-sharing-ablation}

Attributing ASFL's accuracy gains to activation sharing requires separating its contribution from that of partial training.
To make this distinction, we evaluate a variant on CIFAR-10 that retains ASFL's partial training procedure, including the same 16 tail update steps in each tail round, but replaces the received remote activations with the client's own local activations.

\begin{table}[ht]
\caption{Ablation study on activation sharing on CIFAR-10. Both ASFL variants use the same partial training procedure with 16 tail update steps per tail round.}
\label{tab:activation_sharing_ablation}
\begin{center}
\scriptsize
\begin{tabular}{lc}
    \toprule
    Method & Validation accuracy (\%) \\
    \midrule
    FedAvg & $67.70 \pm 0.33$ \\
    ASFL (without remote activations) & $66.23 \pm 0.41$ \\
    ASFL (with remote activations) & $75.27 \pm 0.88$ \\
    \bottomrule
\end{tabular}
\end{center}
\vspace{-0.3cm}
\end{table}

Replacing remote activations with local ones reduces validation accuracy from $75.27\%$ to $66.23\%$, a drop of $9.04$ percentage points despite using the same partial training procedure and tail update budget (Table~\ref{tab:activation_sharing_ablation}).
The local-only variant also falls below FedAvg ($67.70\%$) by $1.47$ percentage points.
Thus, partial training alone does not explain ASFL's improvement over FedAvg in this setting.
the gain depends on incorporating remote activations.

\end{document}